\newtheorem{thm}{Theorem}[section]
\newtheorem{lemma}[thm]{Lemma}
\newtheorem{prop}[thm]{Proposition}
\newtheorem{cor}[thm]{Corollary}
\newtheorem{defn}[thm]{Definition}
\newtheorem{Example}{Example}
\DeclareMathOperator{\an}{an}
\DeclareMathOperator{\de}{de}
\DeclareMathOperator{\ch}{ch}
\DeclareMathOperator{\pa}{pa}
    \newcommand{\up}[1]{^{(#1)}}
\newcommand{\cD}{\mathcal{D}}
\newcommand{\cG}{\mathcal{G}}
\newcommand{\cM}{\mathcal{M}}
\newcommand{\cN}{\mathcal{N}}
\newcommand{\cP}{\mathcal{P}}
\newcommand{\ci}{\mathrel{\perp\mspace{-10mu}\perp}}
\newcommand{\nci}{\centernot{\ci}}
\newcommand{\incomp}{\not\lessgtr}
\let\and\relax
\newcommand{\and}{~\textrm{and}~}
\newcommand{\D}{\mathcal{D}}
\newcommand{\M}{\mathcal{M}}
\def\leftarrowcirc{\hbox{$\leftarrow$}\kern-1.5pt\hbox{$\circ$}}
\def\rightarrowcirc{\hbox{$\circ$}\kern-1.5pt\hbox{$\rightarrow$}}
\def\circtailcirc{\hbox{$\circ$}\kern-1.5pt\hbox{$-$}\kern-1.5pt\hbox{$\circ$}}
\def\startailstar{\hbox{$*$}\kern-1.5pt\hbox{$-$}\kern-1.5pt\hbox{$*$}}
\def\startail{\hbox{$*$}\kern-1.5pt\hbox{$-$}}
\def\tailcirc{\hbox{$-$}\kern-1.5pt\hbox{$\circ$}}
\def\circtail{\hbox{$\circ$}\kern-1.5pt\hbox{$-$}}
\def\starrightarrow{\hbox{$*$}\kern-1.5pt\hbox{$\rightarrow$}}
\def\leftarrowstar{\hbox{$\leftarrow$}\kern-1.5pt\hbox{$*$}}
\icmltitlerunning{Causal Structure Discovery from Distributions Arising from Mixtures of DAGs}
\begin{document}

\twocolumn[
\icmltitle{Causal Structure Discovery 
\\from  Distributions Arising from
           Mixtures of DAGs}

%% If your paper is accepted and the title of your paper is very long,
%% the style will print as headings an error message. Use the following
%% command to supply a shorter title of your paper so that it can be
%% used as headings.
%%
%\runningtitle{I use this title instead because the last one was very long}

%% If your paper is accepted and the number of authors is large, the
%% style will print as headings an error message. Use the following
%% command to supply a shorter version of the authors names so that
%% they can be used as headings (for example, use only the surnames)
%%
%\runningauthor{Surname 1, Surname 2, Surname 3, ...., Surname n}

%%%%%%%%%%%%%%%%%%%%%%
\icmlsetsymbol{equal}{*}

\begin{icmlauthorlist}
\icmlauthor{Basil Saeed}{mit}
\icmlauthor{Snigdha Panigrahi}{umich}
\icmlauthor{Caroline Uhler}{mit,eth}
\end{icmlauthorlist}

\icmlaffiliation{mit}{Laboratory for Information and Decision Systems and Institute for Data, Systems and Society, Massachusetts Institute of Technology, Cambridge, MA, USA}
\icmlaffiliation{umich}{Department of Statistics, University of Michigan, Ann Arbor, MI, USA}
\icmlaffiliation{eth}{Department of Biosystems Science and Engineering,  ETH Zurich, Switzerland}

\icmlcorrespondingauthor{Caroline Uhler}{cuhler@mit.edu}

%TODO add keywords
%% You may provide any keywords that you
%% find helpful for describing your paper; these are used to populate
%% the "keywords" metadata in the PDF but will not be shown in the document
%\icmlkeywords{}

%%%%%%%%%%%%%%%%%%%%%%%%%%%%%%%%%%%%%%%% }}}}
\vskip 0.3in
]

%TODO ICML
\printAffiliationsAndNotice{}

\begin{abstract}
We consider distributions arising from a mixture of causal models, where each model is represented by a directed acyclic graph (DAG). 
We provide a graphical representation of such mixture distributions and prove that 
this representation encodes the conditional independence relations of the mixture distribution. 
We then consider the problem of structure learning based on samples from such distributions. Since the mixing variable is latent, we consider causal structure discovery algorithms such as FCI that can deal with latent variables. We show that such algorithms recover a ``union'' of the component DAGs and can identify variables whose conditional distribution across the component DAGs vary. We demonstrate our results on synthetic and real data showing that the inferred graph identifies nodes that vary between the different mixture components.
As an immediate application,
we demonstrate how retrieval of this causal information can be used to cluster samples according to each mixture component. 
\end{abstract}

\section{INTRODUCTION}

\label{intro}
%\textcolor{red}{I reserved this color for your comments,}
%\textcolor{green}{this color for my comments to you,}
%\textcolor{blue}{and this color for the changes I made.}

Determining causal structure from data is a central task in many applications. 
 \cite{friedman2000using,heckerman1995real}
Causal structure is often modeled using a \emph{directed acyclic graph} (\textrm{DAG}), where the nodes represent the variables of interest, and the directed edges represent the direct causal effects between these variables~\cite{pearl2009causality}. Assuming that the generating distribution of the data factors according to the DAG provides a way to relate the conditional independence relations in the distribution to separation statements in the DAG (known as \emph{d-separation}) through the \emph{Markov property}~\cite{lauritzen1996graphical}. When not all variables of interest can be measured, DAGs are not sufficient to represent the observed distribution, since latent variables may introduce confounding effects between the observed variables. %Often, in many settings, the variables of interest that are measurable are a strict subset of the variables in the environment that causally interact, that is, there are latent variables in the environment that are not observable. This may lead to latent confounding effects on the observed variables. Under these settings, 
Instead, a family of mixed graphs known as \emph{maximal ancestral graphs} (\textrm{MAGs}) can be used to model the observed variables by depicting the presence of latent confounders between pairs of variables through bidirected edges~\cite{richardson2002ancestral}.

With respect to learning the causal graph from data, the most ubiquitous methods infer d-separation relations by estimating conditional independence relations from the data; examples are the PC and GSP algorithms in the fully observed setting, and the FCI algorithm in the presence of latent variables~\cite{spirtes2000causation,solus2017consistency, zhang2008causal}. 
\mbox{These algorithms are consistent under} the \emph{faithfulness assumption}, which asserts that every conditional inde- pendence relation in the distribution corresponds to a d-separation relation in the graph. 
Note that even under faithfulness, the causal graph is in general not fully identifiable from observational data; it can in general only be identified up to its \emph{Markov equivalence class}~\cite{spirtes2000causation}.

In various applications, data used for causal structure discovery is \emph{heterogeneous} in that it stems from different causal models on the same set of variables~\citep{gates2012group,chu2003statistical,ramsey2011meta}. This is relevant for example in biomedical applications, where the goal is to learn a gene regulatory network based on gene expression data from a disease that consists of multiple not well characterized subtypes (as is the case for many neurological diseases). %
In such scenarios, the samples stem from a mixture of different causal models on the same set of variables, and the causal effects of the mixture distribution can in general not be faithfully represented by a single DAG. 

Furthermore, a single DAG inferred from such samples cannot identify differences between the component DAGs 
in the mixture, which may be critical for personalized biomedical interventions, and may lead to flawed conclusions downstream. 

In this work, we consider distributions arising as mixtures of causal DAGs. Our main contributions are as follows: 
\vspace{-0.4cm}
\begin{itemize}
\item We introduce the \emph{mixture graph} to represent such mixture distributions. We prove 
that this graph encodes the conditional independence relations in the mixture distribution through separation statements (Theorem~\ref{thm:markov-mixture}) and show that the separation statements in every such graph can be realized by independence relations in some mixture distribution
(Proposition~\ref{prop_realizability}).
\vspace{-0.1cm}
\item We introduce the \emph{union graph}, a graph defined from the mixture graph. We prove that, under a faithfulness and ordering assumption on the DAGs in the mixture, the FCI algorithm applied to data from a mixture of DAGs outputs the union graph (Theorem \ref{thm:mixture-union}).
\item We prove that the union graph can be used to identify variables whose conditional distribution across the component DAGs changes 
(Proposition~\ref{prop:varying}). 
We demonstrate the implication of this result for identifying critical nodes and for clustering samples according to their mixture component on synthetic data and data from genomics.

\end{itemize}

\section{PRELIMINARIES \& RELATED WORK} 
\label{prelims}

\subsection{Graphical representations: DAGs and MAGs}
In this paper, we consider two types of graphs: \emph{directed acyclic graphs} (\emph{DAGs}) and \emph{mixed graphs} with directed ($\rightarrow$) and bidirected ($\leftrightarrow$) edges. 
We denote the former by $\cD = (V,E)$ and the latter by $\cM = (V,D,B)$, where $V$ denotes the set of vertices, $E$ and $D$ denote the set of directed edges and $B$ denotes the set of bidirected edges.
A mixed graph is said to be \emph{ancestral} if it has no directed cycles, and whenever there is a bidirected edge $u\leftrightarrow v$, then there is no directed path from $u$ to $v$~\cite{richardson2002ancestral}. While ancestral graphs have been defined more generally to allow also for undirected edges, in this work we will only make use of graphs with directed and bidirected edges.

Throughout, we will use the notation $\ch_{\cM}(v)$, $\pa_{\cM}(v)$ and $\an_{\cM}(v)$ to denote the children, parents and ancestors, respectively, of a node $v$ in the graph $\cM$. Furthermore, we use the standard definitions of \emph{path} and \emph{directed path} in a graph; for these definitions, see e.g.~\citet{lauritzen1996graphical}. 
We will use the notation $v\leftrightarrow_\cM u$ as a shorthand to denote ``the edge $v\leftrightarrow u$ between nodes $u,v$ in $\cM$'', and use similar notations for other types of edges.

The notion of d-separation from DAGs can be generalized to ancestral graphs by accounting for the new possible ways to obtain a collider from bidirected edges \cite{richardson2002ancestral}.
In ancestral graphs, unlike in DAGs, it is possible to have a pair of nodes that are not adjacent, but cannot be d-separated given any subset of nodes. 
An ancestral graph where any non-adjacent pair of nodes is d-separated given some subset of nodes is called \emph{maximal}, and a non-maximal ancestral graph can be made maximal by adding a bidirected edge between all such pairs. 
An ancestral graph that is maximal is called a \emph{Maximal Ancestral Graph} (\textrm{MAG}) ~\cite{richardson2002ancestral}.

Ancestral graphs are a useful representation of DAGs with unobserved nodes. 

Specifically, \citet{richardson2002ancestral} showed that given a DAG $\cD = (V\cup L, E)$, with observed nodes $V$ and unobserved nodes $L$, satisfying a set of d-separation statements of the form ``$A$ d-separated from $B$ given $C$'' for disjoint $A,B,C\subseteq V$, there exists an ancestral graph $\cM  = (V, D, B)$ with the same d-separation statements, called the \emph{marginal ancestral graph of $\cD$ with respect to $L$}. ~\citet{sadeghi2013stable} gave a local criterion to construct this graph from $\cD$. Throughout our paper, we will make use of this in the special case where $L$ consists of a single node of in-degree $0$. The specialization of Sadeghi's algorithm to this case is provided in Algorithm~\ref{alg:marginalization}.

\begin{algorithm}[h]
\caption{Algorithm 1: Construction of the marginal ancestral graph}
 \label{alg:marginalization}
 \begin{spacing}{1.1}
 \begin{algorithmic} 
 \STATE \hspace{-4mm}\textbf{Input:} 
 DAG $\cD = (V \cup \{y\}, E)$, where $y$ has in-degree $0$.\\
 \hspace{-4mm}\textbf{Output:} the marginal ancestral graph of $\cD$ w.r.t. $y$.\\
 \textbf{(0)} Initialize $D = \emptyset$, $B = \emptyset$\\
 \textbf{(1)} For $u, v \in \ch_{\D}(y)$: add $u\leftrightarrow v$ to $B$.\\
 \textbf{(2)} For $t, u, v$ such that $(t \rightarrow u) \in E$ and $(u\leftrightarrow v) \in B$: \\
 \hspace{4mm} if $u\in \an_{\D}(v)$, then add $t \rightarrow v$ to $D$.\\
 \textbf{(3)} For $u, v$ such that $u\leftrightarrow v \in B$: if $u \in \an_{\D}(v)$, then\\
 \hspace{4mm} remove $u\leftrightarrow v$ from $B $ and add $u \rightarrow v$ to $D$.\\
 \textbf{(4)} Return the ancestral graph $\cM = (V, D, B)$.
 \end{algorithmic}
 \end{spacing}
\end{algorithm}

Although, in general, the ancestral graph constructed using Sadeghi's criterion is not maximal, the relevant restriction considered here, i.e.,~when $L$ consists of a single node with in-degree 0, is always a MAG. The following proposition states this; a proof is provided in section~\ref{section:proof_prop_marginal_mag} of the Appendix

\vspace{0.2cm}

\begin{prop}
\label{prop:marginal_mag}
The output of Algorithm~\ref{alg:marginalization} is a MAG.
\end{prop}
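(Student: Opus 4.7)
The plan is to verify the two defining properties of a MAG: (i) ancestrality, and (ii) maximality. I would dispose of ancestrality directly and establish maximality by showing that the adjacencies and edge marks produced by Algorithm~\ref{alg:marginalization} coincide with those of the canonical Richardson--Spirtes marginal MAG of $\cD$ relative to $\{y\}$, which is maximal by construction.

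For ancestrality, I would first record the invariant that every directed edge $a \to b$ produced by the algorithm satisfies $a \in \an_{\cD}(b)$: for step (3) edges this is the explicit premise, for step (2) edges $t \to u \in E$ and $u \in \an_{\cD}(v)$ combine to give $t \in \an_{\cD}(v)$, and for any preserved direct edges of $\cD$ restricted to $V$ it is immediate. A directed cycle in $\cM$ would therefore descend to a directed cycle in $\cD$, contradicting acyclicity. For the absence of almost-directed cycles, observe that any bidirected edge $u \leftrightarrow v$ in the final $B$ must have survived step (3), so $u \notin \an_{\cD}(v)$, and the same step applied to the ordered pair $(v,u)$ gives $v \notin \an_{\cD}(u)$; an $\cM$-directed path between $u$ and $v$ would then, via the invariant, imply one of these, a contradiction.

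For maximality, I would enumerate inducing paths in $\cD$ relative to $\{y\}$ and match them against the algorithm's output. The in-degree $0$ hypothesis on $y$ restricts such paths severely: $y$ cannot be a collider and can therefore appear only as an interior non-collider of the form $\cdot \leftarrow y \to \cdot$; and a short argument shows that two consecutive non-endpoint vertices on a simple DAG path cannot both be colliders (their shared edge would have to point inward at both endpoints). This leaves only four templates for an inducing path between $u,v \in V$: a direct edge $u \to v$ or $v \to u$; the length-two path $u \leftarrow y \to v$; the augmented path $u \to a \leftarrow y \to v$ with $a \in \ch_{\cD}(y) \cap \an_{\cD}(v)$; and its mirror at $v$. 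One then checks by case-work that step (1) followed by step (3) produces exactly the edge and marks forced by the second template, step (2) produces those of the third and fourth, and the preserved direct edges account for the first; conversely every edge added by the algorithm is witnessed by one of these templates. The main obstacle is this enumeration, in particular ruling out the ``doubly augmented'' template $u \to a \leftarrow y \to b \leftarrow v$: if it were inducing, then $a$ would be an ancestor of $v$ and $b$ an ancestor of $u$, forcing the directed cycle $u \to a \to \cdots \to v \to b \to \cdots \to u$ in $\cD$, contradicting acyclicity. Once the enumeration is in hand, maximality follows from the standard Richardson--Spirtes fact that absence of an inducing path between $u$ and $v$ in $\cD$ yields a subset of $V \setminus \{u,v\}$ that d-separates them in $\cD$, and which also d-separates them in $\cM$.
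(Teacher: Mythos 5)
Your proof is correct, but it takes a genuinely different and heavier route than the paper's. The paper argues only maximality (ancestrality of the output is taken as read from the construction), and does so in three lines: it invokes the criterion that an ancestral graph is maximal iff it contains no inducing path, where in the specialized definition used here every interior edge of an inducing path is bidirected; since bidirected edges are created only in step (1) of Algorithm~\ref{alg:marginalization}, every vertex on such a path must lie in $\ch_{\cD}(y)$, so step (1) already makes the two endpoints adjacent and no inducing path can survive in the output. You instead work upstream in $\cD$: you enumerate the inducing paths of $\cD$ relative to $\{y\}$ and check that the algorithm reproduces exactly the adjacencies and edge marks of the canonical Richardson--Spirtes marginal MAG, from which maximality is inherited. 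This costs a case analysis the paper entirely avoids, but it buys two things the paper leaves implicit: an explicit verification of ancestrality (via the invariant that every directed edge of the output respects $\cD$-ancestry and every surviving bidirected edge joins ancestrally incomparable children of $y$), and a proof that the output actually coincides with the marginal MAG, which the paper only asserts by citation to Sadeghi's construction. One small completeness point: your template list should also dispose of the path $u \to a \leftarrow v$ with $a \in V$ and no occurrence of $y$; it has a single interior vertex, so it is not excluded by your ``no two consecutive colliders'' observation, but it fails to be inducing because such an $a$ is a descendant of both endpoints and hence an ancestor of neither.
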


\subsection{Markov Properties}
Given a graph $\cM$ with nodes $V$, we associate to each node $v\in V$ a random variable $X_v$ and denote the joint distribution of $X_V := (x_v: v\in V)$ by $p_{X_V}$. The \emph{Markov property} associates missing edges in $\cM$ with conditional independence statements in $p_{X_V}$: a distribution $p_{X_V}$ is said to satisfy the Markov property with respect to $\cM$ if for any disjoint $A,B,C\subseteq V$ such that $A$ and $B$ are d-separated given $C$ in $\cM$, it holds that $X_A \ci X_B \mid X_C$ in~$p_{X_V}$~\cite{lauritzen1996graphical}. 
For DAGs, an equivalent condition to the Markov property 
 is for $p_{X_V}$ to factorize as 
 $p_{X_V}(x_V)=\prod_{v\in V} p(x_v| x_{\pa_{\cG}(v)});$
 see~\citet{lauritzen1996graphical}. Considering latent variables $X_L$,~\citet{richardson2002ancestral} showed that given a distribution $p_{X_{V},X_L}$ that is Markov with respect to a DAG $\cD$ over $V\cup L$, the marginal $p_{X_V}(x_V)=\sum_{x_L}p_{X_V, X_L}(x_V,x_L)$ is Markov with respect to the marginal ancestral graph of $\cD$ with respect to $L$.

It is possible for two different DAGs $\cD_1,\cD_2$ over the same set of nodes to satisfy the same set of d-separation statements. In this case, $\cD_1$ and $\cD_2$ are said to be \emph{Markov equivalent}, and the set of all DAGs that are Markov equivalent to a DAG $\cD$ is called the \emph{Markov equivalence Class of $\cD$}. These definitions trivially extend to MAGs. The Markov equivalence class of a MAG can be represented by a \emph{partial ancestral graph} (\emph{PAG}): the edges in such a graph have three types of tips: arrowheads ($\leftarrow$), tails $(-)$ and circles $\circtail$, where arrowhead (tail) signifies that this arrowhead exists in all graphs in the Markov equivalence class~\cite{zhang2008causal}.

\subsection{Causal Structure Discovery}
\label{section:background_faithfulness}
The goal of structure learning is to recover the graph $\cD$ or $\cM$ from data generated from the distribution $p_{X_V}$. This task often requires assumptions beyond the Markov property. One common such assumption is the so-called \emph{faithfulness assumption} which states that for any disjoint $A,B,C\subseteq V$, it holds that $A$ and $B$ are d-separated given $C$ whenever $X_A \ci X_B \mid X_C$ in $p_{X_V}$~\cite{spirtes2000causation}.
The faithfulness assumption allows making inference about the structure of $\cD$ or $\cM$ from conditional independence tests on the data. Various algorithms have been proposed for this task that are provably consistent, such as the PC, GES or GSP algorithms for learning DAGs~\cite{spirtes2000causation,chickering2002optimal,solus2017consistency}, and the FCI algorithm for learning MAGs~\cite{spirtes2000causation}. Note that even under the faithfulness assumption, it is in general only possible to retrieve the Markov equivalence class of a graph $\cD$ or $\cM$ from data; this is the output of the above algorithms. For example,  FCI in general does not return a specific MAG, but a PAG representing a Markov equivalence class of MAGs.

\subsection{Causal Inference from Mixtures of DAGs}
While the problem of learning appropriate representations from data of DAG mixtures arises in various applications, little work has been done on theory and methodology in this direction.
\citet{spirtes1994conditional} investigated the conditional independence properties of such mixture distributions; he defined a cyclic graphical model derivable from the component DAGs and proved that the mixture distribution is \mbox{Markov with respect to it. 
However, this} graph does not capture the full set of conditional independence relations for any reasonable mixture. In fact, as we discuss later, this graph is similar to the representation we define in Section~\ref{section:union}, which also only provides partial information about the structure of the component DAGs. 
To capture the full set of independences in the mixture distribution, a representation sparser than that of~\citet{spirtes1994conditional} is necessary. \citet{strobl2019global,pmlr-v104-strobl19a} built on this work to define a sparser graph. However, we provide examples in Section~\ref{section:strobl_example} of the Appendix showing that the Markov condition in general does not hold for this graph, i.e., there can be d-separation statements in the graph that do not correspond to conditional independence relations in the mixture distribution. 
Finally,~\citet{ramsey2011meta} provided conditions for the mixture distribution to be representable by a graph that is a union of the component DAGs.

To learn the component DAGs from mixture data, a simple approach is to cluster the data using, for example, the Expectation-Maximization (EM) algorithm and then learn a DAG from each cluster. This, however, uses a reduced sample size to learn each DAG (corresponding to the size of the associated cluster). 
In the case where the cluster labels are known and the DAGs are related, \citet{wang_JCI} showed that learning each DAG separately can lead to loss in accuracy compared to when the full sample size is used to learn the DAGs jointly. 
When the expectation in the EM algorithm can be computed, as e.g.~for Gaussians, ~\citet{thiesson2013learning} proposed a heuristic approach based on the EM algorithm to directly learn the component DAGs from the mixture data. In this work, we consider a different problem. Instead of learning the component DAGs we provide a graphical representation of the mixture distribution and identify critical aspects of the component DAGs that are captured by this graph and can be identified by algorithms such as FCI when applied directly to the mixture distribution. 
%However, we show that the representation we provide here implies that algorithms such as FCI can be directly used on the mixture data to identify some critical aspects of the component DAGs.

\section{MIXTURE DAG AND MARKOV PROPERTY}
\label{model}

In this section, we provide our first main result: after formally introducing distributions that arise as mixtures of DAGs, we define the \emph{mixture DAG} 
and prove in Theorem~\ref{thm:markov-mixture} and Proposition~\ref{prop_realizability} that it is a valid representation of the model, i.e., the DAG encodes the conditional independence relations of the mixture distributions. More precisely, not only is the Markov condition satisfied (i.e., all separation statements in the mixture DAG correspond to conditional independence relations in the mixture distribution), but in addition, every mixture DAG is also realizable by a mixture distribution (meaning that the mixture DAG cannot be made sparser without losing the Markov property).

\subsection{Mixture of Causal DAGs}
To introduce the mixture model, we consider $K$ DAGs $\{\D\up1,\dots,\D\up K\}$ with $\mathcal{D}\up j =(V, E\up j)$ for $1\le j\le K$, i.e., these $K$ DAGs are defined on the \emph{same} set of nodes. 

Associated with each component DAG $\mathcal{D}\up{j}$ is a random vector $X_V$ with  distribution $p\up{j}(x_V)$. Let $V_{\;\text{INV}}$ denote the set of nodes that are \emph{invariant} across the $K$ component DAGs, i.e., nodes whose conditional distribution in the factorization does not vary across $\cD\up1,\dots,\cD\up K$; that is 
\begin{equation}
\label{invariant:nodes}
\begin{aligned}
 V_{\;\text{INV}} \hspace{-1mm}=\hspace{-1mm} \Big\{\hspace{-0.5mm}v\hspace{-0.5mm}\in \hspace{-0.5mm}V\hspace{-1mm}:  p\up{j}&(x_v|x_{\pa_{\D\up{j}}(v)}) =p\up{k}(x_v|x_{\pa_{\D\up{k}}(v)})\\
&\textrm{ for all }j,k\in \{1,2,\cdots, K\} \Big\}.
\end{aligned}
\end{equation} 

Assuming that each distribution $p\up{j}(x_V)$ admits a factorization according to DAG $\cD\up j$, we then obtain:
\begin{align*}
p\up{j}(x_V) &=\hspace{-4.5mm} \prod_{v\in V\setminus V_{\;\text{INV}}}\hspace{-4mm}p\up{j} (x_v\lvert x_{\pa_{\cD\up{j}}(v)})\hspace{-2mm}\prod_{v\in V_{\;\text{INV}}}\hspace{-2mm} p\up{j} (x_v\lvert x_{\pa_{\cD\up{j}}(v)})\\
&=\hspace{-4.5mm} \prod_{v\in V\setminus V_{\;\text{INV}}}\hspace{-4mm}p\up{j} (x_v\lvert x_{\pa_{\cD\up{j}}(v)})\hspace{-2mm}\prod_{v\in V_{\;\text{INV}}}\hspace{-2mm} p\up{1} (x_v\lvert x_{\pa_{\cD\up{1}}(v)})
\end{align*}
for all $1\le j\le K$, i.e., each distribution 
decouples into two components: one over the variables associated with $V_{\;\text{INV}}$ that remains constant across all $K$ distributions, and another over the remaining variables which may differ with $j$. 

Let $J$ be a discrete variable taking values in $\{1,\dots,K\}$ with probabilities $p_J(j)$ for each $j\in \{1,\dots,K\}$. 
Defining a joint distribution $p_\mu$ over $X_V \cup J$ by
\begin{equation}
\label{mixture}
p_\mu(x_V,j):=
p_J(j)\cdot p\up j(x_V),
\end{equation}
this joint distribution satisfies $p\up j(x_V)= p_\mu(x_V |J~=~j)$ and the observed mixture distribution is obtained by marginalizing $p_\mu$ over the unobserved index variable $J$. With a slight abuse of notation, we denote the resulting mixture distribution also by $p_\mu$. Given samples from this distribution, i.e., without knowledge of the membership of each sample to its generating DAG, we analyze what can still be inferred regarding the structure of $\cD\up1,\dots,\cD\up K$.

\subsection{Mixture DAG and Markov Property}
\label{section:mixture_graph}

We now present the \emph{mixture DAG}, a DAG that is representative of the independence relations induced amongst the observed variables after marginalizing over the index variable $J$ in (\ref{mixture}). Denoting the number of vertices in $V$ by $|V|$, the mixture DAG is a graph on $K\cdot|V|+1$ nodes constructed by placing the $K$ component DAGs next to each other, giving rise to a DAG on $K\cdot|V|$ nodes, and using an additional node to represent $J$. We now provide the precise definition.

\begin{defn}[Mixture DAG]
\label{mixture:DAG:defn}
Let $v^{(j)}$ denote vertex $v$ in DAG $j$ and let $[V]:=\cup_{1\leq j\leq K} V\up j$ denote the vertices of the $K$ component DAGs. The \emph{mixture DAG}, denoted by $\D_\mu$, has nodes $[V] \cup \{y\}$ and edges $E_\mu$ consisting of edges in each component DAG, namely
$$
\vspace{-1.2mm}
\bigcup_{j=1}^K \Big\{v\up j \rightarrow \tilde v\up j: v,\tilde v\in V,\; v\rightarrow \tilde v \in E\up j\Big\},
\vspace{-0.2mm}
$$
and additional edges from node $y$ to some nodes in $[V]$, namely those corresponding to variables that have conditionals that are not the same for all $j$, i.e.,
$$
\bigcup_{j=1}^K\Big\{y \rightarrow v\up j : v\in V\setminus V_{\textrm{INV}}\}.
\vspace{-1.2mm}
$$
\end{defn}
Figure~\ref{fig:ex1} 
provides an example of the mixture DAG arising from a mixture with $K=2$ and $|V|=4$. Note that, while the results of this section hold even when 

the DAGs $\cD\up j$ 
have no common topological ordering (meaning that there exists no ordering $\pi$ such that $v< u$ in $\pi$ only if $u\not\in\an_{\cD\up j}(v)$ for all $1\leq j\leq K$), the mixture DAG is sparsest, and hence provides information about the component DAGs through separation statements, when a common topological ordering exists (as in Figure~\ref{fig:ex1}). When there is no common ordering, the set $V_{\textrm{INV}}$ is generally smaller, since $\pa_{\cD\up j}\neq\pa_{\cD\up k}$ implies $p\up{j}(x_v|x_{\pa_{\D\up{j}}(v)}) \neq p\up{k}(x_v|x_{\pa_{\D\up{k}}(v)})$, which implies a denser mixture DAG.

\begin{figure*}[t!]
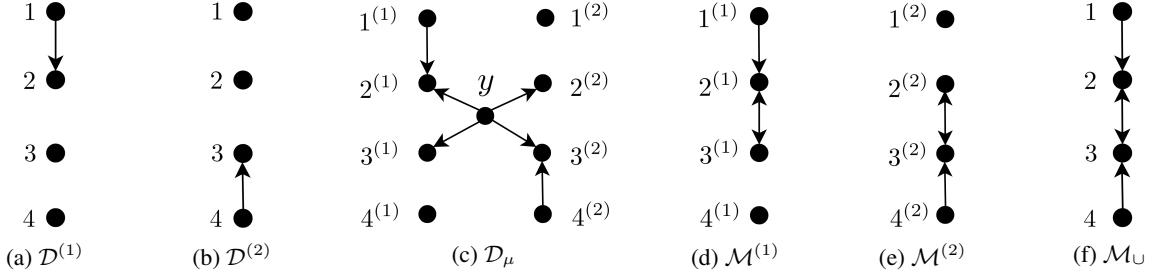

  \centering
  \begin{subfigure}[t]{0.14\textwidth}
  \centering
    \includegraphics[height=30mm]{figures/ex1_D1.png}
    \caption{$\cD\up 1$}
  \end{subfigure}
  \begin{subfigure}[t]{0.14\textwidth}
  \centering
    \includegraphics[height=30mm]{figures/ex1_D2.png}
    \caption{$\cD\up 2$}
  \end{subfigure}
  \begin{subfigure}[t]{0.24\textwidth}
  \centering
    \includegraphics[height=30mm]{figures/ex1_mixture.png}
    \caption{$\cD_\mu$}
  \end{subfigure}
  \begin{subfigure}[t]{0.14\textwidth}
  \centering
    \includegraphics[height=30mm]{figures/ex1_M1.png}
    \caption{$\cM\up 1$}
    \label{fig:ex1_M1}
  \end{subfigure}
  \begin{subfigure}[t]{0.14\textwidth}
  \centering
    \includegraphics[height=30mm]{figures/ex1_M2.png}
    \caption{$\cM\up 2$}
    \label{fig:ex1_M2}
  \end{subfigure}
  \begin{subfigure}[t]{0.14\textwidth}
  \centering
    \includegraphics[height=30mm]{figures/ex1_union.png}
    \caption{$\cM_\cup$}
    \label{fig:ex1_union}
  \end{subfigure}
  \caption{(a)-(b): component DAGs for a mixture model with $K=2$; (c): corresponding mixture DAG (see Definition~\ref{mixture:DAG:defn}); (d)-(e): associated component MAGs (see Section~\ref{section:union}); (f): associated union graph (see Definition~\ref{def:union_graph}).}
  \label{fig:ex1}
 \end{figure*}

We emphasize here that the DAG in Definition \ref{mixture:DAG:defn} is not a graphical model representation of the mixture distribution in the standard sense. This is already clear from the fact that the mixture DAG has $K\cdot|V|+1$ nodes, whereas the mixture distribution is only $|V|$-dimensional.  
Yet, in the following theorem we show that it is possible to read off  conditional independence relations that hold in the mixture distribution $p_\mu$ from the mixture graph in an intuitive manner. 

For $A\subset V$, we use the notation $[A]$ to denote all $K$ copies of the nodes in $A$, i.e., $A=\cup_{1\leq j\leq K} A\up j$.

\begin{thm}[Markov Property]
\label{thm:markov-mixture}

Let $A,B,C \subseteq V$ be disjoint. If $[A]$ and $[B]$ are d-separated given $[C]$ in the mixture DAG $\D_\mu$, then $X_A\!\!\ci\!\! X_B | X_C$ in the mixture distribution~$p_\mu$. 

\end{thm}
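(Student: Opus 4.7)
The plan is to construct an auxiliary joint distribution $P$ over $(X_{[V]}, y)$ that factorizes according to the mixture DAG $\D_\mu$, apply the Markov property for DAGs to obtain a conditional independence under $P$, and then transfer it to $p_\mu$ by recognizing that $p_\mu$ arises from $P$ by ``selecting'' the copy indexed by $y$. Concretely, I set $P(y = j) := p_J(j)$ and, for each $v \in V$ and $1 \le j \le K$, take the conditional of $X_{v^{(j)}}$ given its parents in $\D_\mu$ to be $p^{(j)}(x_v \mid x_{\pa_{\cD^{(j)}}(v)})$, using invariance~\eqref{invariant:nodes} to collapse the $K$ choices when $v \in V_{\textrm{INV}}$. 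By construction $P$ factorizes according to $\D_\mu$, $P(X_{V^{(j)}} = \cdot \mid y = j) = p^{(j)}$, and hence the ``selected-copy'' variable $Z_v := X_{v^{(y)}}$ has marginal distribution $p_\mu$ under $P$.

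The hypothesis that $[A]$ and $[B]$ are d-separated given $[C]$ in $\D_\mu$ yields, via the DAG Markov property, $X_{[A]} \ci X_{[B]} \mid X_{[C]}$ under $P$. Moreover, any d-connecting path between $A$ and $B$ given $C$ inside a single component $\cD^{(j)}$ lifts verbatim (replacing each node $v$ on the path by $v^{(j)}$) to a d-connecting path between $[A]$ and $[B]$ given $[C]$ in $\D_\mu$; hence $A$ and $B$ are d-separated given $C$ in every $\cD^{(j)}$, so $X_A \ci X_B \mid X_C$ holds under every component $p^{(j)}$.

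The key structural observation is that $y$ has in-degree zero in $\D_\mu$, so $y$ is a non-collider (``fork'') on every path through it, and $y \notin [C]$. Consequently, if $y$ were d-connected to some node of $[A]$ and also to some node of $[B]$ given $[C]$, the two paths could be concatenated at $y$ into a walk from $[A]$ to $[B]$ that is open given $[C]$, and a standard argument would extract a simple d-connecting path, contradicting the hypothesis. Hence, up to relabeling, $y$ and $[B]$ are d-separated given $[C]$ in $\D_\mu$. Because different copies of $V$ are connected in $\D_\mu$ only through $y$, every path from $y$ to $[B]$ has the form $y \to u^{(j)} \leadsto b^{(j)}$ with middle segment contained in copy $j$; requiring all such paths to be blocked by $[C]$ translates, inside each component, to ``$B$ and $V \setminus V_{\textrm{INV}}$ are d-separated given $C$ in $\cD^{(j)}$'', and the Markov property delivers $X_B \ci X_{V \setminus V_{\textrm{INV}}} \mid X_C$ under each $p^{(j)}$.

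It remains to deduce that $p^{(j)}(x_B \mid x_C)$ does not depend on $j$, after which a short calculation completes the proof. Writing $p^{(j)}(x_V) = Q(x_V)\, R^{(j)}(x_V)$ where $Q$ collects the invariant conditionals and $R^{(j)}$ the non-invariant ones, the d-separation of $B$ from $V \setminus V_{\textrm{INV}}$ given $C$ in $\cD^{(j)}$ forces $B$ to be non-adjacent to every non-invariant node, so $R^{(j)}$ does not involve $x_B$; this permits extracting $R^{(j)}$ from the ratio $p^{(j)}(x_B, x_C)/p^{(j)}(x_C)$ and leaves a quantity independent of $j$, which can be denoted $p(x_B \mid x_C)$. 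Combining with the per-component CI from the second paragraph gives
\[
p_\mu(x_A, x_B, x_C) \;=\; \sum_{j} p_J(j)\, p^{(j)}(x_A, x_C)\, p^{(j)}(x_B \mid x_C) \;=\; p(x_B \mid x_C)\, p_\mu(x_A, x_C),
\]
and analogously $p_\mu(x_B, x_C) = p(x_B \mid x_C)\, p_\mu(x_C)$; together these yield $p_\mu(x_A, x_B \mid x_C) = p_\mu(x_A \mid x_C)\, p_\mu(x_B \mid x_C)$. The main obstacle is rigorously establishing the invariance of $p^{(j)}(x_B \mid x_C)$ in this last step: making the ``$R^{(j)}$ cancels'' argument airtight requires careful bookkeeping for how non-invariant conditionals propagate through the marginalization over $V \setminus (B \cup C)$, which is exactly where the non-adjacency consequence of the d-separation becomes essential.
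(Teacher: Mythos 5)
Your high-level strategy matches the paper's: reduce the hypothesis to (i) $A$ d-separated from $B$ given $C$ in each component $\cD\up j$, and (ii) $y$ d-separated from one of $[A],[B]$ (say $[B]$) given $[C]$ in $\cD_\mu$, then argue that $p\up j(x_B\mid x_C)$ does not depend on $j$ and sum over $j$. Steps (i), (ii) and the final summation are correct and coincide with the paper's argument. The gap is in the one step that carries all the weight: establishing that $p\up j(x_B\mid x_C)$ is invariant in $j$. Your route passes through the claim that ``$y$ d-separated from $[B]$ given $[C]$'' translates into ``$B$ d-separated from $V\setminus V_{\textrm{INV}}$ given $C$ in each $\cD\up j$,'' and hence that $B$ is non-adjacent to every non-invariant node so that $R\up j$ does not involve $x_B$. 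That translation is false: a child $u\up j$ of $y$ is an \emph{interior} vertex of the path $y\rightarrow u\up j\cdots b\up j$, so its collider status matters there, whereas it is an \emph{endpoint} of the corresponding path in $\cD\up j$. Concretely, take $w\rightarrow u$, $w\rightarrow b$ with $u\in V\setminus V_{\textrm{INV}}$, $w,b\in V_{\textrm{INV}}$, $C=\emptyset$ (and an isolated invariant node for $A$): the only path $y\rightarrow u\up j\leftarrow w\up j\rightarrow b\up j$ is blocked at the collider $u\up j$, so $y$ is d-separated from $[B]$ given $\emptyset$, yet $u$ and $b$ are d-connected given $\emptyset$ in $\cD\up j$. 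Likewise $R\up j$ can genuinely involve $x_B$ (e.g.\ when some $b\in B$ is a parent of a non-invariant node that is neither in $C$ nor has a descendant in $C$), so the cancellation you yourself flag as ``the main obstacle'' is unproved, and the non-adjacency argument offered for it does not hold.

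The needed invariance is true, but it requires a different argument; this is exactly what the paper's Lemma~\ref{lemma:technical} supplies. There one constructs an auxiliary $(|V|+1)$-dimensional distribution over $X_V\cup\{J\}$ (with $J$ restricted to two values) that factorizes according to the sub-DAG $\widetilde\cD\up j$ on $V\up j\cup\{y\}$ and agrees with $p\up j$ on the slice $J=j$; the d-separation of $B\up j$ from $y$ given $C\up j$ then yields $X_B\ci J\mid X_C$ by the ordinary DAG Markov property, which is precisely the statement that $p\up j(x_B\mid x_C)$ does not depend on $j$. Your global construction of $P$ over $X_{[V]}\cup\{y\}$ in the first paragraph points in a similar direction ($X_{[B]}\ci y\mid X_{[C]}$ does hold under $P$), but as written it is never used downstream, and converting a $P$-statement about the copies into a statement about $p\up j(x_B\mid x_C)$ is exactly the bookkeeping your proof omits.
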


To illustrate this result, consider the example in Figure~\ref{fig:ex1_mixture_mag}. Since $[1] = \{1\up 1,1\up 2\}$ and $[4] = \{4\up 2, 4\up 2\}$ are d-separated given $\emptyset$ in the mixture DAG, then the mixture distribution $p_\mu(x_{1},x_{2},x_{3},x_{4})$ satisfies $X_{1}\ci X_{4}$.

We note that while the graphical representation provided by~\citet{pmlr-v104-strobl19a} (the \emph{mother graph}) is similar to the mixture DAG, it critically differs in how the component DAGs are connected via the node $y$. Importantly, we show in Section~\ref{section:strobl_example} in the Appendix that the mixture distribution $p_\mu$ is \emph{not} Markov with respect to the mother graph\footnote{\citet{strobl2019global,pmlr-v104-strobl19a} provides two different constructions; we show that the Markov property does not hold in either.}.

In the following, we provide a proof for Theorem \ref{thm:markov-mixture}. 
For each $1\leq j\leq K$, let $\widetilde\cD^{(j)}$ be the sub-DAG induced by $\cD_\mu$ on the vertices $V^{(j)}\cup\{y\}$.The main ingredient of the proof is the following lemma, which connects d-separation statements in the mixture DAG to conditional independence relations in the mixture distribution via d-separation in  $\widetilde{D}\up j$.

\begin{lemma}
\label{lemma:technical}
Let $A,B,C\subseteq V$ be disjoint. If for all $1\le j\le K$ it holds that
\begin{enumerate}
\vspace{-3mm}
    \item[(a)] $A\up j$ and $B\up j$ are d-separated given $C\up j$, and; 
    \vspace{-1mm}
    \item[(b)] $A\up j$ and $y$ are d-separated given $C\up j$ in $\widetilde\cD\up j$,
\vspace{-3mm}
\end{enumerate}
then $X_A\ci J\mid X_C$ in $p_\mu$, implying the factorization 
$$
p\up j(x_A,x_B|x_C) = p\up 1(x_A|x_C)p\up j(x_B|x_C)
$$
for all $1\le j\le K$.
\end{lemma}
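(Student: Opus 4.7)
My plan is to lift $p_\mu$ to a joint distribution on the augmented space $(y, X_{[V]})$ that is Markov with respect to $\D_\mu$, and then read off the desired independence by applying the Markov property of $\D_\mu$. First, I would observe that condition (b) for all $j$ is equivalent to the single d-separation statement $[A]\ci y\mid [C]$ in $\D_\mu$: any path in $\D_\mu$ ending at $y$ must stay within a single copy (since $y$ is the unique inter-copy connector and is the endpoint of the path), so conditioning on $C\up k$ for $k\neq j$ has no effect on paths in copy $j$, and the $K$ copy-wise statements in (b) combine into a single $[A]\ci y\mid[C]$ in $\D_\mu$. The same reasoning applied to (a) yields $[A]\ci [B]\mid [C]$ in $\D_\mu$.

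Next I would construct a distribution $\mathbb{M}$ on $(y, x_{[V]})$ that factors along $\D_\mu$ and whose ``diagonal'' marginal reproduces $p_\mu$: set
\[
\mathbb{M}(y=k,\,x_{[V]}) \;=\; p_J(k) \prod_{j=1}^{K} \widetilde p^{(j,k)}(x_V^{(j)}),
\]
where $\widetilde p^{(j,k)}$ factors along $\D\up j$ using the invariant conditionals of the mixture and, for each $v\notin V_{\;\text{INV}}$, the component-$k$ conditional $p\up k(x_v\mid x_{\pa_{\D\up j}(v)})$. Then $\widetilde p^{(j,j)}=p\up j$, so $\mathbb{M}(y=j, x_V^{(j)}) = p_\mu(y=j, x_V)$, and $\mathbb{M}$ is Markov with respect to $\D_\mu$ by construction. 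Applying $\mathbb{M}$'s Markov property to $[A]\ci y\mid[C]$ gives $X_{[A]}\ci y\mid X_{[C]}$ in $\mathbb{M}$; since the copies are mutually independent given $y$ in $\mathbb{M}$, projecting (by fixing and summing over the other copies) yields $X_A^{(j)}\ci y\mid X_C^{(j)}$ for each $j$. Evaluating at $y=j$ and using the diagonal match then gives $p\up j(x_A\mid x_C) = \mathbb{M}(X_A^{(j)}\mid X_C^{(j)}, y=k)$ for every $k$, which I would argue forces $p\up j(x_A\mid x_C)$ to be independent of $j$; this is $X_A\ci J\mid X_C$ in $p_\mu$.

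For the factorization, (a) together with the Markov property of $p\up j$ with respect to $\D\up j$ gives $X_A\ci X_B\mid X_C$ in $p\up j$, i.e.\ $p\up j(x_A,x_B\mid x_C) = p\up j(x_A\mid x_C)\,p\up j(x_B\mid x_C)$; substituting $p\up j(x_A\mid x_C) = p\up 1(x_A\mid x_C)$ from the previous step produces the claimed identity.

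The main obstacle is the second step: $\mathbb{M}$ must be built so that the non-invariant conditionals actually depend on $y$ (otherwise $[A]\ci y\mid[C]$ would be vacuous in $\mathbb{M}$) while simultaneously matching $p_\mu$ along the diagonal and factoring along $\D_\mu$. Verifying that the resulting $y$-independence of each $X_A^{(j)}$ in $\mathbb{M}$ really forces the cross-component equality $p\up j(x_A\mid x_C) = p\up k(x_A\mid x_C)$, rather than only delivering an independence internal to $\mathbb{M}$, is where condition (b)'s graphical content needs to be used most delicately and is the technical crux of the proof.
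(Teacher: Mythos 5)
Your overall strategy---construct an auxiliary joint distribution over the variables and the index that factors along the (augmented) mixture graph, match it to the true components, and read the conclusion off the Markov property together with the d-separation in (b)---is the same device the paper uses. The difference is packaging: the paper works pairwise, building for each fixed $j_1$ and arbitrary $j_2$ a two-valued-index distribution $\widetilde p(x_V,j)$ that factors along the single-copy graph $\widetilde\cD\up{j_1}$, with $\widetilde p(\cdot\mid j_1)=p\up{j_1}$ and with the $j_2$-slice given by component-$j_2$ conditionals whose mismatched parents $\pa_{\cD\up{j_2}}(v)\setminus\pa_{\cD\up{j_1}}(v)$ are frozen at an arbitrary fixed value; you build one $K$-valued-index distribution $\mathbb{M}$ on all of $\cD_\mu$ at once. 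Your construction needs that same freezing trick: ``$p\up k(x_v\mid x_{\pa_{\cD\up j}(v)})$'' is not one of the given conditionals when $\pa_{\cD\up k}(v)\neq\pa_{\cD\up j}(v)$, so as written $\widetilde p^{(j,k)}$ is not well-defined. A smaller slip: condition (a) alone does \emph{not} yield ``$[A]$ d-separated from $[B]$ given $[C]$ in $\cD_\mu$'' by the argument you give for (b), because a path from $A\up j$ to $B\up k$ with $j\neq k$ passes through $y$ as a non-collider source and is not blocked there; you need (b) to kill the cross-copy paths. (This is not load-bearing for you, since in the last step you use (a) only at the component level.)

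The genuine gap is the one you flag yourself, and it is not cosmetic. What your Markov-property step delivers is that, for each \emph{fixed} copy $j$, the auxiliary conditional $\widetilde p^{(j,k)}(x_A\mid x_C)$ does not depend on $k$; with the diagonal match this gives $p\up j(x_A\mid x_C)=\widetilde p^{(j,k)}(x_A\mid x_C)$ for all $k$. But the lemma requires the \emph{cross-copy} equality $p\up j(x_A\mid x_C)=p\up 1(x_A\mid x_C)$, and within any fixed slice $y=k$ of $\mathbb{M}$ the copy marginals $\widetilde p^{(j,k)}$ and $\widetilde p^{(1,k)}$ are different distributions (they factor along different graphs $\cD\up j$ and $\cD\up 1$), so the chain $p\up j(x_A\mid x_C)=\widetilde p^{(j,1)}(x_A\mid x_C)\stackrel{?}{=}\widetilde p^{(1,1)}(x_A\mid x_C)=p\up 1(x_A\mid x_C)$ does not close; the middle equality is exactly what remains to be proved, and nothing in your construction supplies it. The paper's pairwise version confronts the same issue at the same spot---its Markov argument identifies $p\up{j_1}(x_A\mid x_C)$ with $\widetilde p(x_A\mid x_C, j_2)$, where $\widetilde p(\cdot\mid j_2)$ is ``component $j_2$'s conditionals plugged into $\cD\up{j_1}$'' rather than $p\up{j_2}$ itself---and resolves it only through the explicit form of the frozen-parent construction, which is the part you would still need to carry out. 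Until that step is supplied, the claimed conclusion $X_A\ci J\mid X_C$ in $p_\mu$ has not been established.
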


We now provide the proof for Theorem~\ref{thm:markov-mixture}.
\vspace{-0.1cm}
\begin{proof}[Proof of Theorem~\ref{thm:markov-mixture}]
\vspace{-2mm}
We start by showing that the conditions of Lemma~\ref{lemma:technical} are satisfied.
First, note that $[A]$ and $[B]$ are d-separated given $[C]$ in $\D_\mu$ implies that $A^{(j)}$ and $B^{(j)}$ are d-separated given  $C^{(j)}$ in $\D^{(j)}$ for all $1\le j \le K$. Second, note that since $y$ has in-degree $0$, we cannot have both a d-connecting path given $[C]$ between $[A]$ and $y$ and one between $[B]$ and $y$ in $\D_\mu$. Hence, we may assume without loss of generality that $[A]$ and $y$ are d-separated given $[C]$ (otherwise, $[B]$ and $y$ are d-separated given $[C]$). 

We now use Lemma~\ref{lemma:technical} to show that $p_\mu(x_A,x_B|x_C)$ factorizes as
$f_A(x_A,x_C) f_B(x_B,x_C)$, which would prove that $X_A\ci X_B | X_C$ in $p_\mu$. By definition of $p_\mu$ in \eqref{mixture}, 
\vspace{-0.1cm}
$$
\vspace{-2mm}
p_\mu(x_A,x_B|x_C) = \sum_{j=1}^K p^{(j)}(x_A,x_B|x_C)p_J(j),
%\vspace{-1mm}
$$
and hence as a consequence of Lemma~\ref{lemma:technical} we obtain
\begin{align*}
p_\mu(x_A,x_B|x_C) &= \sum_{j=1}^K p\up 1(x_A|x_C)p^{(j)}(x_B|x_C)p_J(j) \nonumber\\
& = p^{(1)}(x_A|x_C) \sum_{j=1}^K p^{(j)}(x_B|x_C)p_J(j), \nonumber
\end{align*}
providing a factorization of the desired form.

\end{proof}

In Theorem~\ref{thm:markov-mixture}, we established that every separation statement in the mixture DAG $\cD_\mu$ corresponds to a conditional independence relation in the mixture distribution $p_\mu$. Next, we show that every mixture DAG is~\emph{realizable}, i.e., that for any mixture DAG $\cD_\mu$, there exists a $p_\mu$ whose conditional independence relations are faithfully represented by the separation statements of $\cD_\mu$. This implies that $\cD_\mu$ is the ``correct'' graphical representation of a mixture of DAGs and cannot be made sparser without losing the Markov property.

\subsection{Faithfulness}

We define faithfulness of a mixture distribution $p_\mu$ with respect to a mixture DAG $\cD_\mu$ analogously to how faithfulness is defined for a distribution with respect to a DAG model.

\begin{defn}[Mixture Faithfulness]
\label{faithfulness:mixture}
The mixture distribution $p_\mu$ is faithful with respect to a mixture DAG $\cD_\mu$ if for any disjoint $A,B,C \subseteq V$ with $X_A \ci X_B \lvert X_C $ in $p_\mu$ it holds that $[A]$ and $[B]$ are d-separated given $[C]$.
\end{defn}

We next provide an example showing that mixture faithfulness is not implied by faithfulness of each component distribution $p^{(j)}$ with respect to the corresponding DAG $\mathcal{D}^{(j)}$. Hence, to establish realizability of the mixture graph, it is not sufficient to rely on the fact that for every DAG $\cD\up j$, there exists a distribution $p\up j$ that is faithful to it.

\begin{Example} 
\label{example:faithfulness}
Consider the distributions $p^{(1)}(x_V), p\up2(x_V)$ on $V = \{1,2,3,4\}$ that factor according to the DAGs $\cD\up1,\cD\up2$, respectively, shown in Figure~\ref{fig:ex1}. Namely
\begin{align*}
p\up 1(x_V) = p\up1(x_1)p\up1(x_2|x_1)p\up 1(x_3)p\up1(x_4),\\
p\up 2(x_V) = p\up2(x_1)p\up2(x_2)p\up 2(x_3|x_4)p\up2(x_4),
\end{align*}
where
\begin{align*}
&p\up 1(x_1) = \cN(x_1; 0,1), \hspace{10mm} p\up 2(x_1) = \cN(x_1; 0,1),\\
&p\up 1(x_2 | x_1) = \cN(x_2; x_1, 1 ), \hspace{4mm} p\up2(x_2)= \cN(x_2; 0,2),\\
&p\up 1(x_3) = \cN(x_3; 0, 1), \hspace{4mm} p\up2(x_3|x_4)= \cN(x_3; x_4, 1),\\
&p\up 1(x_4) =  \cN(x_4; 0,1), \hspace{11mm} p\up 2(x_4) = \cN(x_4; 0,1).
\end{align*}
Then, defining  
$p_\mu(x_V) := \sum_{j=1}^2 p\up j(x_V)p_J(j),$
for some $J\sim p_J(j)$, we obtain that
\begin{align*}
p_\mu&(x_2,x_3) = \int p_\mu(x_V) dx_1dx_4\\
           &=\int\hspace{-1mm} p_J(1)p\up1(x_1)p\up 1(x_2|x_1)p\up1(x_3)p\up1(x_4)dx_1dx_4\\
           &\hspace{2mm}+\hspace{-1.5mm}\int\hspace{-1mm} p_J(2)p\up2(x_1)p\up2(x_2)p\up2(x_3|x_4)p\up2(x_4)dx_1dx_4\\
           &=p_J(1) \;\cN(x_2 ;0,2)\; \cN(x_3; 0,1)\\
           &\hspace{2mm}+p_J(2) \;\cN(x_2 ;0,2)\; \cN(x_3; 0,2)\\
           &=\cN(x_2; 0,2)\Big(p_J(1)\cN(x_3;0,1) + p_J(2)\cN(x_3;0,2) \Big)\\
           &= f(x_2) g(x_3),
\end{align*}
which implies that $X_2 \ci X_3$ in $p_\mu$, although in the mixture DAG corresponding to $p_\mu$ shown in Figure~\ref{fig:ex1} the nodes $2$ and $3$ are d-connected via the path through $y$. \hfill\qed
\end{Example}

This example was carefully crafted; even a slight perturbation such as choosing $p\up 2(x_2) = \cN(x_2; 0, 2.001)$ would have meant that $p_\mu(x_2,x_3)$ does not factor, 
indicating that mixture-faithfulness violations are rare. More precisely, consider the family of Gaussian mixture models where each $p\up j$ is a Gaussian distribution that is faithful with respect to $\cD\up j$. A violation of mixture-faithfulness occurs if and only if 
$\sum_{j}p\up j(x_A,x_B|x_C)$ factors as $ p_\mu(x_A|x_C)p_{\mu}(x_B|x_C)$,~i.e., 
$$\sum_{j}p\up j(x_A,x_B|x_C)=
\sum_{i}p\up i(x_A|x_C)\sum_j p\up j(x_B|x_C),
$$
when $[A]$ and $[B]$ are d-connected given $[C]$ in $\cD_\mu$.
This represents an equality constraint on the parameters of the Gaussians $p\up j$ for $1\le j\le K$. 
As a consequence, mixture-faithfulness holds almost surely and any $\cD_\mu$ is realizable by a mixture of Gaussians, thereby proving the following.

\begin{prop}[Realizability of $\cD_\mu$]
\label{prop_realizability}
For any mixture DAG $\cD_\mu$, there exists a mixture distribution $p_\mu$ that is faithful with respect to $\cD_\mu$.
\end{prop}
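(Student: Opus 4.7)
The plan is to exhibit a faithful mixture by working within the family of linear Gaussian structural equation models, as suggested by the informal discussion preceding the statement. For each component DAG $\cD\up j$, parametrize $p\up j$ as a linear Gaussian SEM with edge coefficients $\beta_{uv}\up j$ for $(u,v)\in E\up j$ and noise variances $\sigma_v^{(j)2}$, subject to the constraint $\beta_{uv}\up j=\beta_{uv}\up 1$ and $\sigma_v\up j=\sigma_v\up 1$ whenever $v\in V_{\textrm{INV}}$. Together with the mixing weights $p_J$, this gives a finite-dimensional real-analytic parametrization $\Theta$ of the candidate mixture distributions $p_\mu$. By the classical result of Spirtes--Glymour--Scheines (based on Okamoto's theorem), each component $p\up j$ is faithful to $\cD\up j$ except on a Lebesgue-null subset of its parameter space, so generic $\theta\in\Theta$ yields componentwise faithfulness.

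The remaining task is to show that, for each disjoint triple $(A,B,C)$ with $[A]$ and $[B]$ d-connected given $[C]$ in $\cD_\mu$, the set of $\theta\in\Theta$ giving $X_A\ci X_B\mid X_C$ in $p_\mu$ has measure zero in $\Theta$. The key observation is that $p_\mu(x_A,x_B\mid x_C)$ is a finite mixture of Gaussians whose moments are real-analytic functions of $\theta$; hence the factorization $p_\mu(x_A,x_B\mid x_C)=p_\mu(x_A\mid x_C)\,p_\mu(x_B\mid x_C)$, regarded as a family of identities between two such analytic functions, either holds on an analytic subvariety of positive codimension (and thus of measure zero) or holds identically on $\Theta$.

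The main obstacle, therefore, is to rule out the second possibility by producing a single witness $\theta^\star\in\Theta$ at which the factorization fails. I would split into cases according to the type of d-connecting path in $\cD_\mu$. If the d-connecting path lies entirely within a single component $\cD\up j$ (not passing through $y$), I would choose $p_J$ close to a point mass at $j$; faithfulness of $p\up j$ guarantees that $X_A\not\ci X_B\mid X_C$ in $p\up j$ as a strict inequality of partial correlations, and continuity in the mixing weights preserves this inequality for $p_\mu$. If every d-connecting path necessarily traverses $y$, then some variable $v\in V\setminus V_{\textrm{INV}}$ appears on the path, and I would construct components whose conditional laws at $v$ differ enough (in mean, variance, or parent coefficients) that the mixing over $J$ induces a genuine dependence between $X_A$ and $X_B$ given $X_C$; Example~\ref{example:faithfulness} confirms that the cancellations required to maintain factorization rely on precise algebraic relations among the Gaussian parameters, which fail off a measure-zero set. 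Since $V$ is finite there are only finitely many triples $(A,B,C)$ to consider, so the union of their violation sets remains measure zero, and almost every $\theta\in\Theta$ furnishes a mixture $p_\mu$ faithful to $\cD_\mu$, establishing realizability.
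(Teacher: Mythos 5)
Your proposal follows essentially the same route as the paper: the paper's entire argument for Proposition~\ref{prop_realizability} is the informal paragraph preceding its statement, which restricts to Gaussian mixtures with componentwise-faithful $p\up j$, observes that a violation of mixture faithfulness for a d-connected triple $([A],[B],[C])$ amounts to an equality constraint on the Gaussian parameters, and concludes that such violations occur only on a null set (there is no separate appendix proof). Where you add genuine value is in naming the step the paper elides: an analytic equality constraint cuts out a measure-zero set only if it is not \emph{identically} satisfied on the (connected) parameter space, so one must produce a witness $\theta^\star$ at which the factorization $p_\mu(x_A,x_B\mid x_C)=p_\mu(x_A\mid x_C)\,p_\mu(x_B\mid x_C)$ fails, for each of the finitely many d-connected triples. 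Your case split is the right one, and the first case (a d-connecting path inside a single component, handled by pushing $p_J$ toward a point mass at $j$ and invoking faithfulness of $p\up j$ plus continuity of the factorization defect) is convincing. The second case --- every d-connecting path traverses $y$ --- is where the real content lies, and there your argument is still a sketch: ``construct components whose conditional laws at $v$ differ enough'' and an appeal to Example~\ref{example:faithfulness} do not yet constitute a witness; one would need an explicit computation (e.g.\ two children $u,v$ of $y$ with component-dependent means, showing the induced covariance of $X_u$ and $X_v$ in the mixture is $p_J(1)p_J(2)(\mu_u\up1-\mu_u\up2)(\mu_v\up1-\mu_v\up2)\neq 0$, then propagating along the directed segments of the path to $A$ and $B$ while conditioning on $C$). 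So: same approach as the paper, more honest about where the burden of proof sits, but that burden is not fully discharged in the one case that distinguishes mixture faithfulness from ordinary faithfulness --- a gap your write-up shares with, rather than inherits from, the paper.
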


\section{LEARNING FROM MIXTURE DATA}
\label{section:union}

Without knowing the membership of each sample to a component DAG, we cannot generally learn the structure of $\cD\up j$ for each $j$ from the data. Since the mixing variable is latent, an intuitive approach is to apply FCI to learn a MAG representation of $p_\mu$. In this section, we will characterize the output of FCI. In particular, we will show that FCI identifies critical nodes in the component DAGs: those whose conditionals across the component DAGs vary.

A difficulty for structure discovery using MAG-based learning algorithms such as FCI, is that even under the mixture-faithfulness assumption the conditional independence relations in a mixture distribution $p_\mu$ may not be representable by any MAG. We illustrate this in the following example and then provide conditions to avoid this phenomenon.

\vspace{0.1cm}
\begin{Example}
\label{ex:not_mag}

%\begin{figure}[!t]
%  \begin{subfigure}[t]{0.48\linewidth}
%  \centering
%    \includegraphics[width=\linewidth]{figures/no_mag_mixture.png}
%    \caption{$\cD_\mu$}
%    \label{fig:not_mag_mixture}
%  \end{subfigure}
%  ~
%  \begin{subfigure}[t]{0.49\linewidth}
%  \centering
%    \vspace{-19mm}
%    \includegraphics[width=0.94\linewidth]{figures/no_mag_union.png}
%    \vspace{6mm}
%    \caption{$\cM_\cup$}
%    \label{fig:not_mag_union}
%  \end{subfigure}
%  \caption{An example of a mixture DAG (a) and its associated union graph (b) as defined in~\ref{def:union_graph}. In this case, each DAG $\cD\up j$ has a common topological ordering, however, the union MAG is not ancestral.}
%\end{figure}

Consider $\cD_\mu$ shown in Figure~\ref{fig:not_mag_mixture}. We show that there does not exist any MAG $\widetilde\cM$ over the variables $V=\{1,\dots,5\}$ that satisfies: $A$ d-sep from $B$ given $C$ in $\widetilde\cM$ if and only if $[A]$ d-sep from $[B]$ given $[C]$ in $\cD_\mu$. First, note that such a MAG would need to have the same skeleton as the graph in Figure~\ref{fig:not_mag_union} to respect the adjacencies in $\cM_\mu$. Otherwise it would have an extra or missing d-separation with no analog in $\cM_\mu$. In addition, $\widetilde\cM$ would also need to contain the colliders $4\rightarrow 5 \leftarrow 2$ and $1 \rightarrow 2 \leftarrow 5$ to respect the d-separation relations resulting from  $4\up 2\rightarrow 5\up 2 \leftarrow y \rightarrow 2\up2$ and $1\up1 \rightarrow 2\up 1 \leftarrow y \rightarrow 5\up 1$ respectively. 
This implies the existence of $2\leftrightarrow_{\widetilde\cM} 5$. 
Further note that conditioning on either $[2], [3]$ or $[4]$ (or any subset of these) connects $[5]$ and $[1]$ in $\cD_\mu$ which are d-separated given $\emptyset$. The only orientation of arrowheads compatible with both the skeleton and these separation/connection relations is $2\rightarrow 3\rightarrow 4$. Hence, $4\in \de_{\widetilde \cM}(2)$. Finally, the existence of an arrowhead $4\leftarrowstar 5$ would violate the separation: $[5]$ d-separated from $[1]$ given $\emptyset$.
Hence, $2\leftrightarrow_{\widetilde\cM} 5$ and $2\in\an_{\widetilde\cM}(5)$, violating the ancestral property. \hfill\qed
\end{Example}

We now identify a class of mixture models for which the d-separations in the mixture DAG are equivalent to d-separation statements in a MAG. 

\begin{defn}
\label{def:ordering}
Let $\cM\up j$ be the MAG constructed via Algorithm~\ref{alg:marginalization} from the induced sub-DAG $\widetilde\cD^{(j)}$ defined in Section~\ref{section:mixture_graph}. 
The MAGs $\cM\up 1,\dots,\cM\up K$ are said to be \emph{compatible with the same poset} if there exists a partial order $\pi$ on $V$ such that for all $1\leq j\leq K$ it holds that
   \emph{(a)} $u\in\an_{\cM\up j}(v) \Rightarrow u<_\pi v$; and
    \emph{(b)} $u\leftrightarrow_{\M\up j} v \Rightarrow u \incomp_\pi v.$
\end{defn}

Figures~\ref{fig:ex1_M1} and~\ref{fig:ex1_M2} show examples of MAGs $\cM\up j$ that satisfy this poset compatibility condition. One can further check that the MAGs $\cM\up1$ and $\cM\up 2$ associated with the mixture DAG in Figure~\ref{fig:not_mag_mixture} do not satisfy this condition. 
This example shows that there exist DAGs $\cD\up1,\dots,\cD\up K$ with a common topological ordering whose corresponding MAGs $\cM\up 1,\dots,\cM\up K$ do not satisfy the poset compatibility condition~\ref{def:ordering}. On the other hand, it can be readily verified that the compatibility assumption on $\cM\up1,\dots,\cM\up K$ implies that $\cD\up1,\dots,\cD\up K$ have a common topological ordering.

In the following, we show that poset compatibility ensures that d-separation relations in $\cD_\mu$ are representable by a MAG, which we call the \emph{union graph} since it is obtained as a union of the edges of
$\cM\up 1 \dots,\cM\up K$.

\begin{defn}[Union Graph]
\label{def:union_graph}
The \emph{union graph} $\M_\cup:= (V, D_\cup,B_\cup)$ has vertices $V$, directed edges
   \begin{equation*}
   \begin{aligned}
    D_\cup = \{& v\rightarrow u : u, v \in V,\; \exists_{j} v^j \rightarrow_{\M\up j} u^j\},
    \end{aligned}
    \end{equation*}
    and bidirected edges
   $$B_\cup = \{v\leftrightarrow u : v,u\in V, \; \exists_{j} v^j \leftrightarrow_{\M \up j} u^j\}.$$
\end{defn}

We remark that~\citet{spirtes1994conditional} studied a similar graph and proved the Markov property for a DAG with vertices 
$V \cup \{y\}$ and directed edges given by the union of $\cD\up 1,\dots,\cD\up K$. 

An example of a union graph $\cM_\cup$ is given in Figure~\ref{fig:ex1_union}. In general, $\cM_\cup$ may neither be maximal nor ancestral (see Figure~\ref{fig:not_mag_union} for an example). However, the following lemma states that under poset compatibility it is guaranteed to be both. The proof is given in Section~\ref{section:proof_union_maximal_ancestral} of the Appendix, 
\begin{lemma}
\label{lemma:union_maximal_ancestral}
Under the assumption that $\cM\up 1 \dots,\cM\up K$ are compatible with the same poset, $\cM_\cup$ is a MAG.
\end{lemma}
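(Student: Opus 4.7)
The plan is to verify the two defining properties of a MAG for $\cM_\cup$: that it is ancestral and that it is maximal.

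For the ancestral property, I would argue using the poset $\pi$ directly. Any directed edge $v \to u$ in $\cM_\cup$ arises from $v \to_{\cM\up{j}} u$ for some $j$, so by condition (a) of Definition~\ref{def:ordering} we have $v <_\pi u$. Hence every directed path in $\cM_\cup$ is strictly $\pi$-increasing, which precludes directed cycles. For any bidirected edge $v \leftrightarrow u$ in $\cM_\cup$, condition (b) gives $v \incomp_\pi u$, so no directed path between $u$ and $v$ can exist.

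For maximality, my plan is to show that $\cM_\cup$ coincides with the MAG $\cM^*$ obtained by applying Algorithm~\ref{alg:marginalization} to the ``collapsed DAG'' $\cD^* = (V \cup \{y\}, E^*)$ with $E^* = \bigcup_{j=1}^K E\up{j} \cup \{y \to v : v \in V \setminus V_{\textrm{INV}}\}$. Since poset compatibility forces the $\cD\up{j}$'s to share a common topological ordering (as noted earlier in the excerpt), $\cD^*$ is a DAG, and since $y$ has in-degree zero in $\cD^*$, Proposition~\ref{prop:marginal_mag} gives that $\cM^*$ is a MAG; the lemma then reduces to the graph identity $\cM_\cup = \cM^*$.

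The key structural consequence of poset compatibility that underlies this identity is an ``ancestry consistency'' property: for $u, v \in V \setminus V_{\textrm{INV}}$, if $u \in \an_{\cD^*}(v)$ then $u \in \an_{\widetilde\cD\up{j}}(v)$ for every $j$. I would establish this by contradiction: suppose $u \in \an_{\cD^*}(v)$ via a path using edges from possibly several $\cD\up{j}$'s, but $u \notin \an_{\widetilde\cD\up{j}}(v)$ for some $j$. Then in $\cM\up{j}$ the bidirected edge $u \leftrightarrow v$ produced by Step~1 of Algorithm~\ref{alg:marginalization} survives Step~3 (one also has $v \notin \an_{\widetilde\cD\up{j}}(u)$, for otherwise $\cM\up{j}$ would give $v <_\pi u$, contradicting that every edge of the $\cD^*$-path is $\pi$-increasing by (a)). Hence $u \leftrightarrow v$ is present in $\cM_\cup$, so $u \incomp_\pi v$ by (b), while the $\cD^*$-path gives $u <_\pi v$, a contradiction.

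With ancestry consistency in hand, $\cM_\cup = \cM^*$ is verified edge by edge: bidirected edges of $\cM^*$ are exactly those pairs in $V \setminus V_{\textrm{INV}}$ with no $\cD^*$-ancestry in either direction, which matches bidirected edges present in every $\cM\up{j}$ and hence in $\cM_\cup$; directed edges of $\cM^*$ (as original $\cD^*$-edges, Step~3 conversions, or Step~2 additions) each correspond to the analogous construction in some single $\cM\up{j}$ once ancestry consistency is available. The main obstacle is precisely the ancestry consistency argument above, since it requires combining both conditions (a) and (b) of poset compatibility with the detailed output structure of Algorithm~\ref{alg:marginalization}; once it is in place, the MAG property of $\cM_\cup$ reduces cleanly to Proposition~\ref{prop:marginal_mag}.
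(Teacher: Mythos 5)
Your treatment of the ancestral property matches the paper's, which simply notes that it follows directly from conditions (a) and (b) of Definition~\ref{def:ordering}. For maximality you take a genuinely different route. The paper's argument is three lines long: it uses the inducing-path characterization of maximality and observes that a chain $v_1\leftrightarrow v_2\leftrightarrow\dots\leftrightarrow v_n$ in $\cM_\cup$ forces every $v_m$ into $V\setminus V_{\textrm{INV}}$, hence every copy $v_m\up j$ is a child of $y$ in every component $\widetilde\cD\up j$, so Step~1 of Algorithm~\ref{alg:marginalization} makes $v_1\up j$ and $v_n\up j$ adjacent in each $\cM\up j$ and therefore $v_1$ and $v_n$ adjacent in $\cM_\cup$; no inducing path can exist. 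You instead establish the stronger structural identity $\cM_\cup=\cM^*$, where $\cM^*$ is the marginal MAG of the collapsed DAG $\cD^*$, and then invoke Proposition~\ref{prop:marginal_mag}. This hinges on your ancestry-consistency lemma, and your proof of it by contradiction --- combining condition (a) applied to the edges of the $\cD^*$-path with condition (b) applied to the surviving bidirected edge --- is correct; it is also the right pivot for the edge-by-edge verification, since it shows that under poset compatibility ``bidirected in some $\cM\up j$'' and ``bidirected in every $\cM\up j$'' coincide, and that Step~2 and Step~3 edges of $\cM^*$ are reproduced inside a single component. Your approach costs substantially more work than the paper's, but it buys a reusable fact the paper's argument does not provide: the union graph is itself the marginalization of a single DAG over the latent node $y$, not merely a maximal ancestral graph.
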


%The fact that it is ancestral under the poset compatibility assumption can be readily seen. We defer the proof of maximality to the appendix since it requires introducing the concept of \emph{inducing paths}, which are slightly tangential to our treatment here.

We now state the main results of this section, characterizing the output of FCI when run on mixtures of DAGs. 
%We state the main theorem that claims that the d-separations in $\cD_\mu$ agree with the d-separations in $\cM_\cup$. The proof of this theorem is left to the appendix.

\begin{thm}
\label{thm:mixture-union}
Let $A,B,C \subseteq V$ be disjoint. If the component MAGs satisfy the poset compatibility assumption, then 
$A$ and $B$ are d-separated given $C$ in $\cM_\cup$ if and only if 
$[A]$ and $[B]$ are d-separated given $[C]$ in $\cD_\mu$.
\end{thm}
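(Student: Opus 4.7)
The plan is to recast the theorem as a comparison between $\cM_\cup$ and the marginal MAG of $\cD_\mu$ with respect to $y$, and then exhibit a path-based correspondence under poset compatibility. Since $y$ has in-degree zero in $\cD_\mu$, Algorithm~\ref{alg:marginalization} applies and produces a MAG $\cM_*$ on vertex set $[V]$ (by Proposition~\ref{prop:marginal_mag}). By the standard marginalization result for ancestral graphs cited in the preliminaries, the d-separation relations of $\cD_\mu$ over subsets of $[V]$ coincide with those of $\cM_*$. The theorem thus reduces to showing that under poset compatibility, $A \perp B \mid C$ in $\cM_\cup$ iff $[A] \perp [B] \mid [C]$ in $\cM_*$.

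I would next give an explicit description of $\cM_*$. Since $\cD_\mu$ has no edges between distinct copies $V\up j, V\up k$ and the ancestry of any $v\up j$ in $\cD_\mu$ is confined to $V\up j \cup \{y\}$, Steps~2 and~3 of Algorithm~\ref{alg:marginalization} act only within single copies. Hence $\cM_*$ restricted to $V\up j$ equals $\cM\up j$, there are no directed edges across copies, and the cross-copy bidirected edges are exactly the pairs $v\up j \leftrightarrow u\up k$ with $v, u \in V \setminus V_{\text{INV}}$ and $j \neq k$.

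For the direction $(\Leftarrow)$, I would take a d-connecting path $\tau = x_0, \ldots, x_\ell$ in $\cM_\cup$ given $C$ and construct a d-connecting walk in $\cM_*$ between $[A]$ and $[B]$ given $[C]$. For each edge $e_s$ of $\tau$, pick an index $j_s$ such that $e_s$ appears in $\cM\up{j_s}$ with the same orientation; poset compatibility ensures these choices cannot conflict across $\cM\up k$'s. Traverse $e_s$ inside copy $j_s$, and whenever $j_{s-1} \neq j_s$ at an interior vertex $x_s$, insert the bidirected transition $x_s\up{j_{s-1}} \leftrightarrow x_s\up{j_s}$ in $\cM_*$. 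Such a transition exists in $\cM_*$ only when $x_s$ is varying; I would argue that at invariant $x_s$ the indices can be chosen to match, since invariant nodes have the same local neighborhood in every $\cM\up k$. A direct verification then shows each inserted collider is active in $[C]$ whenever $x_s$ was active given $C$ in $\cM_\cup$.

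For the direction $(\Rightarrow)$, I would start from a d-connecting path $\sigma$ in $\cM_*$ and project it by $v\up k \mapsto v$. Within-copy edges project to edges of the same type in $\cM_\cup$. A cross-copy bidirected edge $v\up i \leftrightarrow u\up j$ projects to $v \leftrightarrow u$ in $\cM_\cup$ when $v \incomp_\pi u$, preserving the collider structure, and to a directed edge when $v, u$ are comparable in $\pi$. The delicate case is the latter: a collider at a projected endpoint may become a non-collider and block the projection. To resolve it I would invoke the shortcut edges created by Step~2 of Algorithm~\ref{alg:marginalization}: under poset compatibility, $v <_\pi u$ forces $v \in \an_{\cD\up k}(u)$ in every $\cD\up k$ where both are varying, so every parent of $v$ in $\cD\up k$ becomes directly adjacent to $u$ in $\cM\up k$, and hence in $\cM_\cup$. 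Using these shortcuts, one can iteratively re-route $\sigma$ to bypass each problematic cross-copy transition. The main obstacle is precisely this re-routing: a systematic procedure---most likely by induction on the number of cross-copy edges that change type under projection---must use the Step~2 shortcut edges to preserve d-connection whenever projection would convert a collider into a non-collider, and tracking the activation status at each intermediate node through the re-routing is the most technically demanding part of the argument.
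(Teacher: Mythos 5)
Your reduction to the marginal MAG of $\cD_\mu$ with respect to $y$ and your structural description of that graph (copy $j$ restricts to $\cM\up j$; the only cross-copy edges are bidirected edges between copies of nodes in $V\setminus V_{\textrm{INV}}$) are both correct and match the paper's setup, which calls this graph $\cM_\mu$. The gap is in the path surgery, in both directions. For the passage from a connecting path in $\cM_\cup$ to one in $\cM_\mu$, your rule of assigning each edge a copy and inserting $x_s\up{j_{s-1}}\leftrightarrow x_s\up{j_s}$ at every index change does not preserve d-connection: if the original path has a non-collider configuration $x_{s-1}\rightarrow x_s\rightarrow x_{s+1}$ with $x_s\notin C$ and the two edges are realized only in different copies, the inserted bidirected edge turns $x_s\up{j_{s-1}}$ into a collider outside $[C]$ and blocks the walk, so the ``direct verification'' you invoke fails exactly there. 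Your fallback claim that invariant nodes have the same local neighborhood in every $\cM\up k$ is also false for out-edges: $v\rightarrow w$ can be present in $\cD\up 1$ and absent in $\cD\up 2$ while $v\in V_{\textrm{INV}}$ (only $w$ is then forced to be varying). The paper's proof of this direction (Lemma~\ref{lemma:first}) is an induction on path length with a case analysis on the marks at the transition vertex, and it resolves these configurations not by routing through the same-vertex cross-copy edge but by finding shortcut edges guaranteed by Lemmas~\ref{lemma:changing_arrowtips} and~\ref{lemma:bidirected_district} (e.g., a bidirected edge $\beta\up j\leftrightarrow b\up k$ that skips the blocked segment), together with arguments that poset compatibility forces certain directed edges to exist in every copy.

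For the converse direction you explicitly defer the re-routing procedure, and that is where the paper's key structural idea lives: Lemma~\ref{lemma:one_bidirected_edge} shows that any d-connecting path in $\cM_\mu$ between $a\up i$ and $b\up k$ given $[C]$ can be replaced by one containing at most one bidirected edge, hence at most one cross-copy transition. This normal form is what makes the projection to $\cM_\cup$ tractable: Lemma~\ref{lemma:second} then only has to repair a single junction, again using the Step~2 shortcut edges of Algorithm~\ref{alg:marginalization} and the district lemma. Your observation that $v<_\pi u$ with both nodes varying forces $v\up k\rightarrow_{\cM\up k}u\up k$ in every copy is correct and is indeed exploited in the paper's case analysis, but without the one-bidirected-edge normal form and the explicit collider-status bookkeeping, the proposal as written does not establish either direction. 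A minor further omission: the paper also invokes compositionality of d-separation in MAGs to pass from single-vertex separation statements to separation of the sets $A$, $B$; your first direction works with sets directly, but the projection direction should address this.
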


The proof is provided in Section~\ref{section:proof_mixture-union} in the Appendix. The following corollary follows directly from the asymptotic consistency of FCI~\cite{spirtes2000causation}.

\begin{cor}
\label{cor:fci}
    If the distribution $p_\mu$ is faithful with respect to a mixture DAG whose component MAGs satisfy the poset compatibility assumption~\ref{def:ordering}, then FCI outputs the Markov equivalence class of the corresponding union MAG $\cM_\cup$.
\end{cor}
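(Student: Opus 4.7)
The plan is to chain three ingredients that are already at our disposal: the Markov property of $\cD_\mu$ (Theorem~\ref{thm:markov-mixture}), the equivalence of d-separations in $\cD_\mu$ and in $\cM_\cup$ under poset compatibility (Theorem~\ref{thm:mixture-union}), and the asymptotic consistency of FCI for MAGs. The key observation is that, together, the first two give an exact correspondence between conditional independence relations in $p_\mu$ and d-separation relations in the single MAG $\cM_\cup$ on the observed variables $V$, which is precisely the setting in which FCI is known to be consistent.

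First I would verify that $p_\mu$ is both Markov and faithful with respect to $\cM_\cup$. For disjoint $A,B,C\subseteq V$, Theorem~\ref{thm:mixture-union} gives that $A$ and $B$ are d-separated given $C$ in $\cM_\cup$ if and only if $[A]$ and $[B]$ are d-separated given $[C]$ in $\cD_\mu$. Composing with Theorem~\ref{thm:markov-mixture} yields the forward implication: any d-separation in $\cM_\cup$ corresponds to a conditional independence in $p_\mu$. The converse direction is exactly the mixture-faithfulness hypothesis (Definition~\ref{faithfulness:mixture}): $X_A\ci X_B\mid X_C$ in $p_\mu$ implies $[A]$ d-sep $[B]$ given $[C]$ in $\cD_\mu$, and then Theorem~\ref{thm:mixture-union} translates this into $A$ d-sep $B$ given $C$ in $\cM_\cup$. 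Thus conditional independence in $p_\mu$ and d-separation in $\cM_\cup$ coincide on the observed variables.

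Next I would invoke Lemma~\ref{lemma:union_maximal_ancestral}, which under the poset compatibility assumption guarantees that $\cM_\cup$ is indeed a MAG (rather than a possibly non-maximal or non-ancestral mixed graph). This is what allows one to apply the standard consistency theorem for FCI~\cite{spirtes2000causation,zhang2008causal}: given oracle access to the conditional independence relations of a distribution that is Markov and faithful with respect to a MAG $\cM$, FCI outputs the PAG representing the Markov equivalence class of $\cM$. Combining this with the equivalence established in the previous paragraph, FCI applied to $p_\mu$ returns the PAG of the Markov equivalence class of $\cM_\cup$.

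I do not anticipate a substantive obstacle here, since the work has already been done in Theorems~\ref{thm:markov-mixture} and~\ref{thm:mixture-union} and Lemma~\ref{lemma:union_maximal_ancestral}; the corollary is essentially bookkeeping. The only subtlety worth spelling out, should a reviewer object, is that ``asymptotic consistency'' refers to the oracle/large-sample regime in which conditional independence tests recover the true CI relations of $p_\mu$; beyond that, the argument is a direct appeal to the standard FCI consistency theorem applied to the MAG $\cM_\cup$.
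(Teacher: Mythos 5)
Your proposal is correct and follows essentially the same route as the paper, which simply notes that the corollary ``follows directly from the asymptotic consistency of FCI'' given Theorem~\ref{thm:mixture-union}; you have merely made explicit the chain (Markov property via Theorem~\ref{thm:markov-mixture}, faithfulness via Definition~\ref{faithfulness:mixture}, the d-separation equivalence of Theorem~\ref{thm:mixture-union}, and the MAG property from Lemma~\ref{lemma:union_maximal_ancestral}) that the paper leaves implicit. No gaps.
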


We end this section by pointing out an important structural property of $\cM_\cup$, which can be used to recover key information about the component distributions in the mixture.
We leave the proof to Section~\ref{section:appendix_varying_proof} of the Appendix.
\begin{prop}
\label{prop:varying}
A bidirected edge $u\leftrightarrow v$ in the union graph $\cM_\cup$ implies that $u\in V\setminus V_{\textrm{INV}}$. Additionally, this implies that
$p\up j(x_u| x_{\pa_{\cD\up j}(u)})\neq p\up i(x_u | x_{\pa_{\cD\up i}(u)})$.
%, i.e., there exist $i,j\in\{1,2,\dots,K\}$ such that $\pa_{\cD\up j}(u)\neq \pa_{\cD\up i}(u)$.
\end{prop}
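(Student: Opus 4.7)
The strategy is to unfold the construction of $\cM_\cup$ layer by layer and trace where a bidirected edge can possibly originate. By Definition~\ref{def:union_graph}, a bidirected edge $u\leftrightarrow v$ in $\cM_\cup$ exists only if there is some index $j$ with $u\up j \leftrightarrow_{\cM\up j} v\up j$, so it suffices to analyze when such an edge can appear in a component MAG $\cM\up j$.

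The next step is to inspect Algorithm~\ref{alg:marginalization}, which produces $\cM\up j$ from the induced sub-DAG $\widetilde\cD\up j$. The only step of that algorithm that \emph{introduces} bidirected edges is step~(1), which adds $u\up j \leftrightarrow v\up j$ precisely when both $u\up j$ and $v\up j$ lie in $\ch_{\widetilde\cD\up j}(y)$; step~(2) adds only directed edges, and step~(3) can only remove bidirected edges (never create new ones). Consequently, the existence of $u\up j \leftrightarrow_{\cM\up j} v\up j$ forces $u\up j, v\up j \in \ch_{\widetilde\cD\up j}(y)$.

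I would then invoke Definition~\ref{mixture:DAG:defn}: in $\cD_\mu$ the children of $y$ are exactly $\{w\up k : w\in V\setminus V_{\textrm{INV}},\; 1\le k\le K\}$, a property inherited by the induced sub-DAG $\widetilde\cD\up j$. Hence $u\up j \in \ch_{\widetilde\cD\up j}(y)$ implies $u\in V\setminus V_{\textrm{INV}}$, which proves the first claim (and, by symmetry, also $v\in V\setminus V_{\textrm{INV}}$). The second assertion then follows immediately from the definition of $V_{\textrm{INV}}$ in~(\ref{invariant:nodes}): belonging to $V\setminus V_{\textrm{INV}}$ is, by definition, precisely the statement that the conditional $p\up j(x_u\mid x_{\pa_{\cD\up j}(u)})$ is not the same function of its arguments for all $j$, so there exist indices $i,j$ with $p\up i(x_u\mid x_{\pa_{\cD\up i}(u)}) \neq p\up j(x_u\mid x_{\pa_{\cD\up j}(u)})$.

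I do not anticipate any serious obstacle here: the only point requiring explicit care is the verification that steps~(2) and~(3) of Algorithm~\ref{alg:marginalization} cannot create fresh bidirected edges, so that every bidirected edge surviving in $\cM\up j$ is traceable to step~(1) and hence to a pair of children of $y$. Once this bookkeeping is in place, the rest of the argument is a direct unfolding of the definitions of $\cM_\cup$, $\cD_\mu$, and $V_{\textrm{INV}}$.
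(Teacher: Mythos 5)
Your proof is correct and follows essentially the same route as the paper's: trace the bidirected edge back through Definition~\ref{def:union_graph} to some component MAG $\cM\up j$, observe (via Algorithm~\ref{alg:marginalization}) that this forces $u\up j, v\up j \in \ch(y)$, i.e.\ $u\leftarrow y\rightarrow v$ in $\cD_\mu$, and conclude $u\in V\setminus V_{\textrm{INV}}$ from Definition~\ref{mixture:DAG:defn} and the definition of $V_{\textrm{INV}}$. Your version merely makes explicit the bookkeeping (that steps~(2) and~(3) of the algorithm cannot create fresh bidirected edges) which the paper's one-line proof leaves implicit.
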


Hence bidirected edges identify nodes in the component DAGs whose conditional distribution varies across mixture components. 
%A consequence of Proposition \ref{prop:varying} is that the incoming arrows of nodes connected with bidirected edges will differ across the component DAGs in the mixture. This informs us of the variables with varying causes across the generating distributions involved in the causal mixture model. 
As we show in the following section, these nodes are natural candidates for features when clustering. %With these established properties of the union graph, we investigate learning this graph from mixture data using FCI. 

\section{EXPERIMENTS}
\label{section:experiments}
\begin{figure*}[t!]
  \centering
  \begin{subfigure}[t]{0.246\linewidth}
  \centering
    \includegraphics[width=0.8\linewidth]{figures/no_mag_mixture.png}
    \caption{}
    \label{fig:not_mag_mixture}
  \end{subfigure}
  \begin{subfigure}[t]{0.246\linewidth}
  \centering
    \includegraphics[width=0.8\linewidth]{figures/no_mag_union.png}
    \caption{}
    \label{fig:not_mag_union}
  \end{subfigure}
  \begin{subfigure}[t]{0.246\textwidth}
    \centering
    \includegraphics[width=0.9\linewidth]{figures/apoptosis_fci.png}
    \caption{}
  \label{fig:apoptosis_fci}
  \end{subfigure}
  \begin{subfigure}[t]{0.246\textwidth}
    \centering
    \includegraphics[width=0.9\linewidth]{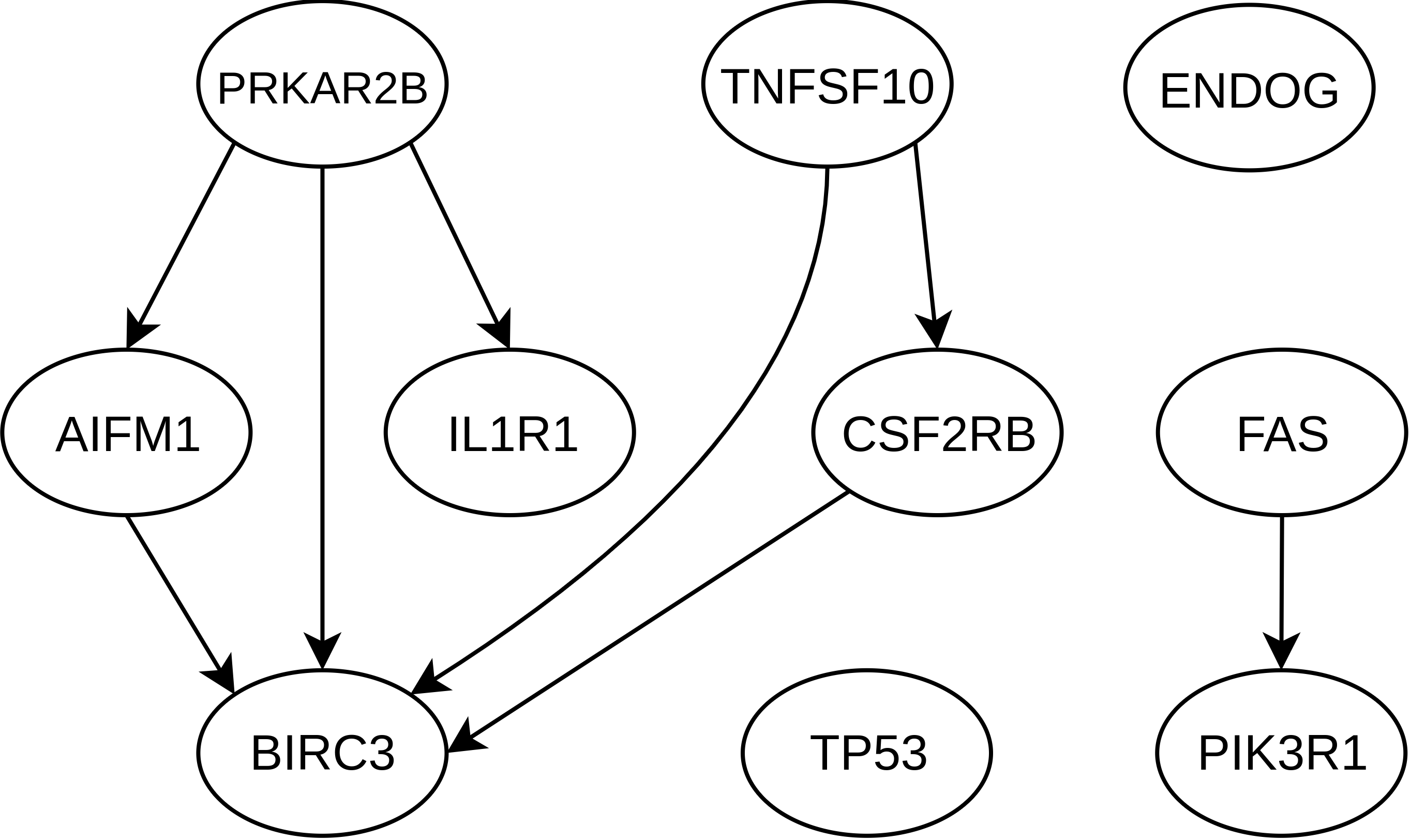}
    \caption{}
  \label{fig:apoptosis_dci}
  \end{subfigure}
  \\ \vspace{0.3cm}
    \begin{subfigure}[t]{0.246\textwidth}
    \includegraphics[width=\linewidth]{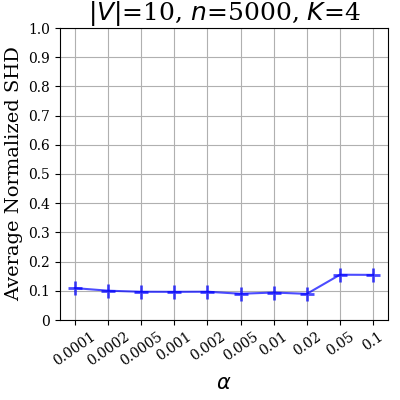}
  \caption{}
  \label{fig:shd}
  \end{subfigure}
  \begin{subfigure}[t]{0.246\textwidth}
    \includegraphics[width=\linewidth]{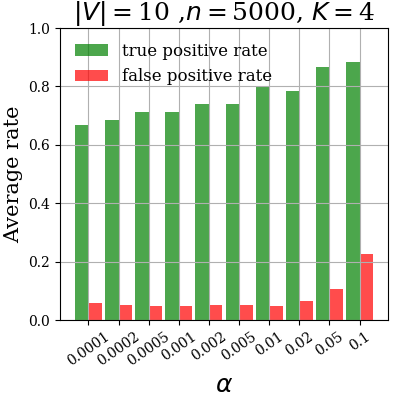}
    \caption{}
  \label{fig:varying}
  \end{subfigure}
  \begin{subfigure}[t]{0.246\textwidth}
    \includegraphics[width=\linewidth]{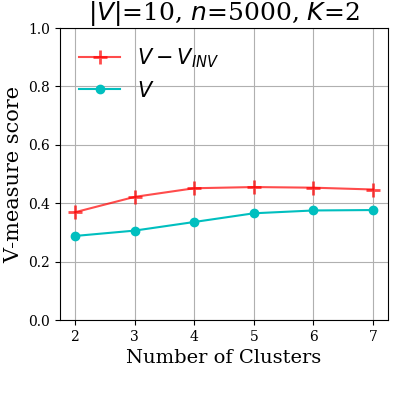}
    \caption{}
    \label{fig:clusteringa}
  \end{subfigure}
  \begin{subfigure}[t]{0.246\textwidth}
    \includegraphics[width=\linewidth]{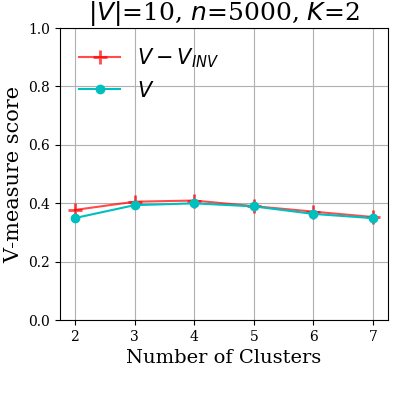}
    \caption{}
    \label{fig:clusteringb}
  \end{subfigure}
  \caption{
  (a) shows a mixture DAG $\cD_\mu$ and (b) shows the associated union graph $\cM_\cup$. In this model, each DAG $\cD\up j$ has a common topological ordering, however, the union MAG is not ancestral;
  (c) shows the output of FCI  on genes in the apoptosis pathway using mixture data without knowledge of the cluster membership for each sample, while (d) shows the difference graph of~\citet{wang2018direct} on the same genes learned when cluster membership of each sample is known;
    (e) shows the average normalized SHD between the PAG $\widehat\cP_\cup$ estimated using the mixture data, and $\widetilde\cP_\cup$ estimated using data sampled from $\cM_\cup$;
  (f) shows the true and false positive rate in estimating $V\setminus V_{\textrm{INV}}$;
  (g) shows the performance of clustering when the set $[V\setminus V_{\textrm{INV}}]$ has no descendants in $\cD_\mu$, while 
  (h) shows the same plot when $[V\setminus V_{\textrm{INV}}]$ has descendants in $\cD_\mu$. 
  }
  \label{fig:TBA}
\end{figure*}

%In this section, we present results of experiments conducted on both synthetic and real data. 

\subsection{Synthetic Data}
In the following, we demonstrate the effectiveness of learning the union graph from mixture data, analyze the performance when estimating $V\setminus V_{\textrm INV}$ using Proposition~\ref{prop:varying}, and investigate the performance of clustering using mixture data when $V\setminus V_{\textrm INV}$ are used as features.

We generated $K$ component DAGs each with $|V| = 10$ nodes and the same topological ordering from an Erd\"os-R\'enyi model with expected degree $d=1.5/K$ so that the nodes in the $\cM_\cup$ have expected degree less than $1.5$.
From these DAGs, the corresponding MAGs $\cM\up j$ were computed using Algorithm~\ref{alg:marginalization}. If the MAGs were not compatible with the same poset, the DAGs were discarded to ensure poset-compatibility ($2$ out of $270$ graphs were discarded).
%at this sparsity level). %\textcolor{blue}{The faction of DAGs discarded was} \textcolor{red}{ ???}.
%realization was discarded and a new set of $K$ adjacency matrices $B$ were generated to enforce the ordering compatbility assumption of~\ref{def:ordering}.

Data was sampled from each DAG based on a linear structural equation model with additive  Gaussian noise, where each edge weight $(u,v)$ was sampled uniformly in $[-2,-0.25] \cup [0.25, 2]$ (to ensure that it was bounded away from zero) and set to be equal for the edges $(u\up j,v\up j)$ for all $1\le j\le K$ if this edge existed in DAG $\cD\up j$. In this case, $v\in V_{INV}$ if and only if the parents of $X_v$ are the same across all $K$ DAGs. The mean for the Gaussian noise was sampled uniformly in $[-2,2]$ with standard deviation 1. From each DAG $\cD\up j$, we generated $n\, p_j$ observations where $\sum_{j=1}^K p_j = 1$ yielding a total of $n$ samples. For the plots in the main paper, we chose $p_j = 1/K$. We present additional plots in Appendix~\ref{appendix:experiments} for when $p = (p_k : 1\le k\le K)$ is sampled from a Dirichlet distribution.

\textbf{Learning the Union MAG}.
To evaluate Corollary~\ref{cor:fci}, we ran the R implementation of FCI from the \verb|pcalg| library on this synthetic data using Gaussian conditional independence tests (despite the true distribution being a mixture of Gaussians) with threshold $\alpha$. The output is a PAG $\widehat\cP_{\cup}$ representing the Markov equivalence class of the union graph. 
As comparison, we computed the true union graph $\cM_\cup$ based on the MAGs $\cM\up j$, generated $n$ samples from this graph (using a structural equation model with the same  parameters as in the mixture) and ran FCI on these samples to obtain an estimate $\widetilde\cP_\cup$ for the PAG of the union graph.
This offsets the estimation errors that are intrinsic to FCI. %The samples from the union graph were generated using a structural equation model with the same  parameters used in the mixture above, and the same edge weights for the directed edges. To account for a bidirected edge $u\leftrightarrow v$ in the union, we set the correlation of the corresponding noise terms to $0.5$.
%in estimating this PAG, we computed the true union graph $\cM_\cup$ based on the MAGs $\cM\up j$ obtained from the generation process described above according to the definition of the union graph in ~\ref{def:union_graph}, and then generated $n$ samples from this union graph and ran FCI on these samples to obtain a different estimate $\widetilde\cP_\cup$ for the PAG of the union graph. 
The difference between the PAGs $\widehat\cP_\cup$ and $\widetilde\cP_\cup$ was measured via a \emph{normalized structural Hamming distance};
the structural Hamming distance (SHD) between PAGs counts the occurrences of $\starrightarrow$ in one of the PAGs versus $\startail$ in the other, plus the number of adjacencies present in one graph but not the other.
The normalization is done by dividing over the possible number of errors for the realization at hand to keep the value in $[0,1]$ and make the numbers comparable.
Figure~\ref{fig:shd} shows the normalized SHD averaged over 30 realizations of synthetic datasets. We used $K=4$ and $n=5000$ in this plot; in Section~\ref{appendix:experiments} in the Appendix, we provide plots for $K\in\{2,6\}$ and $n\in\{1000,10000\}$.

\textbf{Identifying Nodes in $V\setminus V_{\textrm{INV}}$.}
To evaluate Proposition~\ref{prop:varying}, we estimated $ V\setminus V_{\textrm{INV}}$ by determining all nodes incident to bidirected edges in the PAG $\widehat\cP_\cup$ estimated using FCI. This set was compared to the ground truth; Figure~\ref{fig:varying} shows true positive and false positive rates for varying significance levels\footnote{We do not use ROC plots since while increasing the threshold  monotonically increases the true positive rate of the estimated adjacencies, it generally does not monotonically increase the number of correctly inferred edge orientations.},
%\footnote{We do not use an ROC since increasing $\alpha$ is not guaranteed to increase the number of correctly inferred edge orientations.}
averaged over $30$ realizations. We used $K=4$ and $n=5000$ in this plot. In Section~\ref{appendix:experiments} in the Appendix, we show plots for $K\in\{2,6\}$ and $n\in\{1000,10000\}$. 

\textbf{Clustering.}
Under mixture-faithfulness, $X_{V\setminus V_{\textrm{INV}}}$ represents the set of nodes whose conditionals vary across the component DAGs. %Hence, a variable $X_v$ will have different marginals $p\up i(x_v)\neq p\up j(x_v)$ only if there exists a $1\le k\le K$ such that $v\up k\in\de_{\cD_\mu}([V\setminus V_{\textrm{INV}}]).$ 
This motivates using the nodes $X_{V\setminus V_{\textrm{INV}}}$ and their descendents % $\{X_v: v\in \de_{\cD_\mu}([V\setminus V_{\textrm{INV}}])\}$ 
as features for clustering since these are the only nodes with different marginals across the mixture components. % based on their component membership. %$\de(X)$ often includes $X$. If you think this will be confusing, we can just use $\cup X$ above.
Since FCI generally cannot identify all the descendents of $X_{V\setminus V_{\textrm{INV}}}$, we used only $X_{V\setminus V_{\textrm{INV}}}$ for clustering.  
%However, in general, estimating these descendants using FCI is difficult even when $V\setminus V_{\textrm{INV}}$ is estimated correctly since this requires FCI to orient the directed paths outward from $V\setminus V_{\textrm{INV}}$ properly. 
As a proof-of-concept demonstrating that these features can be useful, we considered two settings, one in which $[V\setminus V_{\textrm{INV}}]$ has no descendants in $\cD_\mu$ (see Figure~\ref{fig:clusteringa}), and another one in which this set has descendants (Figure~\ref{fig:clusteringb}). 

In both settings, we used $\widetilde K$\emph{-means} clustering  for various values of $\tilde{K}$. % as show in the figure.
To compare the quality of clustering using $[V\setminus V_{\textrm{INV}}]$ versus all nodes as features, we used the V-measure score from~\citet{rosenberg2007v} which is based on ground truth cluster assignments;
a higher score represents better performance.
As per what is expected from our theoretical results,  Figure~\ref{fig:clusteringa} shows that clustering based on the reduced number of features $[V\setminus V_{\textrm{INV}}]$ results in higher quality clusters as compared to using all features for clustering in the setting where 
$[V\setminus V_{\textrm{INV}}]$ has no descendants in $\cD_\mu$, while otherwise both feature sets perform equally.
%has no descendants in $\cD_\mu$, clustering based on $\de_{\cD_\mu}([V\setminus V_{\textrm{INV}}])$ results in better performance.
%Meanwhile, using this reduced number of features in the setting where $V\setminus V_{\textrm{INV}}$ has descendants does not improve clustering.

\subsection{Real Data}
\label{section:real_data}

\textbf{Ovarian Cancer.} We applied this framework to gene expression data from ovarian cancer in $K=2$ patient groups (with 93 and 168 observations, respectively) with different survival rates~\cite{tothill2008novel}. We followed the analysis of~\citet{wang2018direct}, where the difference-DAG was estimated for the two groups based on the apoptosis pathway consisting of $|V|=10$ genes. The resulting difference-DAG is shown in Figure~\ref{fig:apoptosis_dci}. While the difference-DAG can identify edges that are different between the two DAGs $\cD\up 1$ and $\cD\up 2$ and hence provides more information than the union graph, computing the difference-DAG requires knowledge of the membership of each observation to the two disease subgroups, which is not available for many diseases. The estimated PAG $\widetilde\cP_\cup$ based on the combined samples from the two patient groups is shown in Figure~\ref{fig:apoptosis_fci}. It was estimated using FCI with stability selection. % over the set $\{1\textrm{e}-4,2\textrm{e}-4,5\textrm{e}-4,1\textrm{e}-3,\dots,1\textrm{e}-1\}$ as thresholds for $\alpha$. 
FCI identified BIRC3 as the node with the highest number of incident bidirected edges; BIRC3 is known to be one of the major disregulated genes in ovarian cancer and an inhibitor of apoptosis~\cite{johnstone2008trail,jonsson2014distinct}.

%We consider $|V| = 10$ genes in the Apoptosis pathway, with 93 observations in the first subgroup ($j=1$) and 168 in the second $(j=2)$. We concatenated the data sets and used FCI on the mixed data set to estimate the PAG $\widetilde\cP_\cup$ in order to infer $V\setminus V_{\textrm INV}$. Figure~\ref{fig:apoptosis_fci} shows the resulting output of FCI. \citet{wang2018direct} estimate a \emph{difference DAG} over these genes. The difference DAG gives more information about the nodes since it identifies the edges that are different between the two DAGs $\cD\up 1$ and $\cD\up 2$, however, it requires knowledge of the membership of the observations. The difference DAG for this pathway is shown in~\ref{fig:apoptosis_dci}.

%FCI identifies BIRC3 with the highest number of incident bidirected edges, which has been to shown to be one of the major disregulated genes in ovarian cancer and a regulator on this pathway~\cite{johnstone2008trail,jonsson2014distinct}.
%\textcolor{green}{What else should we say here?}

\textbf{T cell activation.} We also applied our framework to single-cell gene expression data of naive and activated T cells (i.e.~$K=2$, with 298 and 377 samples, respectively) from~\citet{singer2016distinct}. 
Following the analysis in~\citet{wang2018direct}, we performed the analysis on 60 genes that had a fold expression change above 10. The FCI output on these 60 nodes is shown in Section~\ref{section:tcells_fci} in the Appendix.  The following nodes have the highest number of incident bidirected edges, indicating that they may play important roles in T cell activation: CDC6, CDC20, SHCBP1, NKG2A, GZMB4 and KIF2C. All these genes have been discribed before as critical:
CDC6 and CDC20 are essential regulators of the cell division cycle. Shorter cell cycle time for increased proliferation is a hallmark of T cell activation ~\cite{qiao2016mechanism, borlado2008cdc6}. SHCBP1 has been shown to be tightly linked to cell proliferation and strongly correlates with proliferative stages of T cell development~\cite{schmandt1999cloning,buckley2014unexpected}. NKG2A functions to limit excessive activation, prevent apoptosis, and preserve the specific T cell response~\cite{rapaport2015inhibitory}.
GZMB4 has been shown to regulate antiviral T cell response~\cite{salti2011granzyme}.
Finally, the gene KIF2C encodes a Kinesin-like protein that functions as a microtubule-dependent molecular motor. It is over-expressed in a variety of solid tumors and induces frequent T cell responses~\cite{gnjatic2010ny}.

\section{DISCUSSION}
In this paper, we provided a graphical representation (via the mixture DAG) of distributions that arise as  mixtures of causal DAGs. We showed that the mixture DAG not only satisfies the Markov property with respect to such mixture distributions, but is also always realizable by a mixture distribution, meaning that it cannot be made sparser without losing the Markov property. In addition, we characterized  the output of the prominent FCI algorithm
when applied to data from such mixture distributions. FCI is a natural candidate in this setting due to the presence of the latent mixing variable. We proved that FCI can identify variables whose conditionals vary across the different components and showed how this property can be used to infer cluster membership of samples. This is relevant for many applications, as for example when studying  diseases consisting of multiple not well characterized subtypes. In such studies, genomic perturbation experiments can now be performed rela- tively routinely, leading to high-throughput interventional data. In future work it would be interesting to study how interventional data could be used to enhance causal inference based on mixtures of DAGs or which interventions to perform in order to enhance identifiability of pathways that are shared among the different subtypes as well as those that are different across the subtypes for personalized interventions.

%that satisfies a global Markov property, and showed that this representation is always realizable by some mixture distribution. Furthermore, we studied the output of FCI on the mixture distribution and characterized conditions under this output is interpretable.  Finally, we applied our theoretical results to synthetic data and demonstrated its applicability on real genomics data in finding genes that could be a target for future international experiments. \textcolor{red}{Interventional data is available, how can this be used for causal structure discovery, interventions can be performed relatively easily in genomics, experiemental design question of which interventions to perform in order to determine class labels to infer the pathways that are causal for a particular disease and shared among the various subtypes as well as the differences between the subtypes for personalized interventions.}

%ICML
\subsubsection*{Acknowledgements}
Basil Saeed was partially supported by the Abdul Latif Jameel Clinic for Machine Learning in Health at MIT. Caroline Uhler was partially supported by NSF (DMS-1651995), ONR (N00014-17-1-2147 and N00014-18-1-2765), IBM, and a Simons Investigator Award.

\bibliography{references}

\begin{thebibliography}{}

\bibitem[Borlado and M{\'e}ndez, 2008]{borlado2008cdc6}
Borlado, L.~R. and M{\'e}ndez, J. (2008).
\newblock {CDC6: from DNA replication to cell cycle checkpoints and
  oncogenesis}.
\newblock {\em Carcinogenesis}, 29(2):237--243.

\bibitem[Buckley et~al., 2014]{buckley2014unexpected}
Buckley, M.~W., Arandjelovic, S., Trampont, P.~C., Kim, T.~S., Braciale, T.~J.,
  and Ravichandran, K.~S. (2014).
\newblock {Unexpected phenotype of mice lacking SHCBP1, a protein induced
  during T cell proliferation}.
\newblock {\em PloS ONE}, 9(8).

\bibitem[Chickering, 2002]{chickering2002optimal}
Chickering, D.~M. (2002).
\newblock Optimal structure identification with greedy search.
\newblock {\em Journal of Machine Learning Research}, 3(Nov):507--554.

\bibitem[Chu et~al., 2003]{chu2003statistical}
Chu, T., Glymour, C., Scheines, R., and Spirtes, P. (2003).
\newblock A statistical problem for inference to regulatory structure from
  associations of gene expression measurements with microarrays.
\newblock {\em Bioinformatics}, 19(9):1147--1152.

\bibitem[Friedman et~al., 2000]{friedman2000using}
Friedman, N., Linial, M., Nachman, I., and Pe'er, D. (2000).
\newblock Using {B}ayesian networks to analyze expression data.
\newblock {\em Journal of Computational Biology}, 7(3-4):601--620.

\bibitem[Gates and Molenaar, 2012]{gates2012group}
Gates, K.~M. and Molenaar, P.~C. (2012).
\newblock Group search algorithm recovers effective connectivity maps for
  individuals in homogeneous and heterogeneous samples.
\newblock {\em NeuroImage}, 63(1):310--319.

\bibitem[Gnjatic et~al., 2010]{gnjatic2010ny}
Gnjatic, S., Cao, Y., Reichelt, U., Yekebas, E.~F., N{\"o}lker, C., Marx,
  A.~H., Erbersdobler, A., Nishikawa, H., Hildebrandt, Y., Bartels, K., et~al.
  (2010).
\newblock {NY-CO-58/KIF2C} is overexpressed in a variety of solid tumors and
  induces frequent {T} cell responses in patients with colorectal cancer.
\newblock {\em International Journal of Cancer}, 127(2):381--393.

\bibitem[Heckerman et~al., 1995]{heckerman1995real}
Heckerman, D., Mamdani, A., and Wellman, M.~P. (1995).
\newblock Real-world applications of {B}ayesian networks.
\newblock {\em Communications of the ACM}, 38(3):24--26.

\bibitem[Johnstone et~al., 2008]{johnstone2008trail}
Johnstone, R.~W., Frew, A.~J., and Smyth, M.~J. (2008).
\newblock The {TRAIL} apoptotic pathway in cancer onset, progression and
  therapy.
\newblock {\em Nature Reviews Cancer}, 8(10):782--798.

\bibitem[J{\"o}nsson et~al., 2014]{jonsson2014distinct}
J{\"o}nsson, J.-M., Bartuma, K., Dominguez-Valentin, M., Harbst, K., Ketabi,
  Z., Malander, S., J{\"o}nsson, M., Carneiro, A., M{\aa}sb{\"a}ck, A.,
  J{\"o}nsson, G., et~al. (2014).
\newblock Distinct gene expression profiles in ovarian cancer linked to {Lynch}
  syndrome.
\newblock {\em Familial Cancer}, 13(4):537--545.

\bibitem[Lauritzen, 1996]{lauritzen1996graphical}
Lauritzen, S.~L. (1996).
\newblock {\em Graphical Models}, volume~17.
\newblock Clarendon Press.

\bibitem[Pearl, 2009]{pearl2009causality}
Pearl, J. (2009).
\newblock {\em Causality}.
\newblock Cambridge University Press.

\bibitem[Qiao et~al., 2016]{qiao2016mechanism}
Qiao, R., Weissmann, F., Yamaguchi, M., Brown, N.~G., VanderLinden, R., Imre,
  R., Jarvis, M.~A., Brunner, M.~R., Davidson, I.~F., Litos, G., et~al. (2016).
\newblock Mechanism of {APC/CCDC20} activation by mitotic phosphorylation.
\newblock {\em Proceedings of the National Academy of Sciences},
  113(19):E2570--E2578.

\bibitem[Ramsey et~al., 2011]{ramsey2011meta}
Ramsey, J., Spirtes, P., and Glymour, C. (2011).
\newblock On meta-analyses of imaging data and the mixture of records.
\newblock {\em NeuroImage}, 57(2):323--330.

\bibitem[Rapaport et~al., 2015]{rapaport2015inhibitory}
Rapaport, A.~S., Schriewer, J., Gilfillan, S., Hembrador, E., Crump, R.,
  Plougastel, B.~F., Wang, Y., Le~Friec, G., Gao, J., Cella, M., et~al. (2015).
\newblock The inhibitory receptor {NKG2A} sustains virus-specific {CD8+ T}
  cells in response to a lethal poxvirus infection.
\newblock {\em Immunity}, 43(6):1112--1124.

\bibitem[Richardson and Spirtes, 2002]{richardson2002ancestral}
Richardson, T. and Spirtes, P. (2002).
\newblock Ancestral graph {M}arkov models.
\newblock {\em The Annals of Statistics}, 30(4):962--1030.

\bibitem[Rosenberg and Hirschberg, 2007]{rosenberg2007v}
Rosenberg, A. and Hirschberg, J. (2007).
\newblock V-measure: A conditional entropy-based external cluster evaluation
  measure.
\newblock In {\em Proceedings of the 2007 Joint Conference on Empirical Methods
  in Natural Language Processing and Computational Natural Language Learning
  (EMNLP-CoNLL)}, pages 410--420.

\bibitem[Sadeghi et~al., 2013]{sadeghi2013stable}
Sadeghi, K. et~al. (2013).
\newblock Stable mixed graphs.
\newblock {\em Bernoulli}, 19(5B):2330--2358.

\bibitem[Sadeghi and Lauritzen, 2014]{sadeghi2014markov}
Sadeghi, K. and Lauritzen, S. (2014).
\newblock Markov properties for mixed graphs.
\newblock {\em Bernoulli}, 20(2):676--696.

\bibitem[Salti et~al., 2011]{salti2011granzyme}
Salti, S.~M., Hammelev, E.~M., Grewal, J.~L., Reddy, S.~T., Zemple, S.~J.,
  Grossman, W.~J., Grayson, M.~H., and Verbsky, J.~W. (2011).
\newblock Granzyme {B} regulates antiviral {CD8+ T} cell responses.
\newblock {\em The Journal of Immunology}, 187(12):6301--6309.

\bibitem[Schmandt et~al., 1999]{schmandt1999cloning}
Schmandt, R., Liu, S.~K., and McGlade, C.~J. (1999).
\newblock Cloning and characterization of {mPAL, a novel Shc SH2}
  domain-binding protein expressed in proliferating cells.
\newblock {\em Oncogene}, 18(10):1867--1879.

\bibitem[Singer et~al., 2016]{singer2016distinct}
Singer, M., Wang, C., Cong, L., Marjanovic, N.~D., Kowalczyk, M.~S., Zhang, H.,
  Nyman, J., Sakuishi, K., Kurtulus, S., Gennert, D., et~al. (2016).
\newblock A distinct gene module for dysfunction uncoupled from activation in
  tumor-infiltrating {T} cells.
\newblock {\em Cell}, 166(6):1500--1511.

\bibitem[Solus et~al., 2017]{solus2017consistency}
Solus, L., Wang, Y., Matejovicova, L., and Uhler, C. (2017).
\newblock Consistency guarantees for permutation-based causal inference
  algorithms.
\newblock {\em arXiv preprint arXiv:1702.03530}.

\bibitem[Spirtes, 1994]{spirtes1994conditional}
Spirtes, P. (1994).
\newblock Conditional independence properties in directed cyclic graphical
  models for feedback.
\newblock Technical report, Carnegie Mellon University.

\bibitem[Spirtes et~al., 2000]{spirtes2000causation}
Spirtes, P., Glymour, C.~N., and Scheines, R. (2000).
\newblock {\em Causation, Prediction, and Search}.
\newblock MIT press.

\bibitem[Strobl, 2019a]{strobl2019global}
Strobl, E.~V. (2019a).
\newblock The global {M}arkov property for a mixture of {DAGs}.
\newblock {\em arXiv preprint arXiv:1909.05418}.

\bibitem[Strobl, 2019b]{pmlr-v104-strobl19a}
Strobl, E.~V. (2019b).
\newblock Improved causal discovery from longitudinal data using a mixture of
  {DAGs}.
\newblock In {\em Proceedings of Machine Learning Research}, volume 104, pages
  100--133.

\bibitem[Thiesson et~al., 1997]{thiesson2013learning}
Thiesson, B., Meek, C., Chickering, D.~M., and Heckerman, D. (1997).
\newblock Learning mixtures of {DAG} models.
\newblock In {\em Proceedings of the Fourteenth Conference on Uncertainty in
  Artificial Intelligence}, pages 504--513.

\bibitem[Tothill et~al., 2008]{tothill2008novel}
Tothill, R.~W., Tinker, A.~V., George, J., Brown, R., Fox, S.~B., Lade, S.,
  Johnson, D.~S., Trivett, M.~K., Etemadmoghadam, D., Locandro, B., et~al.
  (2008).
\newblock Novel molecular subtypes of serous and endometrioid ovarian cancer
  linked to clinical outcome.
\newblock {\em Clinical Cancer Research}, 14(16):5198--5208.

\bibitem[Wang et~al., 2020]{wang_JCI}
Wang, Y., Segarra, S., and Uhler, C. (2020).
\newblock {High-dimensional joint estimation of multiple directed Gaussian
  graphical models}.
\newblock {\em Electronic Journal of Statistics}.

\bibitem[Wang et~al., 2018]{wang2018direct}
Wang, Y., Squires, C., Belyaeva, A., and Uhler, C. (2018).
\newblock Direct estimation of differences in causal graphs.
\newblock In {\em Advances in Neural Information Processing Systems}, pages
  3770--3781.

\bibitem[Zhang, 2008]{zhang2008causal}
Zhang, J. (2008).
\newblock Causal reasoning with ancestral graphs.
\newblock {\em Journal of Machine Learning Research}, 9(Jul):1437--1474.

\end{thebibliography}

%\begin{thebibliography}{}
%\setlength{\itemindent}{-\leftmargin}
%\makeatletter\renewcommand{\@biblabel}[1]{}\makeatother
%\bibitem{} J.~Alspector, B.~Gupta, and R.~B.~Allen (1989).
%    \newblock Performance of a stochastic learning microchip.
%    \newblock In D. S. Touretzky (ed.),
%    \textit{Advances in Neural Information Processing Systems 1}, 748--760.
%    San Mateo, Calif.: Morgan Kaufmann.
%
%\bibitem{} F.~Rosenblatt (1962).
%    \newblock \textit{Principles of Neurodynamics.}
%    \newblock Washington, D.C.: Spartan Books.
%
%\bibitem{} G.~Tesauro (1989).
%    \newblock Neurogammon wins computer Olympiad.
%    \newblock \textit{Neural Computation} \textbf{1}(3):321--323.
%\end{thebibliography}

%TODO Uncomment below for appendix in document

\newpage
\onecolumn

\section*{APPENDIX}
\appendix
% Put "S" before all equations
\newcommand{\snum}{S}
\renewcommand{\theequation}{\snum.\arabic{equation}}
\counterwithin{algorithm}{section}
\counterwithin{figure}{section}

\section{Proof of Proposition~\ref{prop:marginal_mag}}
\label{section:proof_prop_marginal_mag}
We begin by recalling the definition of an inducing path from~\citet{richardson2002ancestral}, specialized to ancestral graphs.

\begin{defn} 
  \label{def:inducing}
  A path $v_1,\dots,v_n$ in an ancestral graph $\cG$ is \emph{inducing} if $v_1$ and $v_n$ are not adjacent in $\cG$ and for all $i\in\{2,\dots,n-1\}$, we have
  $$v_{i-1} \leftrightarrow v_i\leftrightarrow v_{i+1} \qquad \textrm{and} \qquad v_i \in \an_{\cG}(\{v_1, v_n\}).$$
%  and 
%  $$v_i \in \an_{\cG}(\{v_1, v_n\}).$$
\end{defn}

\noindent \citet{richardson2002ancestral} showed the following condition for an ancestral graph to be maximal.

\begin{lemma}[\cite{richardson2002ancestral}]
An ancestral graph $\cM$ is maximal if and only if $\cG$ does not contain any inducing paths. 
\end{lemma}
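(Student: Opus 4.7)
The plan is to prove both directions of the equivalence separately, with the "inducing path implies non-maximal" direction being the main technical challenge.

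For the $(\Leftarrow)$ direction, I would suppose $\cG$ has no inducing paths and show maximality by producing, for every non-adjacent pair $u, v$, an explicit d-separating set. The natural candidate is $C := \an_\cG(\{u,v\}) \setminus \{u,v\}$. Arguing by contradiction, suppose some shortest path $\pi = (u = w_0, w_1, \ldots, w_k = v)$ is d-connecting given $C$; the goal is to show that $\pi$ itself satisfies the two conditions of Definition~\ref{def:inducing}, contradicting the hypothesis. If some intermediate $w_i$ were a non-collider on $\pi$, then at least one incident edge would carry a tail at $w_i$, and tracing tails along $\pi$ in that direction until reaching an endpoint or an intermediate collider $w_j$ would place $w_i$ in $\an_\cG(\{u,v\})$: either directly (if the trace reaches $u$ or $v$) or because $w_j$ is activated by $C$ and so has a descendant in $C \subseteq \an_\cG(\{u,v\})$. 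Since $w_i$ is internal and non-colliders on a d-connecting path cannot lie in $C = \an_\cG(\{u,v\}) \setminus \{u,v\}$, this is a contradiction. Hence every intermediate $w_i$ is a collider, and because each is activated by $C$, each lies in $C \cup \an_\cG(C) \subseteq \an_\cG(\{u,v\})$. Thus $\pi$ is an inducing path.

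For the $(\Rightarrow)$ direction, I would prove the contrapositive: given an inducing path $\pi = (u = w_0, \ldots, w_k = v)$ and any $C \subseteq V \setminus \{u,v\}$, I would construct a d-connecting walk from $u$ to $v$ given $C$ and then reduce it to a d-connecting path. The construction is inductive on the number of intermediate colliders of $\pi$ that are not activated by $C$. If this number is zero, then $\pi$ itself is d-connecting. Otherwise, fix an unactivated intermediate collider $w_i$; since $w_i \in \an_\cG(\{u,v\})$, there is a directed path $\sigma$ from $w_i$ to an endpoint, say $v$. Because $w_i$ has no descendant in $C$, no internal vertex of $\sigma$ lies in $C$. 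Replacing the portion of $\pi$ from $w_i$ to $v$ with $\sigma$ produces a walk in which $w_i$ has become a non-collider (a tail now emanates along $\sigma$) and every vertex of $\sigma$ is a non-collider outside $C$. The number of unactivated intermediate colliders strictly decreases, so the induction terminates; the resulting walk can be shortened to a d-connecting path by the standard observation that a shortest $u$-to-$v$ walk among d-connecting walks is automatically a path.

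The main obstacle lies in the $(\Rightarrow)$ construction. I must verify that the directed detour $\sigma$ does not inadvertently create a blocked sub-configuration when it meets the surviving portion of $\pi$ from $u$ to $w_i$; in particular, if some vertex of $\sigma$ coincides with an earlier collider $w_\ell$, that collider may appear on the new walk as a non-collider, and one must rule out the possibility that such a coincidence forces a vertex in $C$ onto the trimmed path. Ancestrality of $\cG$ is the key tool here: bidirected edges along $\pi$ forbid the endpoints from being ancestors of one another, so $\sigma$ (whose vertices are all descendants of $w_i$) cannot loop back through the earlier colliders of $\pi$ in a manner that simultaneously respects the arrowhead pattern of those colliders and the acyclicity of directed paths. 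Aside from this bookkeeping, the $(\Leftarrow)$ direction rests cleanly on the tail-tracing observation and on $\an_\cG(\{u,v\})$ being closed under taking ancestors, while the $(\Rightarrow)$ direction is the one requiring careful case analysis.
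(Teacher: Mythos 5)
First, a point of reference: the paper does not prove this lemma at all --- it is imported verbatim from \citet{richardson2002ancestral} and used as a black box in the proof of Proposition~\ref{prop:marginal_mag} --- so your attempt has to be judged on its own terms rather than against an in-paper argument. Your $(\Leftarrow)$ direction is essentially the standard proof and is sound: $C=\an_\cG(\{u,v\})\setminus\{u,v\}$ is the canonical separator, and the tail-tracing argument correctly forces every intermediate vertex of a d-connecting path to be a collider lying in $\an_\cG(\{u,v\})$. One step is missing relative to Definition~\ref{def:inducing} as stated in this paper, which requires \emph{every} edge of an inducing path to be bidirected, including the two terminal ones; your argument only delivers arrowheads at the intermediate vertices, so the first edge could a priori be $u\rightarrow w_1$. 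You need one more line: if $u\rightarrow w_1$ then $w_1\in\an_\cG(v)$ (a directed path from $w_1$ back to $u$ would close a cycle), and since $w_{k-1}\in\an_\cG(\{u,v\})$ is likewise forced into $\an_\cG(u)$, concatenating $w_{k-1}\rightarrow\cdots\rightarrow u\rightarrow w_1\rightarrow\cdots\rightarrow v$ with the last edge of $\pi$ produces either a directed cycle or a violation of ancestrality; hence both terminal marks are arrowheads.

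The substantive gap is in the $(\Rightarrow)$ induction. You induct on the number of unactivated intermediate colliders \emph{of the inducing path $\pi$}, but after one rerouting the object in hand is no longer an inducing path, so the inductive hypothesis does not apply to it; and the phrase ``an endpoint, say $v$'' hides the real difficulty, namely that one unactivated collider may be an ancestor of $u$ only while another is an ancestor of $v$ only. If you reroute $w_i$ to $v$ and must later reroute some $w_j$ with $j<i$ to $u$, the colliders $w_{j+1},\dots,w_{i-1}$ trapped between the two detours may themselves be unactivated, and nothing in your step handles them. The repair is to strengthen the inductive claim to walks consisting of a reversed directed path from some $w_p$ to $u$, followed by the segment of $\pi$ from $w_p$ to $w_q$, followed by a directed path from $w_q$ to $v$, with both detours avoiding $C$: rerouting an unactivated collider $w_m$ with $p<m<q$ shrinks the middle segment to either $(p,m)$ or $(m,q)$ according to whether $w_m\in\an_\cG(v)$ or $w_m\in\an_\cG(u)$, strictly decreasing the number of unactivated colliders on the middle segment, and at termination every collider there is activated, so the walk d-connects. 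Your appeal to the walk-to-path reduction is correct and, once invoked, already disposes of the self-intersection worry in your final paragraph; the ancestrality bookkeeping you sketch there is not needed.
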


This allows us to prove Proposition ~\ref{prop:marginal_mag}.

\begin{proof}[Proof of Proposition ~\ref{prop:marginal_mag}]
We show that the graph resulting from Algorithm~\ref{alg:marginalization} does not contain inducing paths. Let $\cM$ be the output of the algorithm. Suppose we have vertices $v_1,\dots,v_n$ where $v_{i-1} \leftrightarrow v_i\leftrightarrow v_{i+1}$ for all $i\in\{2,\dots,n-1\}$ in $\cM$. Then by step 1 of the algorithm, we must have $v_1,\dots,v_n\in\ch_{\cD_\mu}(y)$, implying that $v_1\leftrightarrow_\cM v_n$, and hence the path is not inducing.
\end{proof}

\section{Counter-example for the Markov property of the mother graph}
\label{section:strobl_example}
%Counterexample
In the following, we provide a counter-example for the Markov property of the mother-graph representation introduced by~\citet{pmlr-v104-strobl19a,strobl2019global}. 
We first remark that the Markov property in~\citet{strobl2019global} generalizes that of~\citet{pmlr-v104-strobl19a} in the following sense: if the Markov property of the latter is satisfied, then the former is satisfied. 
Hence, we here provide a counter-example for the former, which can serve as a counter-example for both.

\begin{figure}[ht]
    \centering
    \begin{subfigure}[t]{0.32\textwidth}
        \centering
        \includegraphics[height=30mm]{figures/ex1_D1.png}\
        \caption{$\cD\up1$}
        \label{fig_appendix:ex1_D1}
    \end{subfigure}
    \begin{subfigure}[t]{0.32\textwidth}
        \centering
        \includegraphics[height=30mm]{figures/ex1_D2.png}\
        \caption{$\cD\up2$}
        \label{fig_appendix:ex1_D2}
    \end{subfigure}
    \begin{subfigure}[t]{0.32\textwidth}
        \centering
        \includegraphics[height=30mm]{figures/ex1_mother.png}\
        \caption{$\cD_m$}
        \label{fig_appendix:ex1_mother}
    \end{subfigure}
   \caption{(c) shows the mother graph $\cD_m$ associated with the DAGs $\cD\up1$ and $\cD\up2$ in (a) and (b).}
   \label{fig_appendix:ex1}
\end{figure}

We start by recalling a few definitions from~\citet{pmlr-v104-strobl19a} using notation native to our development. Given a mixture of DAGs with distribution $p_\mu$ where $p\up j$ factorizes according to $\cD\up j$, the mother graph $\cD_m=(V_m, D_m)$ has nodes $V_m := [V] \cup \{y\up1,\dots,y\up K\}$ and directed edges
$$D_m := \bigcup_{1\le j\le K}\{y\up j\rightarrow v\up j : v\in V\setminus V_{\textrm{INV}}\} \cup \{u\up j\rightarrow v\up j: u\rightarrow_{\cD\up j} v\}.$$
An example of the mother graph is shown in Figure~\ref{fig_appendix:ex1}.  A variable $c\up j \in [V]$ in the mother graph is called an \emph{m-collider} if and only if at least one of the following conditions hold:
\begin{itemize}
    \item $a\up j\rightarrow c\up j\leftarrow b\up j$, where $a,b\in V\cup \{y\}$
    \item $a \up j\rightarrow c\up j \leftarrow y\up j$ and $y\up k\rightarrow c\up k\leftarrow b\up k$ where $a,b\in V$.
\end{itemize}
An \emph{m-path} exists between $[A]$ and $[B]$ in the mother graph if and only if there exists a sequence of triples between $[A]$ and $[B]$ such that at least one of the following two conditions is true for each triple in the sequence:
\begin{itemize}
    \item $a\up j\startailstar c\up j\startailstar b\up j$ with $a,b,c\in V\cup \{y\}$
    \item $a\up j\rightarrow c\up j \leftarrow y\up j$ and $y\up k \rightarrow c\up k \leftarrow b\up k$ where $a,b,c\in V$.
\end{itemize}
Finally, $[A]$ and $[B]$ are said to be \emph{m-d-connected given} $[C]$ if and only if there exists an m-path between $[A]$ and $[B]$ such that the following two conditions hold:
\begin{itemize}
    \item $c\up j \in [C]$ for every m-collider on the path, where $c\in V$
    \item $a\up j \not\in [C]$ for every non-m-collider on the path, where $c\in V\cup\{y\}$.
\end{itemize}

Now, the Markov property for the mother graph states that if $[A]$ and $[B]$ are not m-d connected given $[C]$ in the mother graph, then $X_A \ci X_B \mid X_C$ in $p_\mu$~\cite{pmlr-v104-strobl19a,strobl2019global}.

We now provide a counter example for this Markov property. For this, consider the mother graph in Figure~\ref{fig_appendix:ex1_mother} over $V=\{1,2,3,4\}$.
Note that according to the definition of m-d-connection, $[\{1\}]$ and $[\{4\}]$ are not m-d-connected given $[\{2,3\}]$. Hence, the Markov property should imply that $X_1 \ci X_4 | X_2,X_3$ in any mixture distribution whose mother graph is as shown. In the following, construct a mixture distribution where this is not satisfied. 

For simplicity, let $p_J(1) = p_J(2) = \frac12$.
Define $p\up 1(x_V)$ as 
\begin{align*}
 p\up 1(x_1) &= \cN(x_1;0,1),\\ 
 p\up 1(x_2|x_1) &= \cN(x_2;x_1,1),   \\
 p\up 1(x_3) &= \cN(x_3;0,1),      \\
 p\up 1(x_4) &= \cN(x_4;0,1),    \\
\end{align*}
and $p\up 2(x_V)$ as 
\begin{align*}
 p\up 2(x_1) &= \cN(x_1;0,1), \\ 
 p\up 2(x_2) &= \cN(x_2;0,1),  \\
 p\up 2(x_3|x_4) &= \cN(x_3;x_4,1),\\
 p\up 2(x_4) &= \cN(x_4;0,1).
\end{align*}
Clearly, $p\up 1(x_V)$ and $p\up 2(x_V)$ factorize according to $\cD\up1$ of Figure~\ref{fig_appendix:ex1_D1} and $\cD\up2$ of Figure ~\ref{fig_appendix:ex1_D2}, respectively. Now, 
\begin{align*}
    p_\mu(x_1,x_2,x_3,x_4) &= \sum_{j\in\{1,2\}} p_J(j) p\up j(x_1,x_2,x_3,x_4)\\
    &= \frac12 \frac{1}{(2\pi)^2}\Big( e^{-\frac{x_1^2}{2}} e^{-\frac{x_3^2}{2}} e^{-\frac{x_4^2}{2}} e^{-\frac{(x_2-x_1)^2}{2}}  +
    e^{-\frac{x_1^2}{2}} e^{-\frac{x_2^2}{2}} e^{-\frac{x_4^2}{2}} e^{-\frac{(x_3-x_4)^2}{2}} \Big) \\
    &= \frac12 \frac{1}{(2\pi)^2}e^{-\frac{x_1^2}{2}} e^{-\frac{x_2^2}{2}} e^{-\frac{x_3^2}{2}}e^{-\frac{x_4^2}{2}}
    \Big( e^{x_2x_1}e^{-\frac{x_1^2}{2}} + 
    e^{x_3x_4}e^{-\frac{x_4^2}{2}}\Big),
\end{align*}
which cannot be written as
$$f(x_1,x_2,x_3) g(x_2,x_3,x_4)$$
for any $f,g$, implying that $X_1 \nci X_4 \mid X_2,X_3$ in $p_\mu$. \hfill\qed

\section{Proof of Lemma~\ref{lemma:technical}}
%\begin{lemma}
%\label{lemma:factorization}
%Let $A,B,C\subseteq V^j$ be disjoint. If 
%\begin{itemize}
% \item $A$ and $B$ are d-separated given $C$; and,
% \item $A$ and $W$ are d-separated given $C$,
%\end{itemize}
%hold in $\widetilde\cD\up j$, 
%then the following factorization holds:
%$$p\up j(x_A, x_B| x_C) =  p\up 1(x_A|x_C) p\up j(x_B |x_C)$$
%\end{lemma}

\begin{proof}[Proof of Lemma~\ref{lemma:technical}]
By the assumption, $ p\up {j_1}(x_V)$ factors according to $\cD\up {j_1}$. Hence, it is sufficient to define a distribution $\widetilde p_{X_V,J}(x_v,j)$ over $X_V \cup \{J\}$ that factors according to $\widetilde\cD\up j$, with $J\in\{j_1,j_2\}$ for an arbitrarily chosen $j_2\in\{1,\dots,K\} \setminus \{j_1\}$, such that
$$\widetilde p_{X_V|J}(x_V|j_1)= p\up {j_1}(x_V).$$
Then, the factorization with respect to $\widetilde\cD\up {j_1}$ along with the two d-separation statements in the hypothesis of the lemma would imply 
\begin{align*}
 p\up {j_1}(x_A, x_B| x_C) &= \frac{\sum_{x_{V\setminus (A\cup B\cup C)}} p\up {j_1}(x_V )}{\sum_{x_{V\setminus C}} p\up {j_1}(x_V)}\\
 &= \frac{\sum_{x_{V\setminus (A\cup B\cup C)}}\widetilde p(x_V |j_1)}{\sum_{x_{V\setminus C}}\widetilde p(x_V |j_1)}\\
 &= \widetilde p(x_A,x_B|x_C, j_1)\\
 &= \widetilde p(x_A| x_C)\widetilde p(x_B|x_C,j_1).\\
\end{align*}
To complete the proof, we define such a distribution $\widetilde p$. First let $V_y := \ch_{\widetilde \cD\up {j_1}}(y)$ and note that
\begin{align*}
\widetilde p(x_V,j) &= \widetilde p_J(j)\prod_{v\in V}\widetilde p(x_v | x_{\pa_{\widetilde\cD\up {j_1}}(v)}, j)\\
&=\widetilde p_J(j)\prod_{v\in V_y}\widetilde p(x_v | x_{\pa_{\cD\up {j_1}}(v)},j) \prod_{v\in V\setminus V_y}\widetilde p(x_v | x_{\pa_{\cD\up {j_1}}(v)}).
\end{align*}
Define
$$\widetilde p_J(j) := \begin{cases}
                     p_J(j_1) & j=j_1\\
                    1- p_J(j_1) & j=j_2
                   \end{cases}\;,
$$

$$\widetilde p(x_v | x_{\pa_{\cD\up j}(v)}) :=  p(x_v | x_{\pa_{\cD\up j}(v)}) \quad \forall v\in V\setminus V_y.$$

Now, for each $v\in V_y$, define
$$U(v):= \pa_{\cD\up {j_1}}(v)\cap\pa_{\cD\up {j_2}}(v)$$
and 
$$
D(v):= \pa_{\cD\up{j_2}(v)}\setminus\pa_{\cD\up {j_1}(v)},
$$
and choose an arbitrary fixed value for $x_{\pa_{\cD\up i(v)}\setminus\pa_{\cD\up j(v)}}$ and  denote it by $x_d'(v)$. 

\noindent Then define for all $v\in V_y$,
$$\widetilde p(x_v | x_{\pa_{\cD\up j}(v)},j) := \begin{cases}
                                             p_{X_v|X_{\pa_{\cD\up {j_1}}(v)},J}(x_v | x_{\pa_{\cD\up {j_1}}(v)}, j_1) & j=j_1 \\
                                            
                                             p_{X_v|X_{U(v)},X_{D(v)}, J}(x_v | x_{U(v)}, x_d'(v), j_2) & j=j_2
                                            \end{cases}.
$$
Now, one easily checks that this distribution indeed satisfies the factorization property, which completes the proof.
\end{proof}

%\begin{thm}[Markov Property]
%Let $A,B,C \subseteq V$ be disjoint sets. If $[A]$ and $[B]$ are d-separated w.r.t $[C]$ in $\D_\mu$, then $X_A\ci X_B \mid X_C$ in $p_\mu$.
%\end{thm}
%
%\begin{proof}
%
%First, note that $[A]$ and $[B]$ being d-separated given $[C]$ in $\D_\mu$ implies that $A^j$ and $B^j$ are d-separated given  $C^j$ in $\D_j$ for all $j\in[k]$. Furthermore, since $W$ has an in-degree of $0$ in $\D_\mu$, then we cannot have both a d-connecting path between $[A]$ and $W$ given $[C]$ and a path between $[B]$ and $W$ given $[C]$ in $\D_\mu$. Hence, we may assume W.L.O.G. that $[A]$ and $W$ are d-separated given $[C]$.
%When:
%\begin{align}
% p_\mu(x_A,x_B|x_C) &= \sum_{j=1}^k p\up j(x_A,x_B|x_C) p(W=j) \nonumber\\
%&= \sum_{j=1}^k p(x_A|x_C) p\up j(x_B|x_C) \label{eq:factorization} \\
%&=p(x_A|x_C) \sum_{j=1}^k  p\up j(x_B|x_C) \nonumber\\
%&=f_A(x_A,x_C) g_B(x_B,x_C)
%\end{align}
%where~\eqref{eq:factorization} follows from lemma~\ref{lemma:factorization}. The factorization implies that $X_A \ci X_B | X_C$ in $p_\mu$.
%
%\end{proof}

\section{Proof of Lemma~\ref{lemma:union_maximal_ancestral}}
\label{section:proof_union_maximal_ancestral}
The ancestral property follows directly since we impose the order compatibility assumption of Definition~\ref{def:ordering}. In the following, we show maximality using the definition of inducing path and the associated maximality condition in Section~\ref{section:proof_prop_marginal_mag}. 

\begin{proof}[Proof of Lemma~\ref{lemma:union_maximal_ancestral}]
Suppose we have a path $v_1\leftrightarrow v_2 \leftrightarrow \dots v_{n-1}\leftrightarrow v_n$ in $\cM_\cup$. 
Then, for all $m\in\{1,\dots,n-1\}$, we must have some $j\in\{1,\dots,K\}$ such that $v_m\up j\leftrightarrow v_{m+1}\up j$ in $\cM\up j$, implying that for all $m$, we must have a $j$ 
such that $v_m\up j,v_{m+1}\up j \in \ch_{\cD\up j}(y)$ and hence a $j$ such that $v_m\up j,v_{m+1}\up j \in \ch_{\cD_\mu}(y)$. But by construction of $\cD_\mu$, this implies that $v_m\up j v_{m+1}\up j\in \ch_{\cD_\mu}(y)$ for all $j\in\{1,\dots,K\}$. Therefore, for any $j$, we have $v_1\up j\cdots,v_n\up j \in \ch_{\cD\up j}(y)$, and hence Algorithm~\ref{alg:marginalization} adds an edge between $v_1\up j$ and $v_n\up j$ in $\cM\up j$, resulting in an edge between $v_1$ and $v_n$ in $\cM_\cup$. Therefore, the path $v_1,\dots, v_n$ is not inducing in $\cM_\cup$.
\end{proof}

\section{Proof of Theorem~\ref{thm:mixture-union}}
\label{section:proof_mixture-union}

Since we assume that $A,B,C\subseteq V$, i.e., these sets do not contain $y$, then $[A]$ and $[B]$ are d-separated in $\cD_\mu$ given $[C]$ if and only if they are d-separated in the marginal MAG of $\cD_\mu$ w.r.t.~$\{y\}$ obtained from Algorithm~\ref{alg:marginalization}. We refer to this MAG as the \emph{mixture MAG} and denote it by $\cM_\mu$. We will make use of this MAG in  parts of the following proof since it simplifies the arguments.

\begin{figure}[!t]
    \centering
    \begin{subfigure}[t]{0.24\textwidth}
        \centering
        \includegraphics[height=30mm]{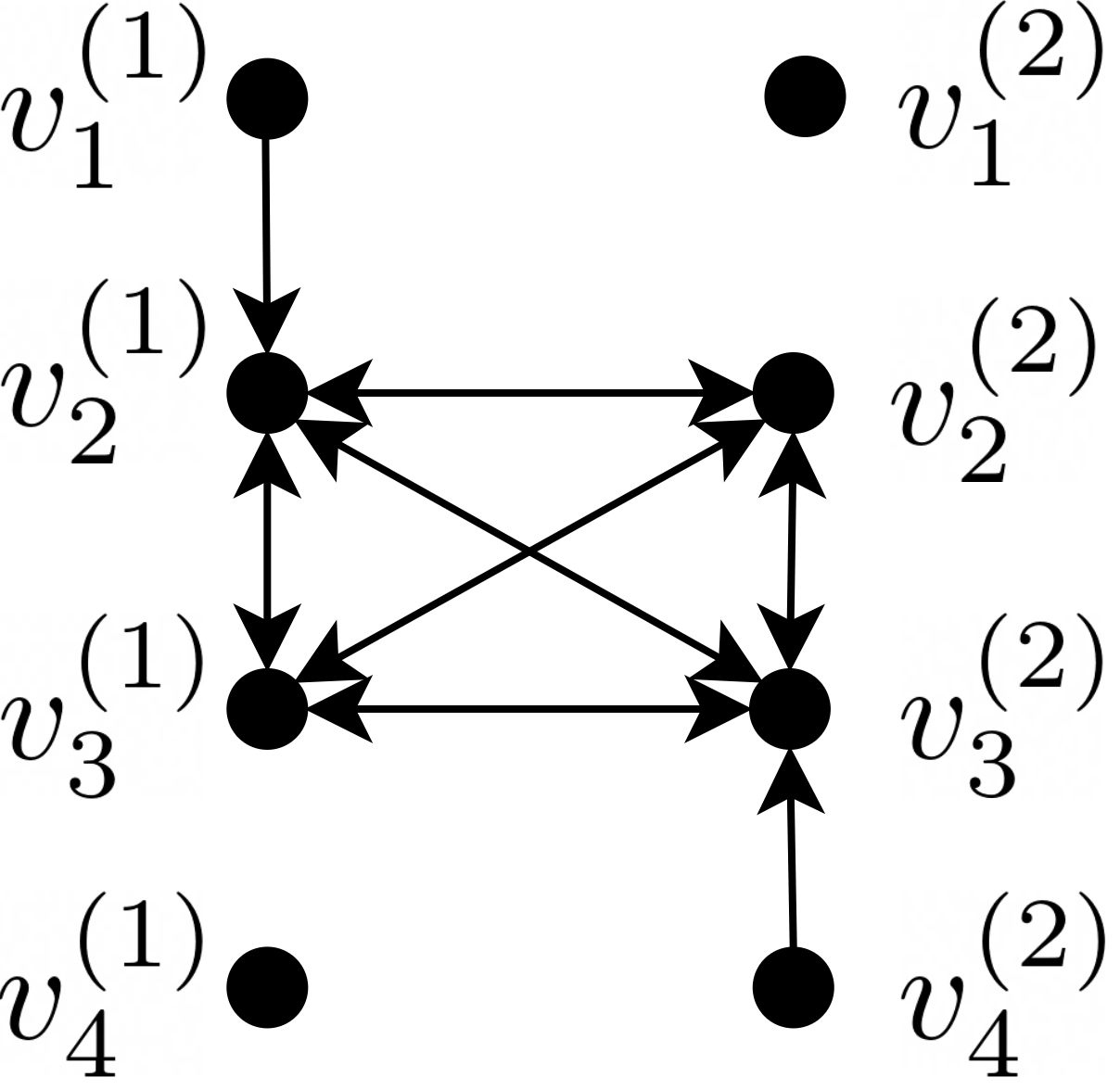}\
        \caption{$\cM_\mu$}
    \end{subfigure}%
   ~ 
    \begin{subfigure}[t]{0.24\textwidth}
        \centering
        \includegraphics[height=30mm]{figures/ex1_M1.png}\
        \caption{$\cM\up 1$}
    \end{subfigure}%
   ~ 
    \begin{subfigure}[t]{0.24\textwidth}
        \centering
        \includegraphics[height=30mm]{figures/ex1_M2.png}\
        \caption{$\cM\up 2$}
    \end{subfigure}%
    ~
    \begin{subfigure}[t]{0.24\textwidth}
        \centering
        \includegraphics[height=30mm]{figures/ex1_union.png}\
        \caption{$\cM_\cup$}
    \end{subfigure}%
   \caption{} 
   \label{fig:ex1_mixture_mag}
\end{figure}

One thing to note about $\cM_\mu$ is that if we remove the edges of the form $u\up j\circtailcirc v\up i$ for $u,v\in V$ and $i\neq j$, then we obtain a bijection between the edges of $\cM_\mu$ and the union of all the edges of $\cM\up j$ for all $j$.
Figure~\ref{fig:ex1_mixture_mag} illustrates this for an example. 
Hence, we can alternatively think of the union graph as having directed edges 
   $$D_\cup := \{u\rightarrow v : u,v \in V, \; \exists_{i}\; u\up i \rightarrow_{\M_\mu} v\up i\},$$
  and bidirected edges
   $$B_\cup := \{u\leftrightarrow v : u,v\in V, \; \exists_{i}\; u\up i \leftrightarrow_{\M_\mu} v\up i\}.$$
   
%\subsection{Encoding Conditional Independence using $\cD_\mu$ and $\cM_\mu$}
%
%Since the mixture distribution is over variable $\{X_v: v\in V\}$, we would like to use the mixture graphs to encode conditional independence relations over the variables $V$. First, we define the following notation for the conditional independence statement implied by a graph:
%
%\begin{align*}
%\cI(\cG)&:=\{X_A\ci X_B \mid X_C : A,B,C \textrm{ disjoint subsets of }V, \textrm{ and } A \textrm{ d-separated from } B \textrm{ given } C \textrm{ in } \cG\}.\\
%\end{align*}
%
%Notice that $\cI(\cM_\mu)$ and $\cI(\cD_\mu)$ in this case will be undefined. For this, we define 
%
%\textcolor{blue}{Motivated the introduction of these graphs, namely that they'll be useful for the assumption later.}
%\begin{align*}
%\cI_{[\;]}&(\cG_\mu)\\
%&:=\{X_A\ci X_B \mid X_C : A,B,C \textrm{ disjoint subsets of }V, \textrm{ and } [A] \textrm{ d-separated from } [B] \textrm{ given } [C] \textrm{ in } \cG_\mu\}.\\
%\end{align*}
%
%
%Since $A,B,C\subseteq V$, i.e., neither contain $s$, we note that:
%$$\cI_{[\;]}(\cD_\mu) = \cI_{[\;]}(\cM_\mu).$$
%
%
%A natural question to ask is whether there is a graphical model $G_V$ over variables $V$ such that
%$$\cI(G_V) = \cI_{[\;]}(\cM_\mu)$$
%

We prove Theorem~\ref{thm:mixture-union} in 3 main steps.  
\emph{First}, in Lemma~\ref{lemma:first} we show that for any d-connecting path between $a$ and $b$ given $C$ in $\cM_\cup$, we can find a d-connecting path between $a\up i$ and $b\up k$ given $[C]$ in $\cM_\mu$.
\emph{Second}, in Lemma~\ref{lemma:second} we show the converse: that for any d-connecting path $a\up i$ and $b\up k$ given $[C]$ in $\cM_\mu$, we can find a d-connecting path between $a$ and $b$ given $C$ in $\cM_\cup$.
\emph{Finally}, in Lemma~\ref{lemma:third} we show that this equivalence implies that for any disjoint sets $A,B,C\subseteq V$, $A$ and $B$ are d-separated in $\cM_\cup$ if and only if $[A]$ and $[B]$ are d-separated in $\cM_\mu$ given $[C]$.

The proof strategy in Lemmas~\ref{lemma:first} and \ref{lemma:second} relies on concatenating d-connecting paths given $C$ of the form $P_1 = \langle v_1,\dots,v_n\rangle$ and $P_2 = \langle v_n\dots,v_m\rangle$ together to create longer d-connecting paths given $C$ of the form $P =\langle v_1,\dots,v_m\rangle$.
When doing so, we must take care to ensure that $v_n$ is active on the longer path, i.e., we must ensure that $v_n$ is a collider on the path $P$ if and only if $v_n\in C$. 

\subsection{A connecting path in $\cM_\cup$ implies an analogus one in $\cM_\mu$}
We begin by proving some auxiliary results for step 1.
\begin{lemma}[Bidirected Connections]
\label{lemma:bidirected_connections}
If $a\up i \leftrightarrow_{\cM_\mu} b\up k$ for any $i,k\in\{1,\dots,K\}$,
then $a\up i\leftrightarrow_{\cM_\mu} b\up j$ for 
all $j\in\{1,\dots,K\}\setminus\{i\}$.
\end{lemma}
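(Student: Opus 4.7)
My plan is to work directly from the definition of $\cM_\mu$ as the output of Algorithm~\ref{alg:marginalization} applied to $\cD_\mu$ with latent node $y$. First, I would unpack what the presence of a bidirected edge in $\cM_\mu$ means: Step (1) of the algorithm is the only source of bidirected edges, and Step (3) is the only step that ever removes them. Hence $a\up i \leftrightarrow_{\cM_\mu} b\up k$ implies both $a\up i, b\up k \in \ch_{\cD_\mu}(y)$ and that neither is an ancestor of the other in $\cD_\mu$ (otherwise Step (3) would have converted the bidirected edge into a directed one).

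Next, I would translate the children-of-$y$ condition back through Definition~\ref{mixture:DAG:defn}. The edges out of $y$ in $\cD_\mu$ are exactly $\{y \rightarrow v\up j : v \in V\setminus V_{\textrm{INV}},\ 1\le j\le K\}$, so $y \rightarrow a\up i$ being present forces $a \in V\setminus V_{\textrm{INV}}$, which in turn yields $y \rightarrow a\up j$ for every $j$. The identical argument applied to $b$ gives $y \rightarrow b\up j$ for every $j$. Consequently, Step (1) of Algorithm~\ref{alg:marginalization} inserts the candidate bidirected edge $a\up i \leftrightarrow b\up j$ for every $j \in \{1,\ldots,K\}\setminus\{i\}$.

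It then remains to verify that Step (3) does not delete these edges, i.e., that for $j \neq i$ neither $a\up i \in \an_{\cD_\mu}(b\up j)$ nor $b\up j \in \an_{\cD_\mu}(a\up i)$. Here the key structural observation is that every edge of $\cD_\mu$ is either internal to a single component (of the form $u\up\ell \rightarrow v\up\ell$) or emanates from $y$, and $y$ has in-degree $0$. Therefore any directed path starting at a node in component $i$ can never enter component $j \neq i$, since the only connection between components would have to pass through $y$ as an intermediate vertex, which is impossible. This forbids any ancestor relation between $a\up i$ and $b\up j$ whenever $i \neq j$, so the bidirected edge $a\up i \leftrightarrow b\up j$ survives Step (3).

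The argument is essentially mechanical once the structure of $\cD_\mu$ and the behavior of Algorithm~\ref{alg:marginalization} are unpacked; I do not anticipate a real obstacle, as the entire content of the lemma reduces to the in-degree-zero property of $y$ ruling out cross-component directed paths, combined with the fact that membership in $\ch_{\cD_\mu}(y)$ for a single copy $a\up i$ propagates to all copies $a\up j$ by definition of $V\setminus V_{\textrm{INV}}$.
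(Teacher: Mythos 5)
Your proposal is correct and follows essentially the same route as the paper's own proof: a bidirected edge forces both endpoints into $\ch_{\cD_\mu}(y)$, membership propagates to all copies via the definition of $V\setminus V_{\textrm{INV}}$, Step (1) then adds all the cross-copy bidirected edges, and Step (3) cannot remove them when $j\neq i$ since no directed path crosses components. You actually spell out the last point (in-degree $0$ of $y$ blocking cross-component ancestral relations) more explicitly than the paper does.
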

\begin{proof}
$a\up i \leftrightarrow_{\cM_\mu} b\up k$ implies that $a\up i,b\up k\in\ch_{\cD_\mu}(y)$. 
By construction of $\cD_\mu$, this implies $a\up j,b\up j\in\ch_{\cD_\mu}(y)$ for all $j\in\{1,\dots,K\}$, and hence step 1 of Algorithm~\ref{alg:marginalization} will add the bidirected edges $a\up i\leftrightarrow b\up j$ for all $j\in\{1,\dots,K\}$. Step 3 will only remove it if $a\up i$ and $a\up j$ are ancestors of one another in $\cD_\mu$, which could happen only if $j=i$. Hence, $a\up i\leftrightarrow_{\cM_\mu} b\up j$ for all $j\in\{1,\dots,K\}\setminus\{i\}.$
\end{proof}

\begin{lemma}[Bidirected district]
\label{lemma:bidirected_district}
Assume $a\up i\leftrightarrow_{\cM_\mu} b\up j$ and $c\up k \leftrightarrow_{\cM_\mu} d\up l$.
\begin{itemize}
    \item If $i\neq l$, then $a\up i\leftrightarrow_{\cM_\mu} d\up l$. 
    \item If $ i = l$, then
    \begin{itemize}
        \item $a\up i\leftrightarrow_{\cM_\mu} d\up l$ if neither $a\up i,d\up l$ is an ancestor of another in $\cM_\mu$,
        \item $a\up i\rightarrow_{\cM_\mu} d\up l$ if $a\up i\in\an_{\cM_\mu}(d\up l)$; or
        \item $a\up i\leftarrow_{\cM_\mu} d\up l$ if $d\up l \in\an_{\cM_\mu}(a\up i)$.
    \end{itemize}
\end{itemize}
\end{lemma}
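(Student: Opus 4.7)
The plan is to trace the fate of the pair $a\up i, d\up l$ through the three steps of Algorithm~\ref{alg:marginalization} applied to $\cD_\mu$, relying on the observation that bidirected edges in $\cM_\mu$ can only be introduced by step~1 of that algorithm. The two hypotheses $a\up i\leftrightarrow_{\cM_\mu} b\up j$ and $c\up k\leftrightarrow_{\cM_\mu} d\up l$ therefore force $a\up i, b\up j, c\up k, d\up l\in\ch_{\cD_\mu}(y)$. By Definition~\ref{mixture:DAG:defn}, $y$ has an outgoing edge to $v\up r$ precisely when $v\in V\setminus V_{\textrm{INV}}$, a condition that depends on $v$ alone and not on the copy index $r$, so $a\up r, d\up r\in\ch_{\cD_\mu}(y)$ for every $r\in\{1,\dots,K\}$. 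In particular, step~1 inserts $a\up i\leftrightarrow d\up l$ into the initial bidirected-edge set $B$.

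Next, I will determine which of the three claimed edge types survives in $\cM_\mu$ by examining step~3, which replaces $a\up i\leftrightarrow d\up l$ by a directed edge exactly when one of the endpoints is an ancestor of the other in $\cD_\mu$. The structural fact I will lean on is that the only edges in $\cD_\mu$ crossing component boundaries are out-edges of $y$, and $y$ has in-degree $0$, so no directed path can connect nodes of two different components. In the case $i\neq l$ this immediately rules out either of $a\up i, d\up l$ being an ancestor of the other, so step~3 does nothing and $a\up i\leftrightarrow_{\cM_\mu} d\up l$ as claimed.

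For the case $i=l$, the same structural observation reduces ancestry in $\cD_\mu$ between $a\up i$ and $d\up i$ to ancestry within $\cD\up i$. Step~3 will then output $a\up i\leftrightarrow d\up i$ when neither $a, d$ is an ancestor of the other in $\cD\up i$, $a\up i\rightarrow d\up i$ when $a\in\an_{\cD\up i}(d)$, and $a\up i\leftarrow d\up i$ when $d\in\an_{\cD\up i}(a)$. The final bridge I will need is to translate these statements about $\an_{\cD\up i}$ into the statements about $\an_{\cM_\mu}$ that actually appear in the lemma. I will establish this via the standard preservation of ancestor relations among observed vertices under marginalization: every edge added (step~2) or rewired (step~3) by Algorithm~\ref{alg:marginalization} is certified by an existing directed path in $\cD_\mu$, while the original directed edges of each $\cD\up i$ all survive in $\cM_\mu$. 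Combined with the impossibility of cross-component directed paths in $\cM_\mu$, this yields $\an_{\cM_\mu}(v\up i)\cap V\up i = \an_{\cD\up i}(v)$ for every observed $v$.

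The only genuinely delicate step is this ancestry-preservation claim, since the algorithm phrases its side-conditions in terms of $\an_\cD$ applied to the input DAG $\cD_\mu$ while the lemma speaks of $\an_{\cM_\mu}$; everything else is a direct consequence of the uniformity of $y$'s out-edges across copies and the root status of $y$ in $\cD_\mu$. No interaction between distinct bidirected edges in $B$ can interfere, because step~3 treats each bidirected pair independently, so the above local analysis of the single pair $(a\up i, d\up l)$ will suffice.
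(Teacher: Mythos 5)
Your proposal is correct and follows essentially the same route as the paper's proof: both deduce from the two hypothesized bidirected edges that all copies of $a$ and $d$ are children of $y$ in $\cD_\mu$, so step~1 adds $a\up i\leftrightarrow d\up l$, and then both case-split on whether $i\neq l$ (no cross-component ancestry, so step~3 leaves the edge bidirected) or $i=l$ (step~3 reorients according to ancestry). Your explicit bridge between ancestry in $\cD_\mu$ (the condition the algorithm actually checks) and ancestry in $\cM_\mu$ (the condition stated in the lemma) is a point the paper's proof passes over silently, and is a worthwhile addition.
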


\begin{proof}
$a\up i\leftrightarrow_{\cM_\mu} b\up j$ and $c\up k \leftrightarrow_{\cM_\mu} d\up l$ implies that $a\up \iota, b\up\iota, c\up\iota,d\up\iota \in \ch_{\cD_\mu}(y)$ for all $\iota\in\{1,\dots,K\}$. Hence, step 1 of Algorithm~\ref{alg:marginalization} will add $a\up i\leftrightarrow_{\cM_\mu} d\up l$. If $i\neq l$, then $a\up i$ and $d\up l$ cannot be ancestors of one another, implying that step 3 will not remove this bidirected edge. If $i=l$, then the edge will be removed and replaced with the appropriate directed edge if one of $a\up i$ or $d\up l$ is an ancestor of the other. Otherwise, the bidirected edge will remain.
\end{proof}

\begin{lemma}[Arrow tip lemma]
\label{lemma:arrowtip}
Under the ordering assumption in Definition~\ref{def:ordering}, if a directed edge $a\rightarrow_{\cM_\cup} b$ exists in $\cM_\cup$, then we must have $a^j\rightarrow_{\cM_\mu} b^j$ for some $j$ in $\cM_\mu$.
 If a bidirected edge $a\leftrightarrow_{\cM_\cup} b$ exists in $\cM_\cup$, then we must have $a^j\leftrightarrow_{\cM_\mu} b^j$ for some $j$ in $\cM_\mu$. 
\end{lemma}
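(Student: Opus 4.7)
The plan is to derive the lemma as an immediate consequence of the bijection (noted just before Lemma~\ref{lemma:first}) between the within-component edges of $\cM_\mu$ and the disjoint union of the edges of the MAGs $\cM\up 1,\dots,\cM\up K$. Once the bijection is in hand, both parts of the lemma collapse to a chase through Definition~\ref{def:union_graph}: a directed edge $a\to_{\cM_\cup} b$ exists only because some index $j$ satisfies $a\up j\to_{\cM\up j} b\up j$, and the bijection transports this precise edge, with its orientation preserved, to $a\up j\to_{\cM_\mu} b\up j$; the bidirected case is word-for-word identical.

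To justify the bijection, I would run Algorithm~\ref{alg:marginalization} in parallel on $\cD_\mu$ and on each induced sub-DAG $\widetilde\cD\up j$. The key structural fact is that $\cD_\mu$ has no directed edges crossing components: inside $[V]$ every directed edge is intra-component, and the only remaining directed edges are of the form $y\to v\up j$. Consequently, $\an_{\cD_\mu}$ restricted to any single copy $V\up j$ agrees exactly with $\an_{\widetilde\cD\up j}$. Since step~1 of the algorithm depends only on $\ch(y)$ (and $\ch_{\cD_\mu}(y)\cap V\up j=\ch_{\widetilde\cD\up j}(y)$), and steps~2 and~3 depend only on ancestry and on the bidirected edges already in $B$, this ancestral coincidence forces the two runs to agree on every within-component edge, direction included. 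Step~1 additionally introduces the cross-component bidirected edges that the bijection explicitly discards.

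The only mild obstacle is checking that a cross-component bidirected edge $u\up j\leftrightarrow v\up k$, $j\neq k$, introduced by step~1 cannot cause step~2 to insert a spurious edge $t\up j\to v\up k$ into $D$. This is ruled out by the absence of cross-component directed paths in $\cD_\mu$, which gives $u\up j\notin\an_{\cD_\mu}(v\up k)$, so the precondition of step~2 fails. Note that the poset-compatibility hypothesis of Definition~\ref{def:ordering} plays no direct role in this argument; it is needed only upstream, via Lemma~\ref{lemma:union_maximal_ancestral}, to guarantee that $\cM_\cup$ is a MAG so that the edges and their orientations referred to in the statement of the lemma are unambiguous.
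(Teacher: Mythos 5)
Your proposal is correct and follows essentially the same route as the paper: the paper's entire proof is the one-line remark that the claim ``follows directly from the definition of the union graph,'' relying on the alternative characterization of $D_\cup$ and $B_\cup$ in terms of within-component edges of $\cM_\mu$, which in turn rests on the edge bijection asserted (without proof) at the start of the appendix section. The only difference is that you actually verify that bijection by running Algorithm~\ref{alg:marginalization} in parallel on $\cD_\mu$ and the $\widetilde\cD\up j$ and checking that no spurious cross-component edges arise --- a detail the paper leaves implicit --- and your observation that the poset-compatibility hypothesis is not actually used here is likewise consistent with the paper's argument.
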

\begin{proof}
The proof follows directly from the definition of the union graph.
\end{proof}

\begin{lemma}[Changing Arrowtips Lemma]
\label{lemma:changing_arrowtips}
Under the ordering assumption in Definition~\ref{def:ordering}, if $a\up j\starrightarrow_{\cM_\mu} b\up j$ but not  $a\up k \starrightarrow_{\cM_\mu}b\up k$ (same type of edge) for some $j\neq k$, then we must have $b\up j\leftrightarrow b\up k$. 
\end{lemma}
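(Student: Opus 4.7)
The plan is to reduce the lemma to establishing that $b \in V \setminus V_{\textrm{INV}}$. Once this is shown, $b\up j, b\up k \in \ch_{\cD_\mu}(y)$ by the construction of $\cD_\mu$, so step~1 of Algorithm~\ref{alg:marginalization} adds $b\up j \leftrightarrow b\up k$ to $B$. Step~3 cannot later remove this edge because $y$ has in-degree $0$ in $\cD_\mu$, so every directed path in $\cD_\mu$ stays inside a single copy; in particular, neither of $b\up j, b\up k$ is an ancestor of the other, so $b\up j \leftrightarrow_{\cM_\mu} b\up k$ survives into the output MAG.

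I would then split the analysis on the type of $a\up j \starrightarrow_{\cM_\mu} b\up j$. If it is bidirected, then since step~1 is the only source of bidirected edges in Algorithm~\ref{alg:marginalization}, both $a\up j$ and $b\up j$ must be children of $y$ in $\cD_\mu$, which gives $b \in V \setminus V_{\textrm{INV}}$ at once. If it is directed, $a\up j \to_{\cM_\mu} b\up j$, there are two possibilities: either the edge was added by step~2 or step~3, in which case $b\up j$ is an endpoint of a bidirected edge created in step~1 and hence $b \in V \setminus V_{\textrm{INV}}$, or it is an original directed edge $a \to b \in E\up j$ inherited from $\cD_\mu$.

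The original-edge subcase is the main obstacle and is where the hypothesis really has to be used. The key supporting observation is that the within-copy-$k$ directed edges of $\cM_\mu$ coincide with those of $\cM\up k$: since $\cD_\mu$ contains no cross-copy directed edges (all tails in $[V]$ remain inside one copy, and $y$ has no incoming edges), steps~2 and~3 of Algorithm~\ref{alg:marginalization} applied to $\cD_\mu$ never mix copies. Consequently, the hypothesis ``no $a\up k \to_{\cM_\mu} b\up k$'' translates to ``no $a \to b$ in $\cM\up k$,'' and hence no $a \to b$ in $E\up k$. The poset compatibility assumption supplies (as noted just before Lemma~\ref{lemma:union_maximal_ancestral}) a common topological ordering $\pi$ on the $\cD\up l$, which likewise excludes $b \to a$ from $E\up k$. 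Therefore $a \in \pa_{\cD\up j}(b) \setminus \pa_{\cD\up k}(b)$, so $\pa_{\cD\up j}(b) \neq \pa_{\cD\up k}(b)$; by the convention recorded just after Definition~\ref{mixture:DAG:defn}, distinct parent sets force distinct structural conditionals, so $p\up j(x_b \mid x_{\pa_{\cD\up j}(b)}) \neq p\up k(x_b \mid x_{\pa_{\cD\up k}(b)})$, and therefore $b \notin V_{\textrm{INV}}$, completing the case analysis.
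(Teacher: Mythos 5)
Your proof is correct and follows essentially the same route as the paper's: reduce everything to showing $b\in V\setminus V_{\textrm{INV}}$, split on whether $a\up j\starrightarrow b\up j$ is an original DAG edge (where differing parent sets force $b\notin V_{\textrm{INV}}$) or was added by Algorithm~\ref{alg:marginalization} (where $b\up j$ must be a child of $y$), and then note that the bidirected edge $b\up j\leftrightarrow b\up k$ added in step~1 is never removed. Your extra step ruling out $b\to a\in E\up k$ via the common ordering is superfluous—$a\notin\pa_{\cD\up k}(b)$ already follows from $a\to b\notin E\up k$—but harmless.
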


\begin{proof}
The ordering assumption does not allow $a\up j\rightarrow_{\cM_\mu} b\up j$ and $a\up k \leftrightarrow_{\cM_\mu}b\up k$ (and vice versa). Hence, we must only look at the existence of $a\up j\starrightarrow b\up j$ and the in-existence of an edge between $a\up k$ and $b\up k$.

First, we note that if step 1 of Algorithm~\ref{alg:marginalization} defining $\cM_\mu$ adds $b\up j\leftrightarrow b\up k$, then it will remain since
step 2 does not modify edges but only adds them, 
while step 3 will never remove an edge $b\up j\leftrightarrow b\up k$ since neither can be an ancestor or a descendant of the other in $\cD_\mu$.

Now, if $a\up j\rightarrow_{\cD\up j} b\up j$ but not $a\up k\rightarrow_{\cD\up k}b\up k$ for some $k$, then we must have $b\in V\setminus V^I$ and hence $b\up\iota\in\ch_{\cD_\mu}(y)$ for all $\iota\in\{1,\dots,K\}$ by construction of $\cD_\mu$. Therefore, step 1 of Algorithm~\ref{alg:marginalization} will add $b\up j\leftrightarrow b\up k$. 

For the other case we must check that $a\up j\rightarrowcirc b\up j$ was added by the algorithm that created $\cM_\mu$. In all steps, the algorithm will only add such an edge if $b\in V\setminus V_{\textrm{INV}}$ and hence $b\up j \leftrightarrow b\up k$ must have been added in step 1.
\end{proof}

\begin{lemma}[Step 1]
\label{lemma:first}
Under the ordering compatibility assumption in Definition~\ref{def:ordering},
if there is a connecting path between $a$ and $b$ given some $C\subseteq V\setminus\{a,b\}$ in $\cM_\cup$ ending in an arrow head (or tail respectively) incident to $b$, then there is a connecting path between $a\up i$ and $b\up k$ given $[C]$ in $\cM_\mu$ for some $i,k\in\{1,\dots,K\}$ that also ends in an arrow head (or tail respectively) towards $b\up k$.
\end{lemma}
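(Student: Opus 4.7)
The plan is to prove this lemma by induction on the length $n$ of the connecting path in $\cM_\cup$. For the base case $n = 1$, a single edge $a - b$ in $\cM_\cup$ lifts via the Arrow tip Lemma (Lemma~\ref{lemma:arrowtip}) to an edge $a^j - b^j$ of the same type in $\cM_\mu$ for some $j$; this is a (trivially) connecting path of length one with matching tip at $b^j$.

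For the inductive step, I would consider a connecting path $P = \langle v_0 = a, v_1, \ldots, v_n, v_{n+1} = b\rangle$ of length $n+1$ in $\cM_\cup$ given $C$. Applying the inductive hypothesis to the sub-path $P' = \langle v_0, \ldots, v_n\rangle$ yields a connecting path in $\cM_\mu$ from some $a^i$ to some $v_n^k$ whose tip at $v_n^k$ matches the tip of $P'$ at $v_n$; separately, the Arrow tip Lemma applied to the final edge $v_n - v_{n+1}$ yields an edge $v_n^{l^\ast} - v_{n+1}^{l^\ast}$ of the same type in $\cM_\mu$ for some layer $l^\ast$. When $k = l^\ast$, direct concatenation gives the required lifted path: the tips at the junction $v_n^k$ replicate those at $v_n$ in $P$, so $v_n^k$ is a collider in the new path iff $v_n$ is a collider in $P$, and $v_n \in C \iff v_n^k \in [C]$ transfers the connecting property.

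When $k \neq l^\ast$, a bridge between $v_n^k$ and $v_n^{l^\ast}$ in $\cM_\mu$ is required. The key tool here is the Changing Arrow tips Lemma (Lemma~\ref{lemma:changing_arrowtips}): whenever one of the edges incident to $v_n$ has an arrowhead at $v_n$ and lifts to one of the layers $k, l^\ast$ but not the other (of the same type), the lemma supplies the bidirected edge $v_n^k \leftrightarrow v_n^{l^\ast}$ in $\cM_\mu$. This resolves the collider case of $P$ cleanly: both adjacent edges then carry arrowheads at $v_n$, so the lemma applies, and inserting the bridge turns both $v_n^k$ and $v_n^{l^\ast}$ into colliders on the extended path; since $v_n \in C$ whenever $v_n$ is a collider on the connecting path $P$, both copies lie in $[C]$ and d-connection is preserved.

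The main obstacle, and where I expect the most delicate argument, is the non-collider case with $k \neq l^\ast$: a bidirected bridge there would instead introduce colliders blocked by $v_n \notin C$. The plan to close this case is to show the situation cannot arise under the poset compatibility assumption. At a non-collider $v_n$, at least one adjacent edge is of the form $v_n \to w$, which forces $v_n <_\pi w$ in the shared poset; this rules out $v_n \leftrightarrow w$ in every $\cM\up j$, which in turn forces Step 3 of Algorithm~\ref{alg:marginalization} to convert the Step 1 bidirected edge (present whenever $v_n, w \in V \setminus V_{\textrm{INV}}$) into $v_n \to w$ in every layer, so that the set of valid lifting layers for this edge contains those of the other adjacent edge. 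By strengthening the inductive hypothesis to allow the terminal layer $k$ to be chosen within the set of valid lifting layers of the last edge of $P'$, I can then always align $k = l^\ast$ in the non-collider case, yielding a concatenated connecting path in $\cM_\mu$ with the required tip at $b^k$ and closing the induction.
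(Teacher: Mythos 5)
Your overall strategy (induction on path length, base case via Lemma~\ref{lemma:arrowtip}, concatenation at the junction, bridging with Lemma~\ref{lemma:changing_arrowtips} in the collider case) matches the paper's, and your treatment of the collider case is essentially the paper's case~(i). The gap is in the non-collider case with $k\neq l^{\ast}$, which you dispose of by claiming it cannot arise: you argue that an edge $v_n\rightarrow w$ of $\cM_\cup$ must, under poset compatibility, appear as $v_n\up{\jmath}\rightarrow w\up{\jmath}$ in \emph{every} layer $\jmath$. That claim is false. Your own parenthetical gives the reason away: the Step~1 bidirected edge between $v_n\up{\jmath}$ and $w\up{\jmath}$ (which Step~3 would then orient consistently in all layers) exists only when \emph{both} $v_n$ and $w$ lie in $V\setminus V_{\textrm{INV}}$. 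If, say, $w\in V_{\textrm{INV}}$, or if $v_n\rightarrow w$ is simply an edge of a single $\cD\up{\jmath}$ between nodes that are not both children of $y$, the edge can be present in exactly one layer. Concretely, take $K=2$, $\cD\up1$ containing only $v\rightarrow a$ and $\cD\up2$ containing only $v\rightarrow b$ with $v\in V_{\textrm{INV}}$: the path $a\leftarrow v\rightarrow b$ is connecting in $\cM_\cup$ given $C=\emptyset$, $v$ is a non-collider, and the two incident edges lift to disjoint layer sets. Your alignment argument, and with it the strengthened inductive hypothesis (whose maintenance through the induction also rests on the same universal-lifting claim), therefore fails.

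This non-collider, layer-mismatch situation is precisely where the paper's proof does its real work (cases~(ii)--(iv)): rather than aligning layers, it walks backwards along the already-lifted path $P_\mu$ to the first edge that fails to replicate in layer $k$, and then uses Lemma~\ref{lemma:changing_arrowtips} together with Lemma~\ref{lemma:bidirected_district} to manufacture a cross-layer bidirected edge that shortcuts directly to $b\up k$, replacing an entire terminal segment of $P_\mu$; activating the new collider so introduced requires locating a collider in $[C]$ on the discarded segment that is a descendant of the splice point. None of this machinery appears in your proposal, and the case it handles is unavoidable, so the proof as outlined does not go through.
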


\begin{proof}
We use induction on the number of edges in the connecting path in $\cM_\cup$.
The base case for 1 edge follows directly from Lemma~\ref{lemma:arrowtip}.

Now assume we have a d-connecting path given $C$ consisting of $m+1$ edges in $\cM_\cup$: $P_\cup =\langle a,\dots, d, b\rangle$ ending in an arrow head (or tail respectively).
Consider the sub-path $\langle a,\dots, d\rangle $ with $m$ edges.
By the inductive hypothesis, there is a path $P_\mu = \langle a\up i,\dots, d\up j \rangle$ in $\cM_\mu$ that is d-connecting given $[C]$, for some $i,j$, ending in the same tip.
In the following, we show that we can always find a path of the form $\langle d\up j,\dots, b\up k\rangle$ for some $k$ that can be joined together with $P_\mu$ to create a path $\langle a\up i,b\up k\rangle$ that is d-connecting given $[C]$.
We do this by considering all the different cases for the tips of the edges $c\startailstar_{\cM_\cup} d$ and $d \startailstar_{\cM_\cup} b$. \\

\noindent Before discussing the different cases, note that if the edge $d\up j \startailstar_{\cM_\mu} b\up k$ exists and is of the same type as the edge $d\startailstar_{\cM_\cup} b$, then we can create the desired d-connecting path $\widetilde P_\mu$ from $a\up j$ to $b\up i$ given $[C]$ by concatenating this edge with $P_\mu$, since:
\begin{align*}
 d\textrm{ is active on }Q_\cup &\Rightarrow \big(d \textrm{ is a collider on } Q_\cup \Leftrightarrow d\in C\big)\\
 &\Rightarrow \big(d\up j \textrm{ is a collider on } \widetilde P_\mu \Leftrightarrow d\up j \in [C]\big)\\
 &\Rightarrow d\up j \textrm{ is active on }\widetilde P_\mu,\\
\end{align*}
where the second implication follows because the path $P_\mu$ ends in the correct type of arrow tip by the inductive hypothesis (I.H.). Hence, in what follows, it is sufficient to 
\begin{equation}
\label{ass:not_same_edge}
\textrm{assume either } d\up j \startailstar b\up k \textrm{ is not in  } \cM_\mu \textrm{ or is not the same edge type as } d \startailstar b \textrm{ in } \cM_\cup.
\end{equation}

\begin{enumerate}
 \item[(i)] case $c\starrightarrow d \leftarrowstar b$ in $\cM_\cup$: 
 
 Since $d$ is a collider on the path $Q_\cup$, we must have $d\in C$, and hence $d\up j\in [C]$ for all $j$.
 Hence the path $P_\mu$ is of the form $\langle a\up i,\dots,\gamma\up\iota,d\up j\rangle$, where $\gamma\up {\iota}\starrightarrow d\up j$ for some $\gamma\in V$ and $\iota\in\{1,\dots,K\}$ by the I.H. Furthermore, by Lemma~\ref{lemma:arrowtip}, we must have $d\up {\iota}\leftarrowstar_{\cM_\mu} b\up {k}$ for some $\iota, k\in\{1,\dots,K\}$. 
 Since we assumed in~\eqref{ass:not_same_edge} that this isn't true for $\iota = j$, then by Lemma~\ref{lemma:changing_arrowtips} we must have $d\up j\leftrightarrow d\up {\iota}$, creating the path 
 $\starrightarrow d\up {j} \leftrightarrow d\up {\iota}\leftarrowstar b\up {k}$ that is d-connected given $[C]$ (recall $d\up j,d\up {\iota}\in [C]$). Concatenating $d\up j \leftrightarrow d\up {\iota} \leftarrowcirc b\up k$ to $P_\mu$ gives the desired d-connecting path $\langle a\up i, \dots, d\up j ,d\up \iota,b\up k\rangle$.
\end{enumerate}

\noindent For the remaining cases, we begin by recalling that the edge $d\startailstar_{\cM_\cup} b$ must exist since $d\up k \startailstar_{\cM_\mu} b\up k$ for some $k$ by Lemma~\ref{lemma:arrowtip}. Now, let $\alpha\up j$ be the node on the path $P_\mu$ closest to $a\up i$ such that all nodes between $\alpha\up j$ and $d\up j$ have the same index $j$, i.e., all of these are contained in the same MAG $\cM\up j$. This means that the node preceding $\alpha\up j$ on this path, call it $\gamma\up \kappa$, either has a different index (i.e., a part of a different $\cM\up \kappa$), or $\alpha\up j = a\up i$.

Call $P\up j_\mu=\langle \alpha\up j,\dots,d\up j\rangle$ the subpath of $P_\mu$ from $\alpha \up j$ to $d\up j$. This path is completely contained in $\cM\up j$. If it is possible to find a path $P\up k_\mu = \langle \alpha\up k ,\dots,d\up k\rangle$ in $\cM\up k$ that is analogous to $P\up j_\mu$ (same types of edges), then we can replace the segment $P\up j_\mu$ of $P_\mu$  with $P\up k_\mu$ to obtain a connecting path between $a\up i$ and $d\up k$ given $[C]$. 
Then, concatenating $d\up k \startailstar b\up k$ gives us the desired connecting path from $a \up i$ to $b\up k$ given $[C]$ in $\cM_\mu$.

%Since $\tilde a\up j$ was the node closest to $a\up i$ that is contained in $\cM\up j$, then either $\tilde a\up j  = a\up i$, or the node adjacent to $\tilde a\up j$ on this path in the direciton of $a\up i$ is connected to $\tilde a\up j$ with a bidirected edge: $g\up m\leftrightarrow \tilde a\up j$.

Hence, in checking the remaining cases, we further 
\begin{equation}
\label{ass:cant_find_path}
\textrm{assume that it is not possible to find a path } P\up k_\mu \textrm{ in } \cM\up k.
\end{equation}

Therefore, walking along the path $P_\mu \up j$ backwards starting at $d\up j$ until $\alpha\up j$, we will eventually find an edge $\beta\up j \startailstar \delta\up j$ such that $\beta \up k \startailstar \delta\up k$ is not an edge.
Take the first such edge.
Now, if this edge was $\beta\up j\leftrightarrow \delta\up j$,
then by Lemma~\ref{lemma:bidirected_connections}, we must have $\beta\up j \leftrightarrow \delta\up k$, 
implying that we can concatenate the subpath of $P_\mu$ of the form $\langle a\up i,\dots,\beta\up j\rangle$ with $\beta\up j\leftrightarrow \delta\up k$ and the subpath of $P_{\mu}\up k$ of the form $\langle \delta\up k,\dots, b\up k\rangle$ to create the desired d-connecting path given $[C]$. Next we look at the situations where we do not have $\beta\up j \leftrightarrow \delta \up j$, 
considering each remaining case on the arrowheads of $c\startailstar d\startailstar b$ in $\cM_\cup$ separately.

\begin{figure}[!b]
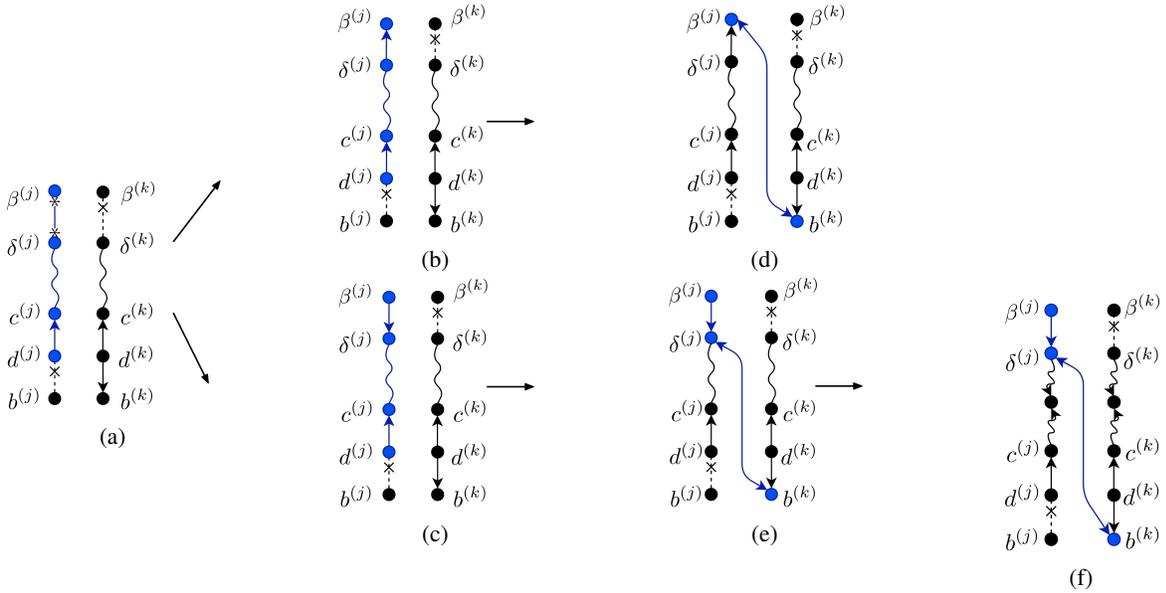

    \centering
    \begin{subfigure}[t]{0.24\textwidth}
        \centering
        \vspace{-7mm}
        \includegraphics[height=1.2in]{figures/case2a.png}\
        \caption{}
        \label{figure:case2a}
    \end{subfigure}%
    ~ 
    \begin{subfigure}[t]{0.24\textwidth}
        \centering
        
    \begin{subfigure}[t]{\textwidth}
        \centering
        \includegraphics[height=1.2in]{figures/case2b.png}
        \caption{}
        \label{figure:case2b}
    \end{subfigure}
    
    \begin{subfigure}[t]{\textwidth}
    \centering
        \includegraphics[height=1.2in]{figures/case2d.png}
        \caption{}
        \label{figure:case2d}
    \end{subfigure}
    \end{subfigure}
   ~ 
    \begin{subfigure}[t]{0.24\textwidth}
        \centering
    \begin{subfigure}[t]{0.9\textwidth}
    \centering
        \includegraphics[height=1.2in]{figures/case2c.png}
        \caption{}
        \label{figure:case2c}
    \end{subfigure}
    \\
    \begin{subfigure}[t]{0.9\textwidth}
    \centering
        \includegraphics[height=1.2in]{figures/case2e.png}
        \caption{}
        \label{figure:case2e}
    \end{subfigure}
    
    \end{subfigure}
   ~ 
    \begin{subfigure}[t]{0.22\textwidth}
        \centering
        \vspace{8mm}
        \includegraphics[height=1.35in]{figures/case2f.png}
        \caption{}
        \label{figure:case2f}
    \end{subfigure}
    \caption{An illustration of the logic in the proof of Lemma~\ref{lemma:first}, case (ii). We do not plot all possible edges in order to reduce clutter. Instead, we plot non-edges using an $\mathsf{x}$ superimposed on a dashed line. Furthermore, we indicate paths between two nodes with a squiggly line. (a), (b) and (c) show the relevant segment of the path $P_\mu$ in blue; (d), (e) and (f) show the segment that replaces $\langle \beta\up j,\dots, d\up j\rangle$ on $P_\mu$ to create the desired d-connecting path in blue.}
    \label{fig:non-ancestral}
\end{figure}\

\begin{enumerate}
    \item[(ii)] case $c\leftarrow d \rightarrow b$ in $\cM_\cup$: This case is depicted in Figure~\ref{figure:case2a}.
    If the first edge found is of the form $\beta\up j \leftarrow \delta\up j$ where $\beta\up k \leftarrow \delta\up k$ is not present (see Figure~\ref{figure:case2b}),
    then by Lemmas~\ref{lemma:changing_arrowtips} and~\ref{lemma:bidirected_district}, we must have $\beta\up j\leftrightarrow b\up k$ (Figure~\ref{figure:case2c}).
    Replacing the segment $\langle \beta\up j,\dots, d\up j\rangle$ of $P_\mu$ with $\beta\up j\leftrightarrow b\up k$ gives the desired path.
   
   Otherwise, if we have $\beta\up j\rightarrow \delta\up j$ instead (Figure~\ref{figure:case2d}), then Lemmas~\ref{lemma:changing_arrowtips} and \ref{lemma:bidirected_district} again say that we must have $\delta\up j\leftrightarrow b\up k$ (Figure~\ref{figure:case2e}).
   The subpath of $P_\mu$ of the form $\langle\delta\up j, c\up j\rangle$ shown in Figure~\ref{figure:case2e} is connecting given $[C]$ by the $I.H.$ Starting at $\delta\up j$ and walking towards $c\up j$, we can find a collider that is in $[C]$ (shown in Figure~\ref{figure:case2f}).
   This collider must be a descendant of $\delta\up j$ Hence, $\delta\up j$ is active given $[C]$ on the path $\beta\up j\rightarrow \delta\up j \leftrightarrow b\up k$ since it is a collider whose descendant is in $[C]$. 
   Replacing the segment $\langle\beta\up j,\dots,d\up j\rangle$ in $P_\mu$ with this path gives the desired connecting path given $[C]$.
\end{enumerate}

\begin{enumerate}
    \item [(iii)] case $c\rightarrow d \rightarrow b$ in $\cM_\cup$:
    Proceeding similarly, if the edge found is of the form $\beta\up j\leftarrow \delta\up j$, then we must have $\beta\up j\leftrightarrow b\up k$ similar to before and for the same reasons.
    Furthermore, we can find a d-connecting path by performing a concatenation similar to the one we did before: replace the segment $\langle\beta\up j,\dots,d\up j\rangle$ of $P_\mu$ with $\beta\up j\leftrightarrow_{\cM_\mu} b\up k$.
    This is illustrated in Figure~\ref{fig:case3a},
   
    If, otherwise, the edge found is of the form $\beta\up j \rightarrow \delta\up j$. We can conclude that we have the bidirected edge $\delta\up j\leftrightarrow d\up k$ by applying the Lemmas~\ref{lemma:bidirected_district} and~\ref{lemma:changing_arrowtips} again. 
    
    If there is a collider on the subpath between $\langle\delta\up j,\dots,c \up j\rangle$, then any such collider must be in $[C]$ since $P_\mu$ is d-connecting given $[C]$
    (see Figure~\ref{fig:case3b}).
    Furthermore, one of these colliders will be a descendant of $\delta\up j$, and we can apply similar logic to that in Case (ii) to show that the path obtained by replacing the segment $\langle\beta\up j,\dots,d\up j\rangle$ of $P_\mu$ with $\delta\up j\leftrightarrow d\up k$ is d-connecting given $[C]$. 
    
     Otherwise, no such collider exists between $\delta\up j$ and $c\up j$ and hence $c\up j$ is a descendant of $\delta\up j$ (see Figure~\ref{fig:case3c}). Therefore, $b\up k$ is a descendant of $\delta\up k$ by the ordering compatibility assumption, and Algorithm~\ref{alg:marginalization}  adds the directed edge $\delta\up k\rightarrow b\up k$ since $\delta\up k$ and $b\up k$ will both be in $\ch_{\cD_\mu}(y)$.
     This further implies that $\delta\up j,b\up j\in\ch_{\cD_\mu}(y)$, so Algorithm~\ref{alg:marginalization} will add an edge between these two nodes. The ordering assumption once again ensures that this edge is of the form $\delta\up j\rightarrow b\up j$.
\end{enumerate}

\begin{figure}[!t]
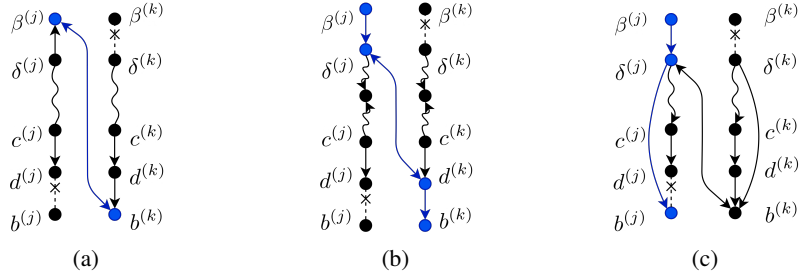

    \centering
    \begin{subfigure}[t]{0.24\textwidth}
        \centering
        \includegraphics[height=1.2in]{figures/case3a.png}\
        \caption{}
        \label{fig:case3a}
    \end{subfigure}%
    \begin{subfigure}[t]{0.24\textwidth}
        \centering
        \includegraphics[height=1.2in]{figures/case3b.png}\
        \caption{}
        \label{fig:case3b}
    \end{subfigure}%
    \begin{subfigure}[t]{0.24\textwidth}
        \centering
        \includegraphics[height=1.2in]{figures/case3c.png}\
        \caption{}
        \label{fig:case3c}
    \end{subfigure}%
    \caption{An illustration of the d-connecting paths constructed by following the logic of case (iii) in the proof of Lemma~\ref{lemma:first}. In each of (a), (b) and (c), the segment that replaces $\langle \beta\up j,\dots, d\up j\rangle$ on $P_\mu$ to create the desired d-connecting path is colored in blue.}
\end{figure}

\begin{enumerate}
    \item [(iv)] case $c\leftarrow d\leftarrow b$ in $\cM_\cup$.
    Proceeding similarly, if we have the edge $\beta\up j \rightarrow \delta \up j$, then we can follow the same logic to create the d-connecting path (see Figure~\ref{fig:case4a}).
    
    Otherwise, $\beta\up j \leftarrow \delta \up j$, and we have the bidirected edge $\beta\up j\leftrightarrow d\up k$, and we again check for colliders between $\beta\up j$ and $d\up j$.
    
    If there is a collider, it will be both in $[C]$ and a descendant of $d\up k$ in$\cM_\mu$, and we can find the desired d-connecting path with the same logic followed previously (see Figure~\ref{fig:case4b}).
    
    If there is no such collider, then $\beta\up j$ will be a descendant of $d\up j$, and using a similar argument to that used for Figure~\ref{fig:case3c}, we can conclude that we have directed edges $\beta\up j\leftarrow_{\cM_\mu} d\up j$ and $\beta\up k\leftarrow_{\cM_\mu} d\up k$ (see Figure~\ref{fig:case4c}).
    In such a scenario, we can repeat the logic for the node $\beta$ in place of the node $c$: we continue walking along the path $P_\mu\up j$ starting from $\beta\up j$ until $\alpha\up j$ is reached or until we find another edge along this path that does not exist on $P_\mu\up k$. If the former happens first, we deal with the case like we would have if $P_\mu\up k$ and $P_\mu\up j$ had identical edges. If the latter happens first, then we recursively repeat the logic of case (iv).
\end{enumerate}

\begin{figure}[!t]
    \centering
    \begin{subfigure}[t]{0.24\textwidth}
        \centering
        \includegraphics[height=1.2in]{figures/case4a.png}\
        \caption{}
        \label{fig:case4a}
    \end{subfigure}%
    \begin{subfigure}[t]{0.24\textwidth}
        \centering
        \includegraphics[height=1.2in]{figures/case4b.png}\
        \caption{}
        \label{fig:case4b}
    \end{subfigure}%
    \begin{subfigure}[t]{0.24\textwidth}
        \centering
        \includegraphics[height=1.2in]{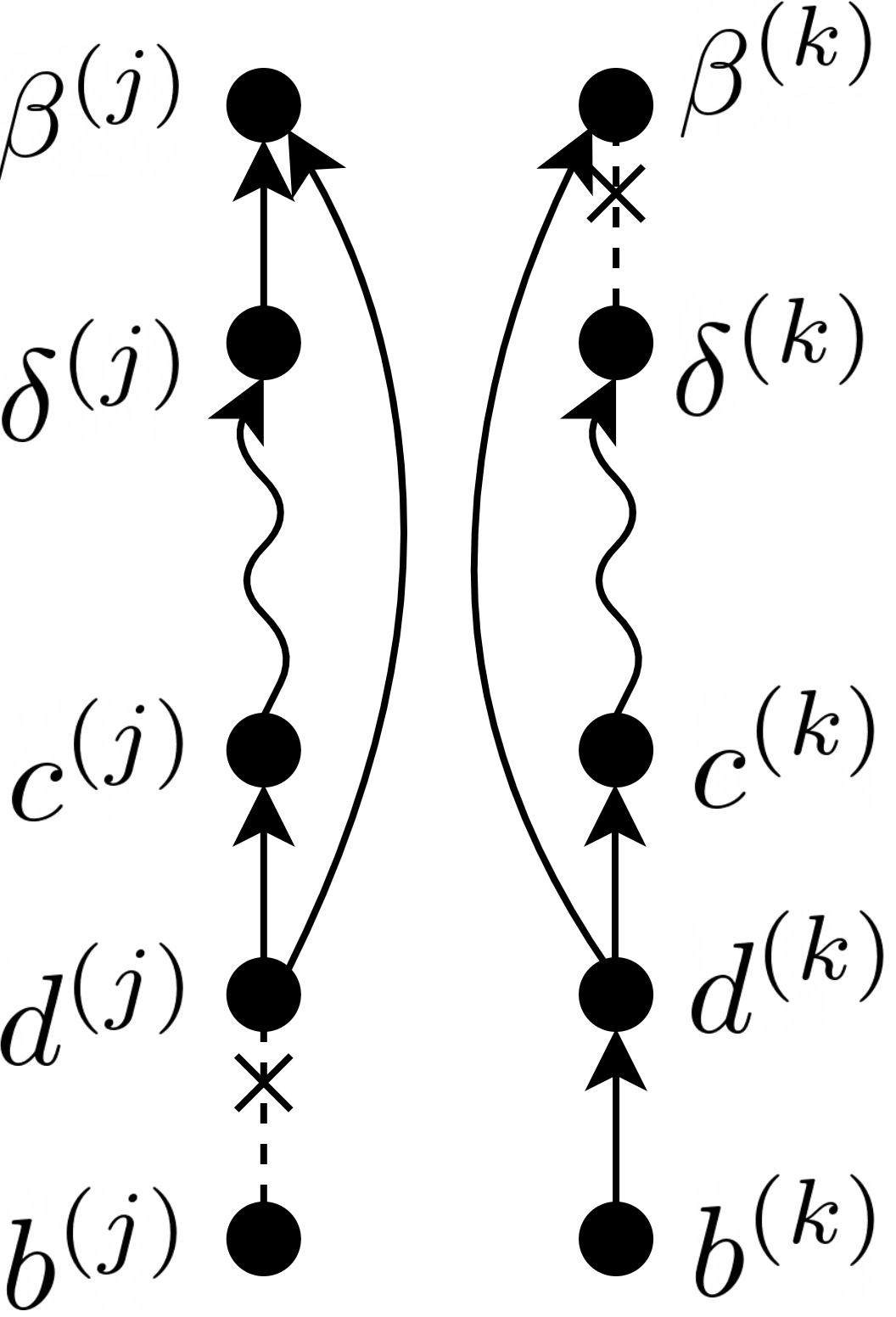}\
        \caption{}
        \label{fig:case4c}
    \end{subfigure}%
    \caption{An illustration of the d-connecting paths constructed by following the logic of case (iv) in the proof of lemma~\ref{lemma:first}. In each of (a) and (b), the segment that replaces $\langle \beta\up j,\dots, d\up j\rangle$ on $P_\mu$ to create the desired d-connecting path is colored in blue.}
\end{figure}
This completes the proof.
\end{proof}

\subsection{A d-connecting path in $\cM_\mu$ implies an analogous d-connecting path in $\cM_\cup$}
Again, we begin with some auxiliary results.

\begin{lemma}[At most 1 bidirected edge]
\label{lemma:one_bidirected_edge}
If there exists a connecting path between $a\up i$ and $b\up k$ given some $[C]$ where $a,b\in V$ and $C\subseteq V\setminus\{a,b\}$ in $\cM_\mu$, then there must exist a path $\widetilde P_\mu$ between $a \up i$ and $b \up k$ that is also connecting given $[C]$ that contains at most one bidirected edge.
\end{lemma}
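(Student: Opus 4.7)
The plan is to argue by induction on the number of bidirected edges appearing in the d-connecting path $P_\mu$. The base case (at most one bidirected edge) is immediate; for the inductive step, I assume $P_\mu$ contains at least two bidirected edges, and I will modify $P_\mu$ into a d-connecting path from $a\up i$ to $b\up k$ given $[C]$ with strictly fewer bidirected edges, so that the inductive hypothesis concludes the argument.

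The structural fact that drives the construction is that every node incident to a bidirected edge of $\cM_\mu$ must be a child of $y$ in $\cD_\mu$, since step~1 of Algorithm~\ref{alg:marginalization} is the only source of bidirected edges. Writing $T := \ch_{\cD_\mu}(y)$, Lemmas~\ref{lemma:bidirected_connections} and~\ref{lemma:bidirected_district} show that the subgraph of $\cM_\mu$ induced on $T$ is complete: each pair $u, v \in T$ is joined by $u \leftrightarrow v$ when neither endpoint is an ancestor of the other in $\cD_\mu$, and by a directed edge from ancestor to descendant otherwise. In particular, any two bidirected edges of $P_\mu$ have all four endpoints inside $T$ and are joined pairwise by edges in $\cM_\mu$.

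Let $v_p \leftrightarrow v_{p+1}$ and $v_q \leftrightarrow v_{q+1}$ be two bidirected edges of $P_\mu$ with $p+1 \le q$, so $v_p, v_{p+1}, v_q, v_{q+1} \in T$. The shortcut I will attempt is to replace the segment of $P_\mu$ from $v_p$ to $v_{q+1}$ by the single edge between these nodes in $\cM_\mu$. When that edge is bidirected, both endpoints retain arrowheads on the modified path that match the arrowheads contributed by the discarded adjacent bidirected edges, so the collider/non-collider status of the surviving endpoints is unchanged, d-connection given $[C]$ is preserved, and the bidirected-edge count strictly decreases. When the shortcut edge is directed, transitivity of ancestry combined with $v_p \leftrightarrow v_{p+1}$ and $v_q \leftrightarrow v_{q+1}$ eliminates most orientations (for instance $v_q \in \an_{\cD_\mu}(v_p)$ together with $v_p \in \an_{\cD_\mu}(v_{q+1})$ would force $v_q \in \an_{\cD_\mu}(v_{q+1})$, contradicting $v_q \leftrightarrow v_{q+1}$), leaving, up to symmetry, the regime where all four pairwise edges point from $\{v_p, v_{p+1}\}$ into $\{v_q, v_{q+1}\}$. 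In that regime I either take the direct shortcut $v_p \rightarrow v_{q+1}$ when the arrowhead structure at $v_p$ and $v_{q+1}$ permits, or reroute through $v_p \leftrightarrow v_{p+1} \rightarrow v_{q+1}$ to retain an arrowhead at $v_p$ (and symmetrically at $v_{q+1}$) whenever those endpoints needed to remain colliders on the original path.

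The main obstacle I expect is this last step: the case analysis must consistently decrease the bidirected-edge count while preserving d-connection at the surviving endpoints, which requires simultaneously juggling the arrowhead orientations inherited from $v_{p-1}$ and $v_{q+2}$ with those produced by the shortcut, and carefully tracking which of $v_p, v_{p+1}, v_q, v_{q+1}$ lies in $[C]$. Careful bookkeeping of these cases, using that any endpoint incident to a kept bidirected edge still carries an arrowhead on the modified path, is where the argument becomes delicate.
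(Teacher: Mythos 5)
There is a genuine gap. Your plan differs from the paper's: the paper does not work inside $\cM_\mu$ at all, but lifts the connecting path back to the mixture DAG $\cD_\mu$, observes that $y$ has in-degree $0$ and hence is never a collider, short-circuits the path between the first and last visit to $y$, and then re-marginalizes — so the single remaining visit to $y$ produces at most one bidirected edge. Your induction on the number of bidirected edges with local shortcuts among $\ch_{\cD_\mu}(y)$ is a legitimate alternative strategy in principle, but two of its steps fail as stated. First, the claimed classification of the directed case is wrong: transitivity of ancestry does not force all four cross edges between $\{v_p,v_{p+1}\}$ and $\{v_q,v_{q+1}\}$ to be directed the same way. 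For example, $v_p\in\an_{\cD_\mu}(v_{q+1})$ is perfectly consistent with $v_{p+1}$ and $v_{q+1}$ being incomparable, hence joined by a bidirected edge; the only configurations excluded are those that would make an endpoint of $v_p\leftrightarrow v_{p+1}$ or $v_q\leftrightarrow v_{q+1}$ an ancestor of the other.

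Second, and more seriously, the hard case is not resolved and cannot be resolved with the moves you allow yourself. Suppose $v_{p-1}\starrightarrow v_p\leftrightarrow v_{p+1}$ with $v_p\in[C]$ (so $v_p$ must stay a collider) and the shortcut edge is $v_p\rightarrow v_{q+1}$. The direct shortcut turns $v_p$ into a non-collider in $[C]$ and blocks the path. The reroute $v_p\leftrightarrow v_{p+1}\rightarrow v_{q+1}$ (or $v_p\leftrightarrow v_{p+1}\leftrightarrow v_{q+1}$) makes $v_{p+1}$ a non-collider (resp.\ collider) whose activation status given $[C]$ is not controlled by the original path, and in the bidirected variant the count of bidirected edges need not decrease, so the induction stalls. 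The correct escape in exactly this situation is the one the paper uses: because $v_p$ was a collider in $\cD_\mu$, there is a directed edge $v_{p-1}\rightarrow v_p$ in $\cD_\mu$, and step~2 of Algorithm~\ref{alg:marginalization} then adds the edge $v_{p-1}\rightarrow v_{q+1}$, allowing the new path to bypass $v_p$ entirely. That edge has one endpoint ($v_{p-1}$) that need not be a child of $y$, so it lies outside the vocabulary of shortcuts your argument considers. To repair your proof you would need to incorporate these step-2 edges into the case analysis; as written, the argument does not go through.
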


\begin{proof}
Since $a \up i$ and $b\up k$ are connected given $[C]$ in $\cM_\mu$, then they must also be connected given $[C]$ in $\cD_\mu$. Let $P_\mu$ denote the path connecting $a\up i$ to $b\up k$ given $[C]$ in $\cD_\mu$. Let $P_\mu = \langle a\up i, u_1,\dots, u\up l\rangle$ and let $u_x,u_z$ be the first and last occurrences of the vertex $y$ on $P_\mu$, respectively, if any. Since $y$ has an in-degree of 0, 
neither $u_x$ nor $u_z$ can be a collider. Hence, we can concatentate the paths $P_1 = \langle a\up i,\dots,u_x\rangle$ and $P_2=\langle u_z,\dots,b\up k\rangle$ to get a connecting path given $[C]$ in $\cD_\mu$.

Now, if $u_{x-1}$ is neither an ancestor nor a descendant of $d_{z+1}$, then in $\cM_\mu$, we will have the path
$a\up i, \dots, u_{x-1}, u_{z+1}, \dots, b\up k$ by virtue of Algorithm~\ref{alg:marginalization}, since it adds a bidirected edge between any pair of children of $y$. This is a path from $a\up i$ to $b\up k$ that is also connected given $[C]$ that contains only 1 bidirected edge.

Otherwise, (W.L.O.G) $u_{x-1}\in\an_{\cD_\mu}(u_{z+1})$, i.e., there is a directed path from $u_{x-1}$ to $u_{z+1}$ in $\cD_\mu$. 
Step 3 of Algorithm~\ref{alg:marginalization} adds the edge $u_{x-1} \rightarrow u_{z+1}$ to $\cM_\mu$ to create the path $\widetilde P_\mu := \langle a\up i,\dots,u_{x-1},u_{z+1},\dots,b\up k\rangle$.
This path is from $a\up i$ to $b\up k$ and passes through no bidirected edges. If this path is active, then we are done. If this path is not active, then, 
since $\langle a\up i\dots,u_{x-1}\rangle$ and $\langle u_{z+1},\dots,b\up k$ are active, $\widetilde P_\mu$ must be inactive by virtue of $u_{x-1}\in [C]$. But since $P_\mu$ in $\cD_\mu$ is connecting, this implies that $u_{x-1}$ must have been a collider on that path, hence we have the edge $u_{x-2}\rightarrow u_{x-1}$ in $\cD_\mu$ and $\cM_\mu$. Step 2 of Algorithm~\ref{alg:marginalization} adds $u_{x-2}\rightarrow u_{z+1}$ in such a case. Then, the path $\langle a\up i,\dots,u_{x-1}, u_{z+1},\dots,b\up k\rangle$ must be connecting from $a\up i$ to $b\up k$ given $[C]$, which completes the proof.
\end{proof}

\begin{lemma}[A Connecting Path in $\cM_\mu$ implies a connecting path in $\cM_\cup$]
\label{lemma:second}
Under the assumptionin Defintion~\ref{def:ordering},
if there is a connecting path between $a\up i$ and $b\up k$ given some $[C]$ in $\cM_\mu$  for some $i,k\in\{1,\dots,K\}$, where $C\subseteq V\setminus\{a,b\}$, then there is a connecting path between $a$ and $b$ given $C$ in $\cM_\cup$.
\end{lemma}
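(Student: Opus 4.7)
The plan is to apply Lemma~\ref{lemma:one_bidirected_edge} to reduce to a connecting path $P_\mu$ in $\cM_\mu$ from $a\up i$ to $b\up k$ given $[C]$ containing at most one bidirected edge, and then project $P_\mu$ to $\cM_\cup$ via the map $v\up j \mapsto v$. The definition of $\cM_\cup$ guarantees that any directed (resp.\ bidirected) edge within a single copy $\cM\up j$ in $\cM_\mu$ projects to a directed (resp.\ bidirected) edge in $\cM_\cup$ with the same orientation, so the only subtlety is how cross-copy edges project.

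First I would observe that every cross-copy edge in $\cM_\mu$ is bidirected, because directed edges across copies would require cross-copy ancestry in $\cD_\mu$, but since $y$ has in-degree zero and is the only node connecting different copies in $\cD_\mu$, no such ancestry exists. Consequently, steps 2 and 3 of Algorithm~\ref{alg:marginalization} cannot produce cross-copy directed edges. It follows that a path with at most one bidirected edge crosses copies at most once, giving three cases: (i) $P_\mu$ lies entirely in a single copy $\cM\up j$, forcing $i=j=k$; (ii) $P_\mu$ has a single within-copy bidirected edge, still in a single copy; or (iii) $P_\mu$ has exactly one cross-copy bidirected edge, say $w_m\up i \leftrightarrow w_0'\up k$ with $w_m,w_0' \in V\setminus V_{\textrm{INV}}$, splitting $P_\mu$ into directed-only sub-paths in $\cM\up i$ and $\cM\up k$.

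Cases (i) and (ii) are immediate: projecting $P_\mu$ preserves all edge types and orientations, and since descendants of $v\up j$ in $\cM\up j$ map to descendants of $v$ in $\cM_\cup$, every collider activation by an element of $C$ or one of its descendants carries over. For case (iii), I would analyze how $w_m\up i \leftrightarrow w_0'\up k$ projects. Since $w_m, w_0' \in V\setminus V_{\textrm{INV}}$, step 1 of Algorithm~\ref{alg:marginalization} applied to each $\widetilde\cD\up j$ inserts $w_m\up j \leftrightarrow w_0'\up j$ in $\cM\up j$. If this bidirected survives step 3 in at least one $\cM\up j$ (sub-case A), then $w_m \leftrightarrow w_0'$ lies in $\cM_\cup$, direct projection preserves edge types, and the path is connecting in $\cM_\cup$ given $C$.

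The main obstacle is sub-case (B), where step 3 replaces $w_m\up j \leftrightarrow w_0'\up j$ with a directed edge in every $\cM\up j$; poset-compatibility then forces all replacements to point the same way, say $w_m\up j \to w_0'\up j$, so $w_m \to w_0'$ sits in $\cM_\cup$ but $w_m \leftrightarrow w_0'$ does not. Direct projection turns the right-side arrowhead at $w_m$ into an arrowtail, which may flip $w_m$'s collider status; the dangerous scenario is $w_{m-1}\up i \to w_m\up i$ on $P_\mu$ with $w_m \in C$. To resolve it, I would exploit the fact that in Case (B), $w_m\up i \in \an_{\widetilde\cD\up i}(w_0'\up i)$, so step 2 of Algorithm~\ref{alg:marginalization} applied to $\widetilde\cD\up i$ adds $w_{m-1}\up i \to w_0'\up i$ in $\cM\up i$ whenever $w_{m-1}\up i \to w_m\up i$ is an original edge of $\widetilde\cD\up i$; when $w_{m-1}\up i \to w_m\up i$ instead arose from step 3, a further ancestry/step-1 argument yields a directed edge $w_{m-1}\up i \to w_0'\up i$ in $\cM\up i$ via the same children-of-$y$ mechanism. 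Iterating this observation backward along the copy-$i$ segment of $P_\mu$ produces a directed path in $\cM\up i$ (hence in $\cM_\cup$) from $a\up i$ to $w_0'\up i$ that bypasses $w_m$. Concatenating with the projected copy-$k$ segment gives a path in $\cM_\cup$ from $a$ to $b$ that does not visit $w_m$, so conditioning on $w_m \in C$ no longer blocks it, and the resulting path is connecting given $C$.
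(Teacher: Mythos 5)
Your overall strategy matches the paper's: reduce to a connecting path with at most one bidirected edge via Lemma~\ref{lemma:one_bidirected_edge}, observe that all cross-copy edges of $\cM_\mu$ are bidirected so the path changes copies at most once, project the within-copy segments edge-by-edge into $\cM_\cup$, and repair the junction at the cross-copy edge. Your sub-cases A and B correctly reproduce the paper's treatment of the situation where that bidirected edge joins copies of two \emph{distinct} variables (the paper's case $u_x\neq u_{x+1}$), including the step-2 bypass of the node whose arrowhead degrades to a tail after projection.

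The gap is that you never address the case where the unique cross-copy bidirected edge joins two copies of the \emph{same} variable, $v\up i\leftrightarrow_{\cM_\mu} v\up k$ with $i\neq k$. This is not a degenerate corner: step 1 of Algorithm~\ref{alg:marginalization} adds $v\up i\leftrightarrow v\up k$ for every $v\in V\setminus V_{\textrm{INV}}$ (all copies of such a $v$ lie in $\ch_{\cD_\mu}(y)$), and these are exactly the edges that the path produced by Lemma~\ref{lemma:one_bidirected_edge} tends to use, since that path is spliced together at a pair of children of $y$. When $w_m=w_0'$, both of your sub-cases collapse: there is no edge $w_m\leftrightarrow w_0'$ or $w_m\to w_0'$ in $\cM_\cup$ to project onto (it would be a self-loop), and the projection identifies the two endpoints of the bidirected edge into a single junction vertex whose collider status is now determined by the two flanking within-copy edges rather than by the vanished bidirected edge. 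One must argue separately that this junction stays active given $C$; the paper's case (i) is devoted to exactly this and is the longest and most delicate part of its proof (it handles the sub-case where one copy is a non-collider by attempting to re-route the first segment entirely into copy $k$, and, when that fails, by locating the first edge that differs between the two copies and shortcutting via Lemmas~\ref{lemma:changing_arrowtips} and~\ref{lemma:bidirected_district}). A secondary, smaller issue: in your sub-case B, the claim that iterating the bypass ``produces a directed path in $\cM\up i$ from $a\up i$ to $w_0'\up i$'' does not follow, since the copy-$i$ segment need not be directed toward $w_m$; but it is also unnecessary, because a single application of the step-2 bypass already leaves the collider status of $w_{m-1}$ unchanged.
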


\begin{proof}
By Lemma~\ref{lemma:one_bidirected_edge}, we must have a connecting path in $\cM_\mu$ between $a\up i$ and $b\up k$ given $[C]$ that passes through at most 1 bidirected edge. If there exist paths that pass through no bidirected edges, take any such path. Otherwise, take any path that passes through 1 bidirected edge. Call this path $P_\mu= \langle a\up i =: u_0\up i,u_1\up i,\dots,u_{m}\up k := b\up k\rangle$.

By the structure of $\cM_\mu$ discussed in the beginning of this section, only a bidirected edge can connect a node $u_x\up i$ to a node $u_{x+1}\up{k}$ in $\cM_\mu$ for $i\neq  k$. Hence, if there is no bidirected edge on this path, then all the nodes $u_0\up i,\dots,u_m\up i$ will be contained in the same MAG $\cM\up i$. Each edge along this d-connecting path given $[C]$ will show up in $\cM_\cup$, and hence we can create a path $\langle u_0,\dots,u_m \rangle$ that is d-connecting given $C$ in $\cM_\cup$.

In the case where $P_\mu$ contains a bidirected edge, let us label the nodes incident as $u_x\up i\leftrightarrow u_{x+1}\up k$. The segments $\langle u_0\up i,\dots,u_x\up i \rangle$ and $\langle u_{x+1}\up k,\dots,u_{m}\up k\rangle$ will each be contained in $\cM\up i$ and $\cM\up k$ respectively, and hence we can find d-connecting paths $\langle u_0,\dots,u_x\rangle$ and $\langle u_{x+1},\dots,u_{m}\rangle$ in $\cM_\cup$ that are each d-connecting given $C$. We must now show that we can connect these paths to create a d-connecting path given $C$  from $u_0 = a$ to $u_m = b$ in $\cM_\cup$.

Of course, there is no difficulty if the bidirected edge $u_x\leftrightarrow u_{x+1}$ appears in $\cM_\cup$, since we can connect these two subpaths with this bidirected edge and have the desired connecting path. The difficulty is when this edge does not appear. From the definition of $\cM_\cup$, we can see that this only happens when the bidirected edge connects $u_{x}\up i$ and $u_{x+1}\up k$ for $i\neq k$, i.e., the bidirected edge is not contained in any MAG $\cM\up j$ for any $j$. We split the remainder into two cases.

\begin{enumerate}
    \item [(i)] case $u_x = u_{x+1}$.
    If $u_x\up i$ and $u_{x+1}\up k$ are both colliders on $P_\mu$, then we must have $u_x,u_{x+1}\in C$. Then $c=d$ will be an active collider given $C$ in $\cM_\cup$ on the path obtained by concatenating $\langle u_0,\dots,u_x\rangle$ and $\langle u_{x+1},\dots, u_m\rangle$ in $\cM_\cup$, and hence we have our d-connecting path given $C$. We therefore assume, W.L.O.G., that $u_x\up i$ is not a collider on $P_\mu$. 
    
    If there is a path $\langle u_0\up k,\dots, u_x\up k\rangle$ in $\cM_\mu$ where every pair of adjacent vertices $u_n\up k, u_{n+1}\up k$ on this path are connected by the same edge type as the pair $u_n \up i, u_{n+1}\up i$ in $P_\mu$, then we can replace the segment of $\langle u_0\up i,\dots,u_x\up i\rangle$ of $P_\mu$ with $\langle u_0\up k$ to $u_x\up k\rangle$ to obtain a path that is d-connecting given $[C]$ and contained completely in $\cM\up k$, meaning that we can find the desired d-connecting path given $C$ in $\cM_\cup$.
    If no such path exists in $\cM_\mu$, then starting at $u_x\up i$ and walking backwards along $P_\mu$ towards $u_0\up i$, we will find an edge $u_z \up i \startailstar_{\cM_\mu} u_{z+1}\up i$ where $u_z\up k \startailstar_{\cM_\mu} u_{z+1}\up k$ is not an edge. Take the first such edge found (i.e., the edge closest to $u_{x}\up i$ that satisfies this; see Figure~\ref{fig:caseoneaa}). 
    
    If $u_z\up i \rightarrow_{\cM_\mu}u_{z+1} \up i$, then by Lemmas~\ref{lemma:changing_arrowtips} and ~\ref{lemma:bidirected_district}, there is a bidirected edge $u_{z+1}\up i\leftrightarrow_{\cM_\mu} u_x\up k$, implying that  step 1 of Algorithm~\ref{alg:marginalization} adds another bidirected edge $u_{z+1}\up k\leftrightarrow u_x\up k$. 
    If $u_{z+1}\up i$ is not a descendant of $u_x\up i$, then the bidirected edge $u_{z+1}\up k \leftrightarrow_{\cM_\mu} u_x\up k$ would not be removed by step 3 of Algorithm~\ref{alg:marginalization} and hence will appear in $\cM_\mu$. 
    Furthermore, we will have collliders $\alpha\up i$ and $\gamma\up i$ between $u_{z+1}\up i$ and $u_x\up i$ that are in $[C]$ that will be descendants of $u_{z+1}\up i$ and $u_x\up i$ respectively. The ordering assumption ensures that $\alpha$ and $\gamma$ are descendants of $u_{z+1}$ and $u_x$ in $\cM_\cup$, respectively. Hence, the path 
    $\langle u_0,\dots, u_{z+1},u_x,\dots,u_m \rangle$
    in $\cM_\cup$ is d-connecting in $\cM_\cup$ given $C$. Figures~\ref{fig:caseoneaa} and \ref{fig:caseonebb} illustrate this.
    
    \begin{figure}[!t]
    \centering
    \begin{subfigure}[t]{0.24\textwidth}
        \centering
        \includegraphics[height=1.2in]{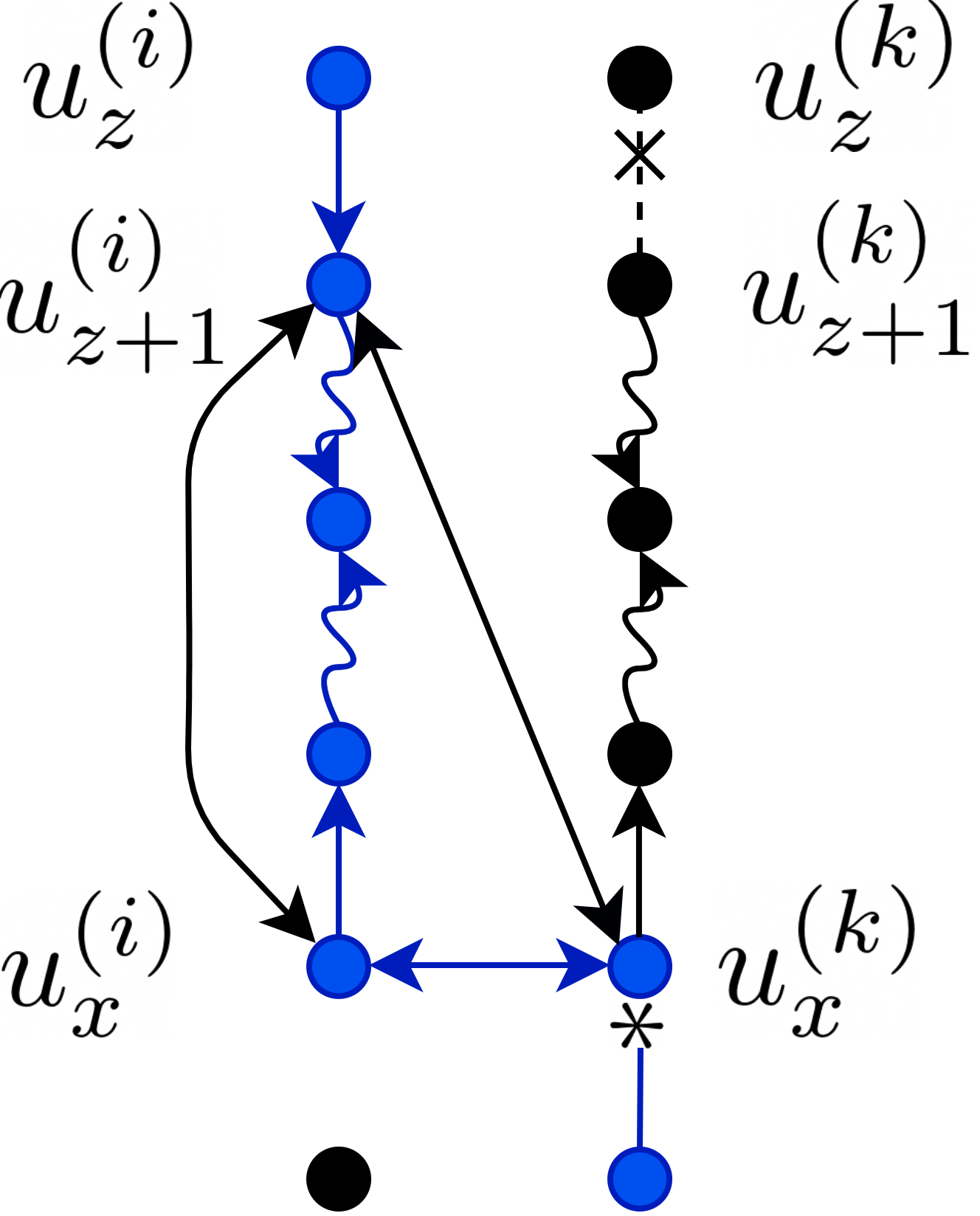}\
        \caption{}
        \label{fig:caseoneaa}
    \end{subfigure}%
    \begin{subfigure}[t]{0.24\textwidth}
        \centering
        \includegraphics[height=1.2in]{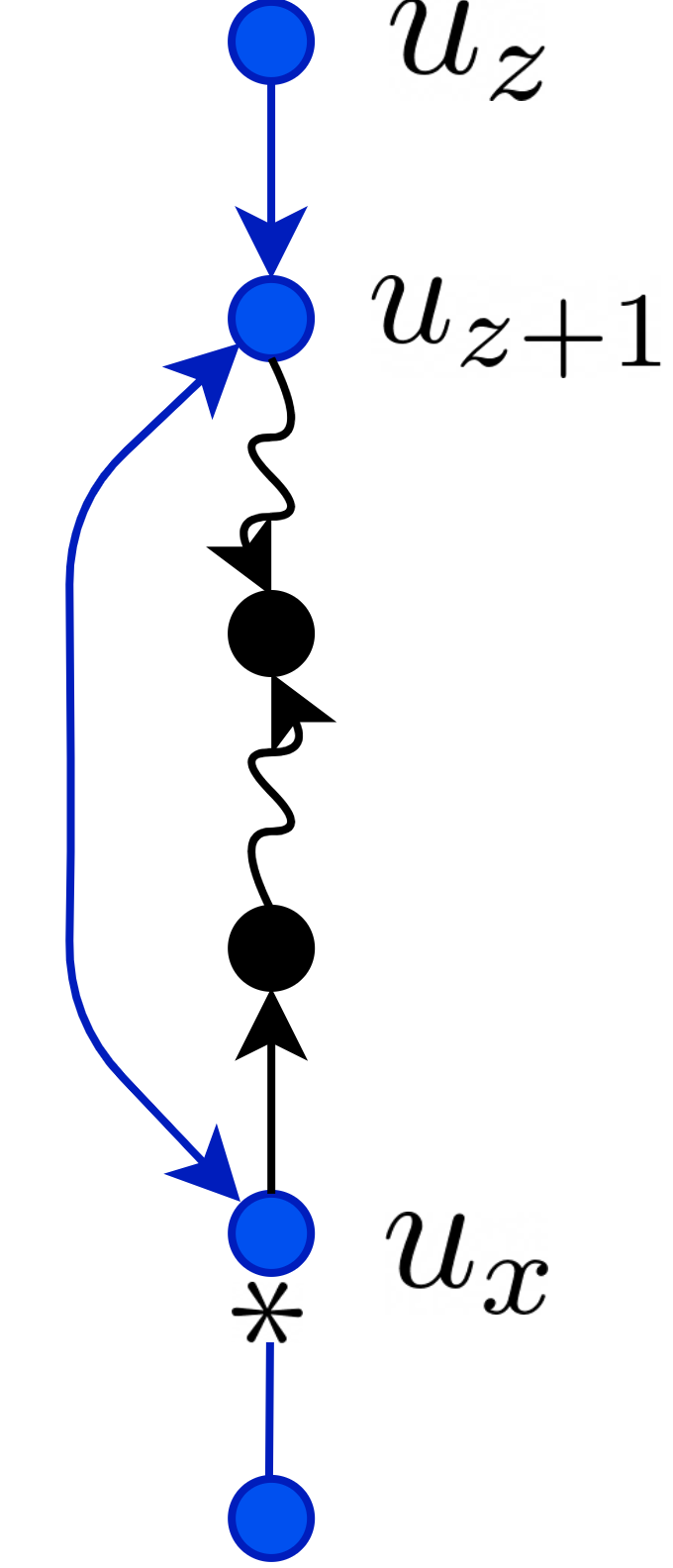}\
        \caption{}
        \label{fig:caseonebb}
    \end{subfigure}%
    \begin{subfigure}[t]{0.24\textwidth}
        \centering
        \includegraphics[height=1.2in]{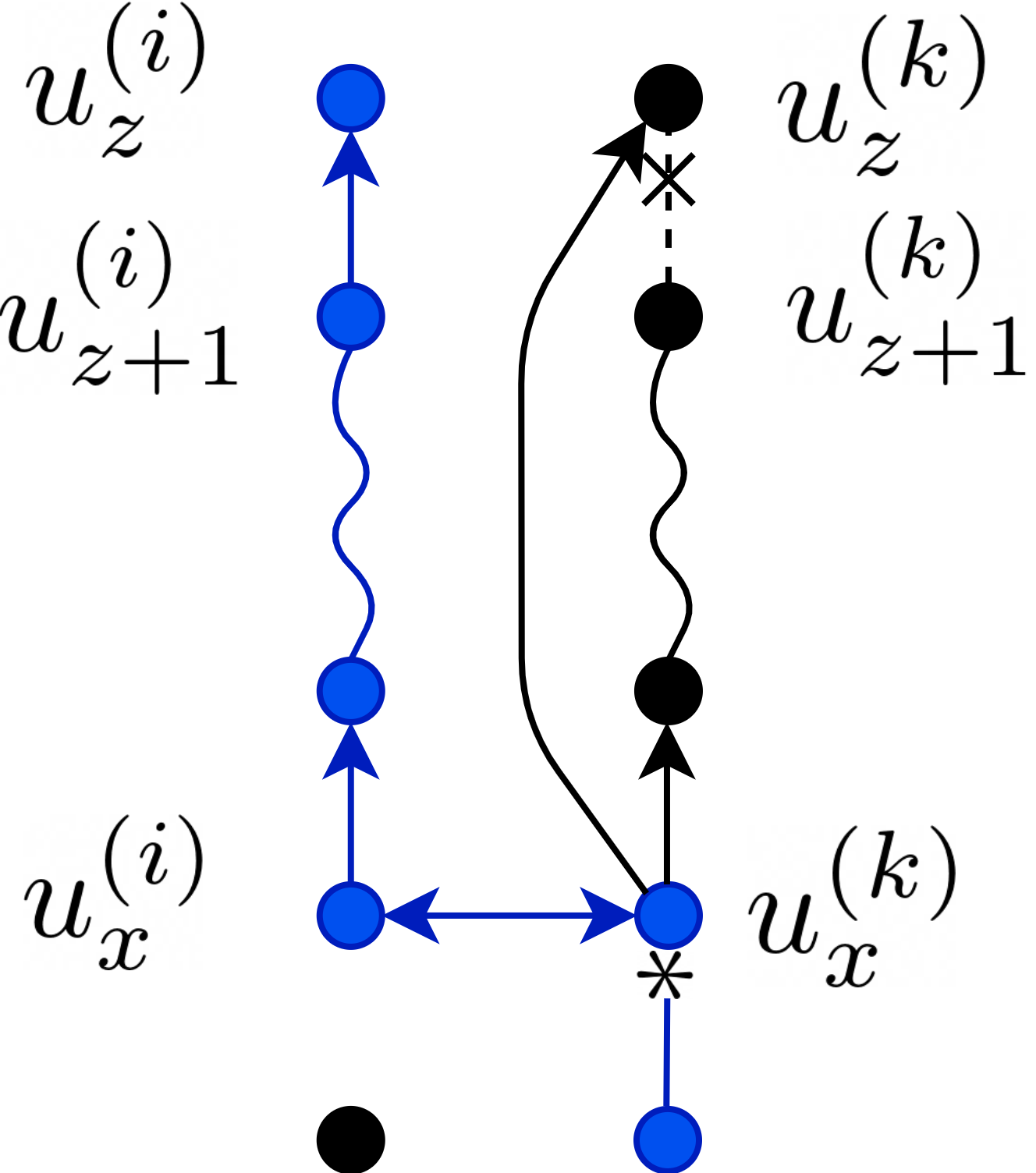}\
        \caption{}
        \label{fig:caseonecc}
    \end{subfigure}%
    \begin{subfigure}[t]{0.24\textwidth}
        \centering
        \includegraphics[height=1.2in]{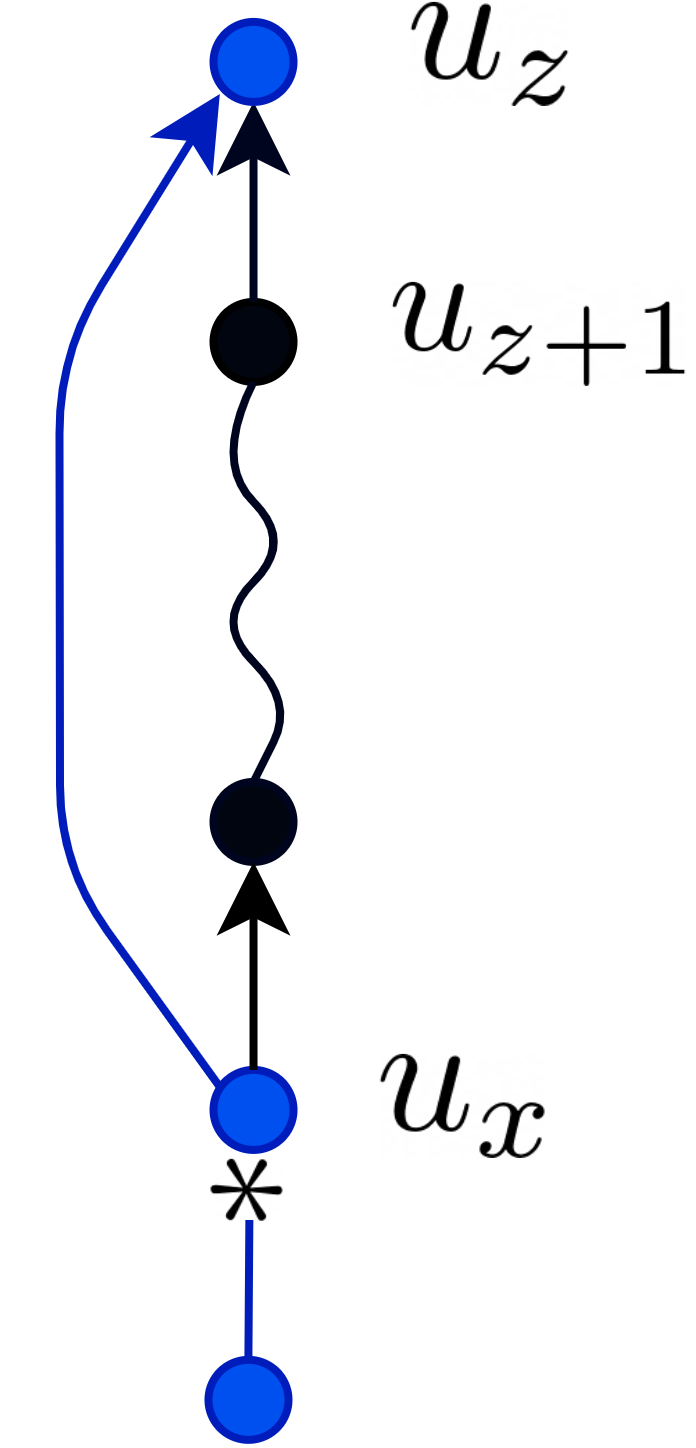}\
        \caption{}
        \label{fig:caseonedd}
    \end{subfigure}%
   \caption{An illustration of the logic for case (i) for the proof of Lemma~\ref{lemma:second}. In (a) and (c), we color in blue the relevant segments of the d-connecting path in $\cM_\mu$, while in (b) and (d), we color in blue the relevant segments of the constructed d-connecting path in $\cM_\cup$.}
\end{figure}
    
    Now we check the case where $u_z\up i \leftarrow u_{z+1}\up i$. 
    If $u_z \up i$ is not a descendant of $u_x\up i$, then we can construct a path in $\cM_\cup$ by a similar argument to the above.
    If $u_z \up i$ is a descendant of $u_x\up i$, then by Lemma~\ref{lemma:bidirected_district}, there is a directed edge $u_z\up i\leftarrow_{\cM_\mu} u_x\up i$, which appears as $u_z\leftarrow_{\cM_\cup} u_x$. We can use this to construct a path in $\cM_\cup$ as shown in Figures~\ref{fig:caseonecc} and \ref{fig:caseonedd}. This path is active since $u_x\up i$ is not a collider, and hence $u_x\up i\not\in [C]$, implying that $u_x\not\in C$.
\end{enumerate}

\begin{enumerate}
    \item[(ii)] case $u_x\neq u_{x+1}$:
    Step 1 of Algorithm~\ref{alg:marginalization} adds the bidirected edge $u_x\up k\leftrightarrow u_{x+1}\up k$, which will show up in $\cM_\cup$ as an edge $u_x \leftrightarrow u_{x+1}$ unless it is removed by step 3; so this is the only case we must check.
    Assume W.L.O.G. that this edge is removed by step 3 because $u_{x}\up k$ is a descendant of $u_{x+1}\up k$ in $\cD_\mu$ and therefore in $\cM_\mu$. 
    Then a directed edge $u_{x}\up i \leftarrow u_{x+1}\up i$ will be added instead, which appears in $\cM_\cup$ as $u_x \leftarrow u_{x+1}$. 
    The only case where we cannot join $\langle u_0,\dots,u_{x}\rangle $ and $\langle u_{x+1},\dots, u_m$ in $\cM_\cup$ together using this directed edge $u_x\leftarrow u_{x+1}$ to create a d-connected path given $C$ is
     when $u_{x+1}\up k$ is in $[C]$, and hence is a collider on $P_\mu$. This implies that we have $u_{x+1}\up k\leftarrow_{\cM_\mu} u_{x+2}\up k$. in which case step 2 of Algorithm~\ref{alg:marginalization} would have added the edge $u_{x+1}\up k\leftarrow u_{x+2}\up k$, which appears as $u_{x+1} \leftarrow_{\cM_\cup} u_{x+2}$.
    This edge can be used to create the d-connecting path given $C$ given by $\langle u_0,\dots, u_{x},u_{x+2},\dots,u_m\rangle$ in $\cM_\cup$. This is illustrated in Figure~\ref{fig:casetwo} and completes the proof.
\end{enumerate}
\vspace{-1.0cm}
\end{proof} 

\begin{figure}[!t]
    \centering
    \begin{subfigure}[t]{0.24\textwidth}
        \centering
        \includegraphics[height=1.2in]{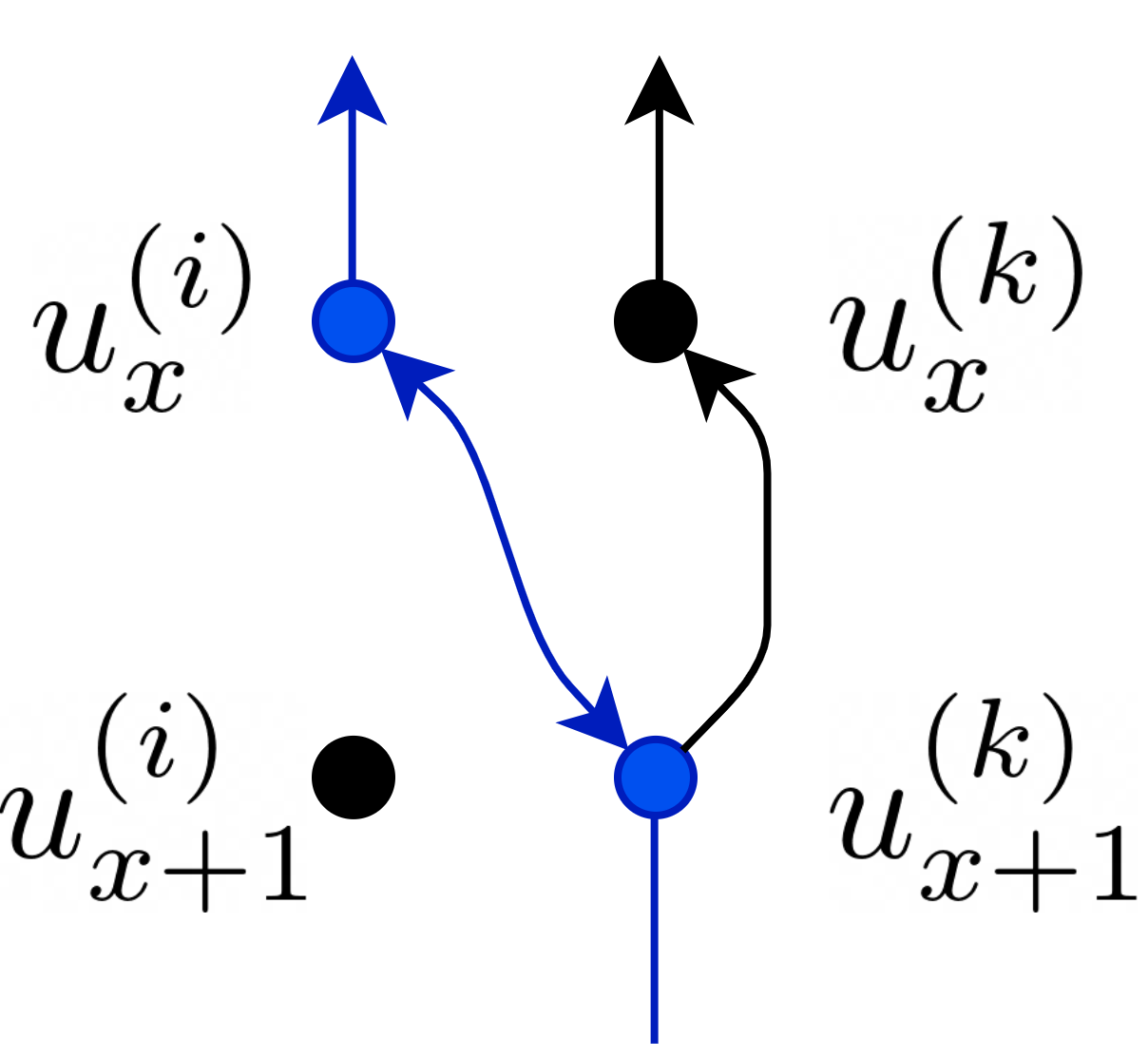}\
        \caption{}
        \label{fig:casetwoa}
    \end{subfigure}%
    \begin{subfigure}[t]{0.24\textwidth}
        \centering
        \includegraphics[height=1.2in]{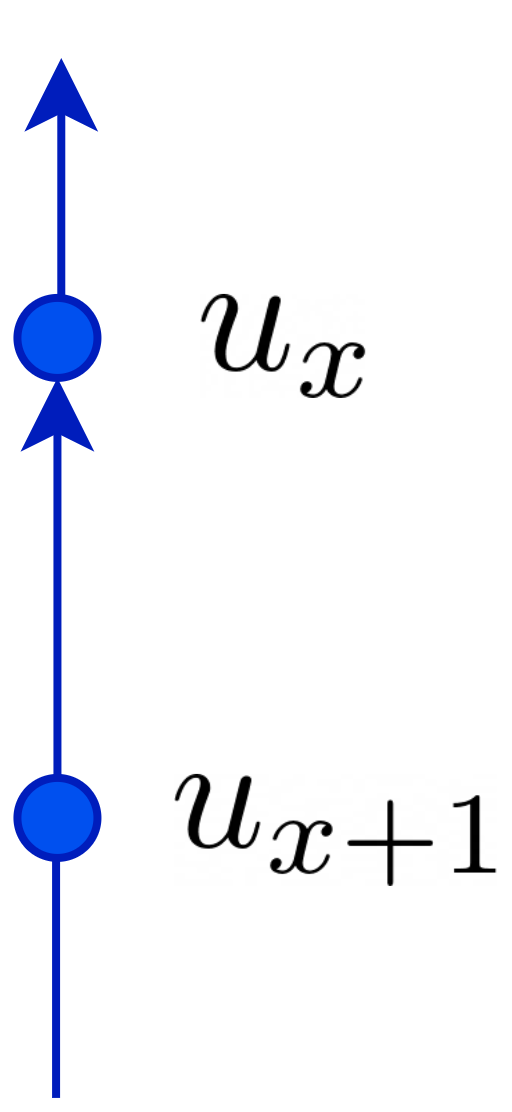}\
        \caption{}
        \label{fig:casetwob}
    \end{subfigure}%
    \begin{subfigure}[t]{0.24\textwidth}
        \centering
        \includegraphics[height=1.2in]{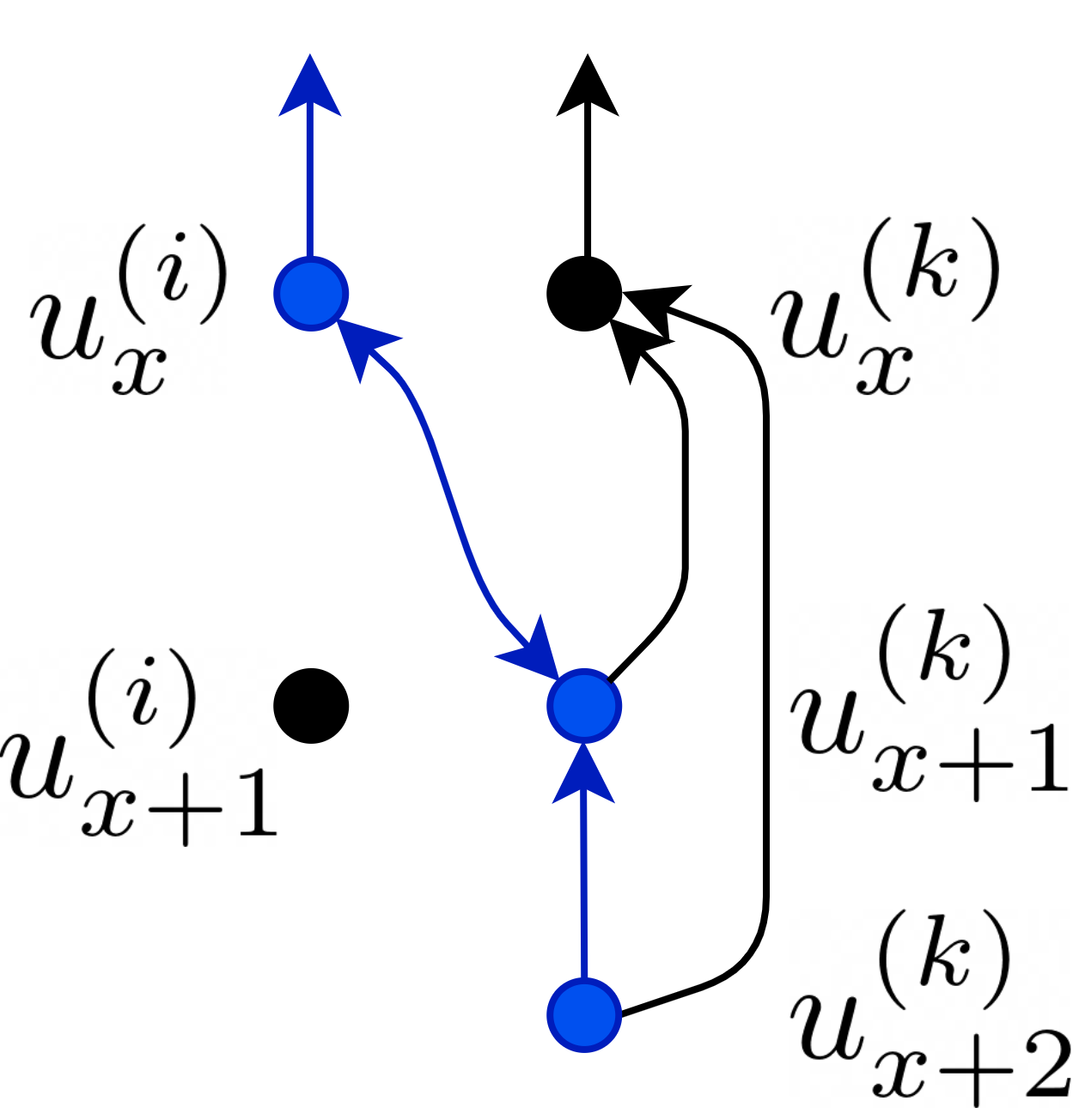}\
        \caption{}
        \label{fig:casetwoc}
    \end{subfigure}%
    \begin{subfigure}[t]{0.24\textwidth}
        \centering
        \includegraphics[height=1.2in]{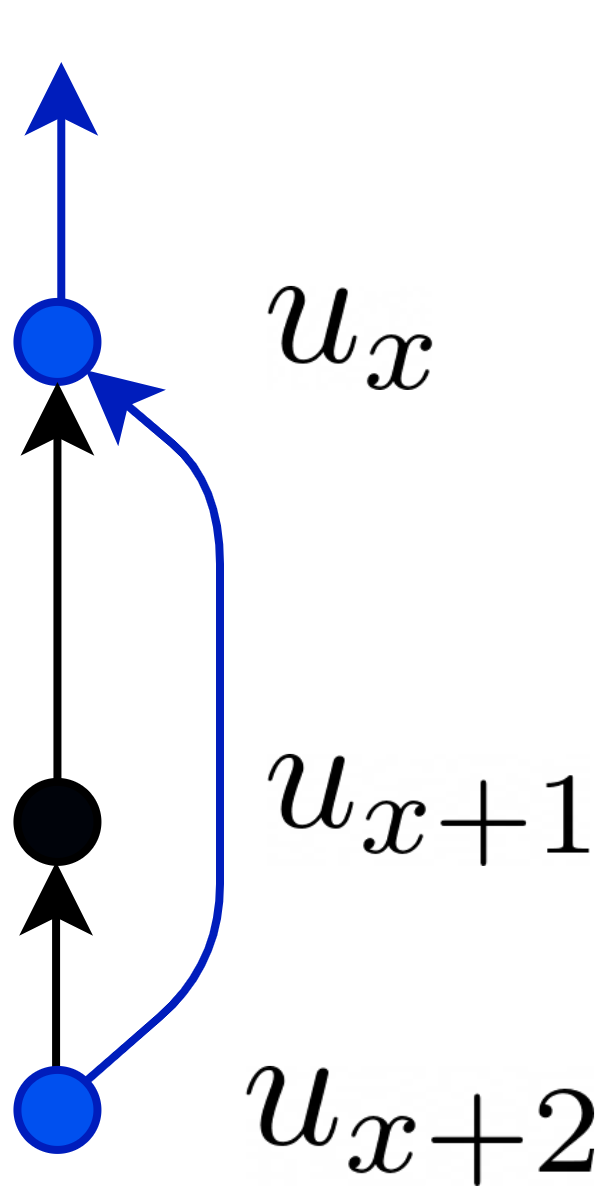}\
        \caption{}
        \label{fig:casetwod}
    \end{subfigure}%
   \caption{An illustration of the logic for case (ii) for the proof of Lemma~\ref{lemma:second}. In (a) and (c), we color in blue the relevant segments of the d-connecting path in $\cM_\mu$, while in (b) and (d), we color in blue the relevant segments of the constructed d-connecting path in $\cM_\cup$.}
           \label{fig:casetwo}
\end{figure}

\subsection{The main result}
Finally, we use the results of the first two steps to prove the following.
\begin{thm}
\label{lemma:third}
Under the assumption in Definition~\ref{def:ordering}, for any disjoint $A,B,C\subseteq V$, 
$[A]$ and $[B]$ are d-separated given $[C]$ in $\cD_\mu$ if and only if $A$ and $B$ are d-separated given $C$ in $\cM_\cup$.
\end{thm}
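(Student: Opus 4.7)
The plan is to reduce the claim to Lemmas~\ref{lemma:first} and~\ref{lemma:second} via two observations: passing from the mixture DAG $\cD_\mu$ to the mixture MAG $\cM_\mu$ preserves d-separation among subsets of $V$, and set-level d-separation decomposes into vertex-level d-separation.

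First I would invoke the standard marginalization result. Since $\cM_\mu$ is obtained by Algorithm~\ref{alg:marginalization} from $\cD_\mu$ by marginalizing out the single node $y$ of in-degree zero (and is indeed a MAG by Proposition~\ref{prop:marginal_mag}), and since $A, B, C \subseteq V$ do not involve $y$, the marginal ancestral graph construction of \citet{richardson2002ancestral} guarantees that $[A]$ and $[B]$ are d-separated given $[C]$ in $\cD_\mu$ if and only if $[A]$ and $[B]$ are d-separated given $[C]$ in $\cM_\mu$. Hence it suffices to prove the equivalence between d-separation of $A, B$ given $C$ in $\cM_\cup$ and d-separation of $[A], [B]$ given $[C]$ in $\cM_\mu$.

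Next I would read Lemmas~\ref{lemma:first} and~\ref{lemma:second} as contrapositives at the vertex level. For any $a, b \in V$ and $C \subseteq V \setminus \{a,b\}$, Lemma~\ref{lemma:first} shows that a connecting path between $a$ and $b$ given $C$ in $\cM_\cup$ yields indices $i, k$ and a connecting path between $a\up{i}$ and $b\up{k}$ given $[C]$ in $\cM_\mu$, while Lemma~\ref{lemma:second} supplies the reverse direction starting from any pair of indices. Taking contrapositives, $a$ and $b$ are d-separated given $C$ in $\cM_\cup$ if and only if for every choice of indices $i, k \in \{1, \dots, K\}$ the copies $a\up{i}$ and $b\up{k}$ are d-separated given $[C]$ in $\cM_\mu$.

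Finally I would lift this vertex-wise equivalence to arbitrary disjoint sets using the standard fact that in any graph admitting d-separation, two disjoint vertex sets are d-separated given a third if and only if every pair of vertices, one from each set, is d-separated given that third set. Applied to both $\cM_\cup$ and $\cM_\mu$, this immediately upgrades the pointwise equivalence to: $A$ is d-separated from $B$ given $C$ in $\cM_\cup$ iff $[A]$ is d-separated from $[B]$ given $[C]$ in $\cM_\mu$, which combined with the reduction to $\cD_\mu$ above completes the argument. The only delicate bookkeeping point, and what I would expect to be the main obstacle, is making sure the existential quantifier on indices appearing in Lemmas~\ref{lemma:first} and~\ref{lemma:second} cleanly matches the ``for every $i,k$'' statement at the vertex level, and that the elementwise decomposition of d-separation is applied consistently on both sides. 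Beyond that the theorem is essentially the clean assembly of the pieces already established.
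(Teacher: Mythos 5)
Your proposal is correct and follows essentially the same route as the paper's proof: reduce $\cD_\mu$ to the marginal MAG $\cM_\mu$, apply Lemmas~\ref{lemma:first} and~\ref{lemma:second} to get the pairwise equivalence, and lift to sets via the decomposition/composition property of d-separation (which the paper justifies by noting both $\cM_\mu$ and $\cM_\cup$ are MAGs and citing compositionality of d-separation for MAGs). The ``delicate bookkeeping'' you flag about the existential versus universal quantifiers over indices resolves exactly as you suspect once the lemmas are read contrapositively.
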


\begin{proof}
Since $\cM_\mu$ is the marginal MAG in $\cD_\mu$ with respect to the vertex $y$, the d-separation statements involving subsets not including $y$ are the same in both. By proposition~\ref{prop:marginal_mag}, $\cM_\mu$ is a MAG, hence d-separation in $\cM_\mu$ is compositional~\cite{sadeghi2014markov}; therefore for $A,B,C \subseteq V$ disjoint it holds that
 \begin{align*}
 \Big\{[A] \textrm{ sep from } [B] &\textrm{ in } \cM_\mu \textrm{ given } [C] \Big\}\\ & \Leftrightarrow  \Big\{a^i \textrm{ sep from } b^k  \textrm{ in } \cM_\mu \textrm{ given } [C] \textrm{ for all } a^i\in[A],b^k\in[B]\Big\}.
  \end{align*}
 Now Lemmas~\ref{lemma:first} and ~\ref{lemma:second} imply
 \begin{align*}
\Big\{a^i \textrm{ sep from } b^k  \textrm{ in } \cM_\mu \textrm{ given } [C] \textrm{ for all } &a^i\in[A],b^k\in[B]\Big\}\\
&\Leftrightarrow  \Big\{a \textrm{ sep from } b  \textrm{ given } C \textrm{ for all } a\in A,b\in B\Big\}.
 \end{align*}
 Finally, since $\cM_\cup$ is a MAG, applying compositionality gives 
 $$
  \Big\{a \textrm{ sep from } b  \textrm{in } \cM_\cup \textrm{ given } C \textrm{ for all } a\in A,b\in B\Big\} \Leftrightarrow  \Big\{A \textrm{ sep from } B \textrm{ given } C \textrm{ in } \cM_\cup \Big\},
 $$
 which completes the proof.
\end{proof}

\section{Proof of Proposition~\ref{prop:varying}}
\label{section:appendix_varying_proof}
$u\leftrightarrow_{\cM_\cup} v$ implies $u\up j\leftrightarrow_{\cM\up j}v\up j$ for some $j$, which implies $u\leftarrow y\rightarrow v$ in $\cD_\mu$. Hence, $u,v\in V\setminus V_{\textrm{INV}}$. By definition of $V_{\textrm{INV}}$, this implies the claim.

\section{Additional Experimental Results}
\label{appendix:experiments}
\subsection{Synthetic Data}
In the following, we present figures for the experiments described in Section~\ref{section:experiments} for additional values of $K$ and $n$, and when $p(j)$ is not uniform over the mixture components. Figures~\ref{fig:appendix_shd} and ~\ref{fig:appendix_shd_dir} shows the normalized SHD plot in evaluating the union graph as described in the main paper, while figures~\ref{appendix:fig_varying} and ~\ref{appendix:fig_varying_dir} shows the true and false positives in predicted $V\setminus V_{\textrm{INV}}$. Finally, Figure~\ref{appendix:fig_clustering} shows the result of $K$-means clustering.

\begin{figure}[H]
  \centering
  \begin{subfigure}[t]{0.3\textwidth}
    \includegraphics[width=\linewidth]{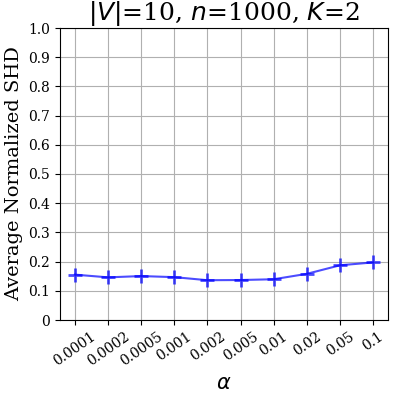}
  \caption{}
  \end{subfigure}
  \begin{subfigure}[t]{0.3\textwidth}
    \includegraphics[width=\linewidth]{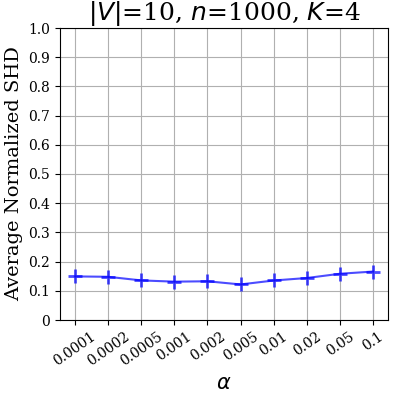}
    \caption{}
  \end{subfigure}
  \begin{subfigure}[t]{0.3\textwidth}
    \includegraphics[width=\linewidth]{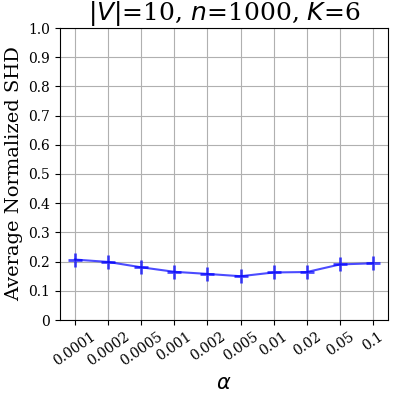}
    \caption{}
  \end{subfigure}
  \\
  \begin{subfigure}[t]{0.3\textwidth}
    \includegraphics[width=\linewidth]{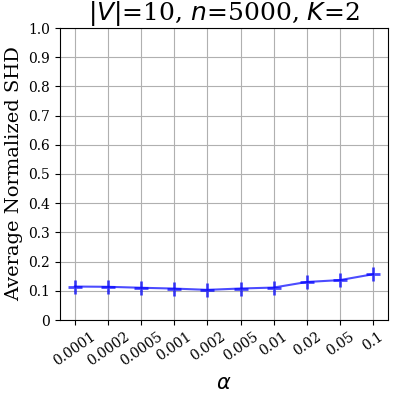}
    \caption{}
  \end{subfigure}
  \begin{subfigure}[t]{0.3\textwidth}
    \includegraphics[width=\linewidth]{figures/SHD-p10-n5000-k4.png}
    \caption{}
  \end{subfigure}
  \begin{subfigure}[t]{0.3\textwidth}
    \includegraphics[width=\linewidth]{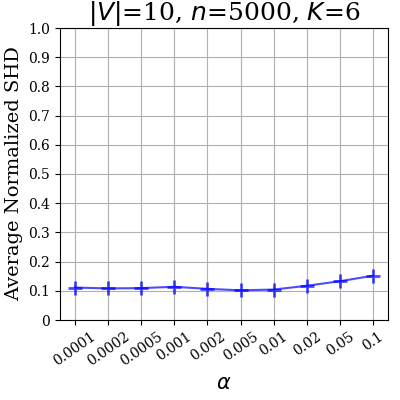}
    \caption{}
  \end{subfigure}
  \\
  \begin{subfigure}[t]{0.3\textwidth}
    \includegraphics[width=\linewidth]{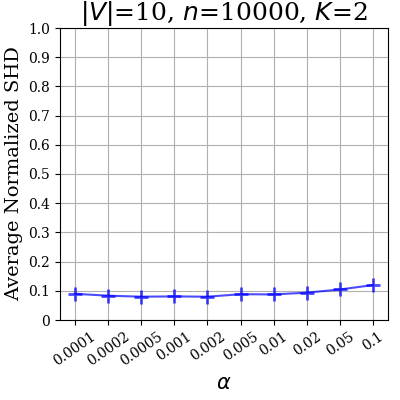}
    \caption{}
  \end{subfigure}
  \begin{subfigure}[t]{0.3\textwidth}
    \includegraphics[width=\linewidth]{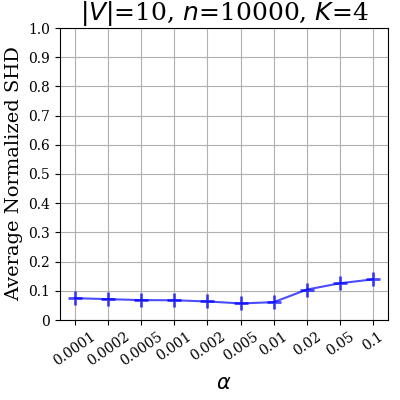}
    \caption{}
  \end{subfigure}
  \begin{subfigure}[t]{0.3\textwidth}
    \includegraphics[width=\linewidth]{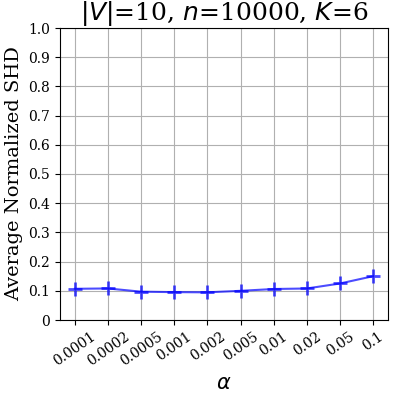}
    \caption{}
  \end{subfigure}
  \caption{Normalized SHD evaluating the estimation of the union graph from mixture data using FCI for $K\in \{2,4,6\}$ and $n\in\{1000,5000,10000\}$. We take $p(j)$ uniform over the mixture components.}
  \label{fig:appendix_shd}
\end{figure}

\begin{figure}[H]
  \centering
  \begin{subfigure}[t]{0.3\textwidth}
    \includegraphics[width=\linewidth]{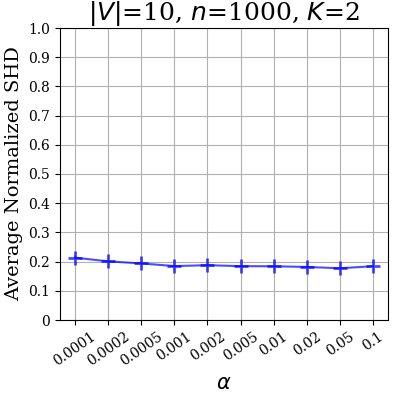}
  \caption{}
  \end{subfigure}
  \begin{subfigure}[t]{0.3\textwidth}
    \includegraphics[width=\linewidth]{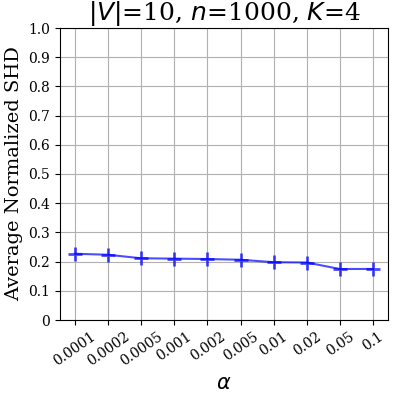}
    \caption{}
  \end{subfigure}
  \begin{subfigure}[t]{0.3\textwidth}
    \includegraphics[width=\linewidth]{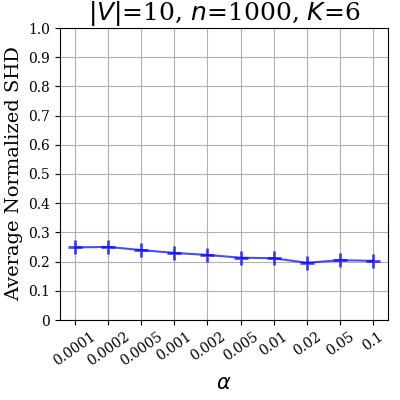}
    \caption{}
  \end{subfigure}
  \\
  \begin{subfigure}[t]{0.3\textwidth}
    \includegraphics[width=\linewidth]{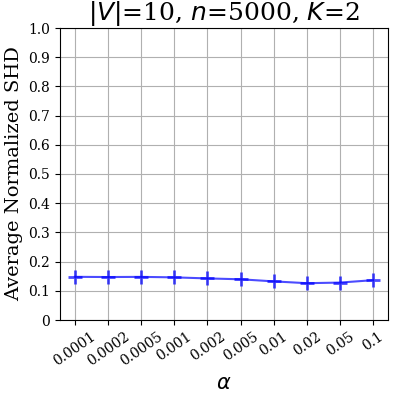}
    \caption{}
  \end{subfigure}
  \begin{subfigure}[t]{0.3\textwidth}
    \includegraphics[width=\linewidth]{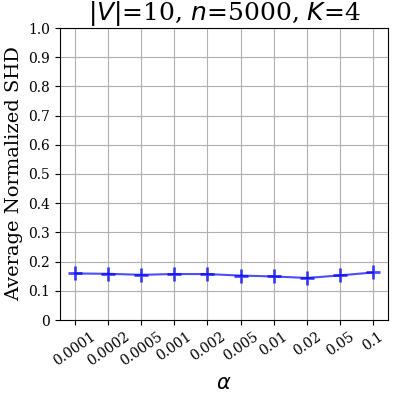}
    \caption{}
  \end{subfigure}
  \begin{subfigure}[t]{0.3\textwidth}
    \includegraphics[width=\linewidth]{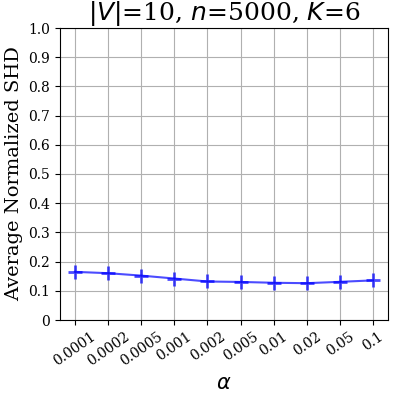}
    \caption{}
  \end{subfigure}
  \\
  \begin{subfigure}[t]{0.3\textwidth}
    \includegraphics[width=\linewidth]{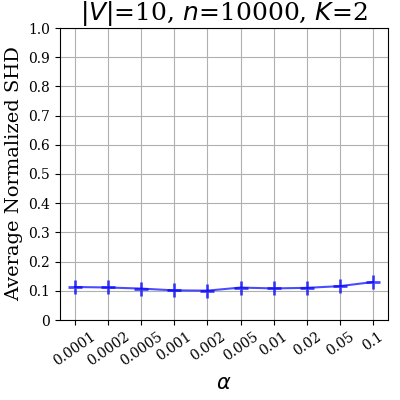}
    \caption{}
  \end{subfigure}
  \begin{subfigure}[t]{0.3\textwidth}
    \includegraphics[width=\linewidth]{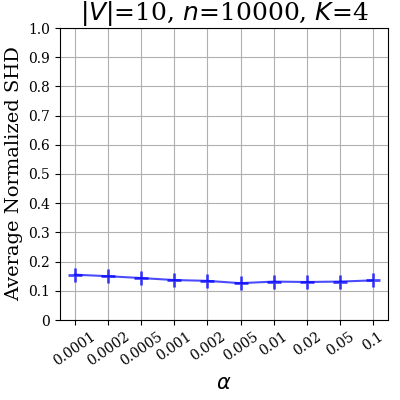}
    \caption{}
  \end{subfigure}
  \begin{subfigure}[t]{0.3\textwidth}
    \includegraphics[width=\linewidth]{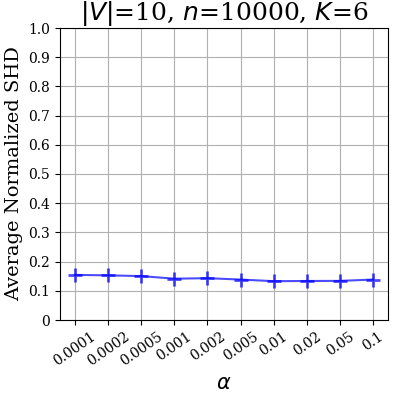}
    \caption{}
  \end{subfigure}
  \caption{Normalized SHD evaluating the estimation of the union graph from mixture data using FCI for $K\in \{2,4,6\}$ and $n\in\{1000,5000,10000\}$. We take $p(j)$ to be Dirichlet with parameter $2$.}
  \label{fig:appendix_shd_dir}
\end{figure}

\begin{figure}[H]
  \centering
  \begin{subfigure}[t]{0.3\textwidth}
    \includegraphics[width=\linewidth]{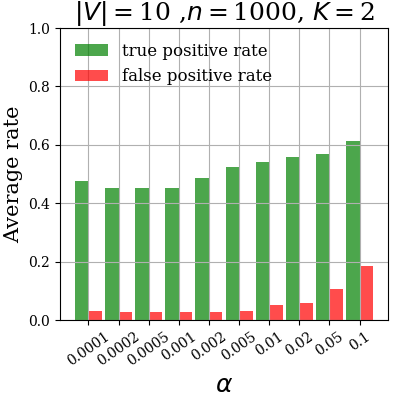}
  \caption{}
  \end{subfigure}
  \begin{subfigure}[t]{0.3\textwidth}
    \includegraphics[width=\linewidth]{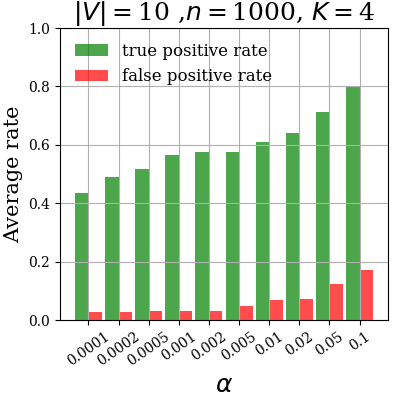}
    \caption{}
  \end{subfigure}
  \begin{subfigure}[t]{0.3\textwidth}
    \includegraphics[width=\linewidth]{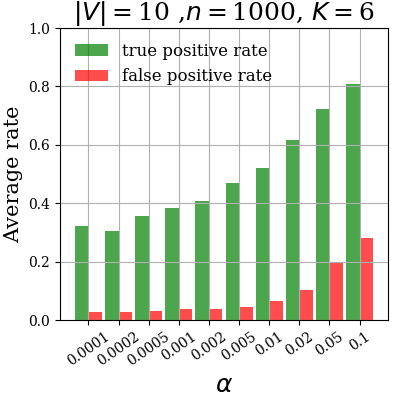}
    \caption{}
  \end{subfigure}
  \\
  \begin{subfigure}[t]{0.3\textwidth}
    \includegraphics[width=\linewidth]{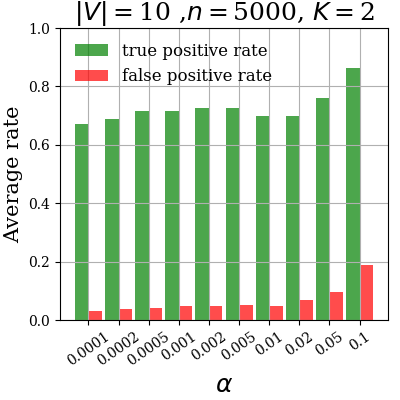}
  \caption{}
  \end{subfigure}
  \begin{subfigure}[t]{0.3\textwidth}
    \includegraphics[width=\linewidth]{figures/EFFECTS-p10-n5000-k4.png}
    \caption{}
  \end{subfigure}
  \begin{subfigure}[t]{0.3\textwidth}
    \includegraphics[width=\linewidth]{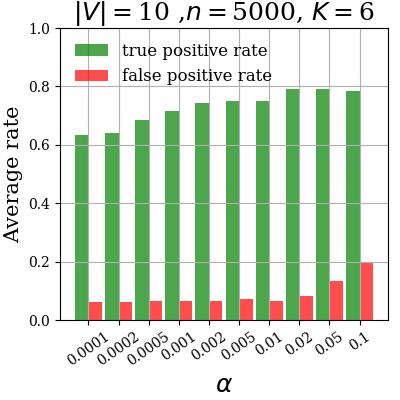}
    \caption{}
  \end{subfigure}
  \\
  \begin{subfigure}[t]{0.3\textwidth}
    \includegraphics[width=\linewidth]{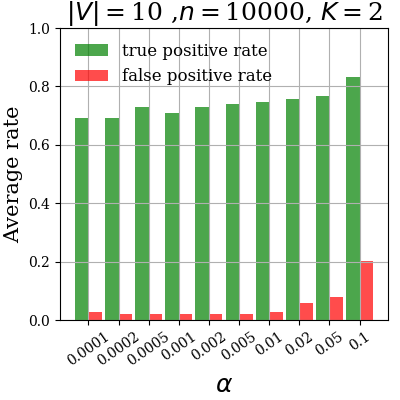}
  \caption{}
  \end{subfigure}
  \begin{subfigure}[t]{0.3\textwidth}
    \includegraphics[width=\linewidth]{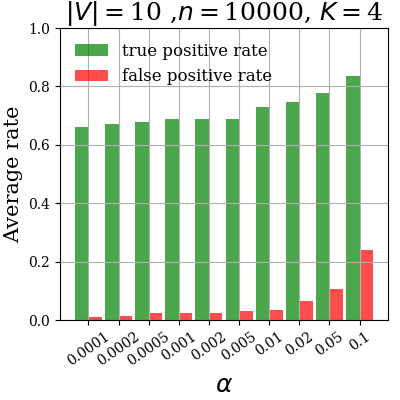}
    \caption{}
  \end{subfigure}
  \begin{subfigure}[t]{0.3\textwidth}
    \includegraphics[width=\linewidth]{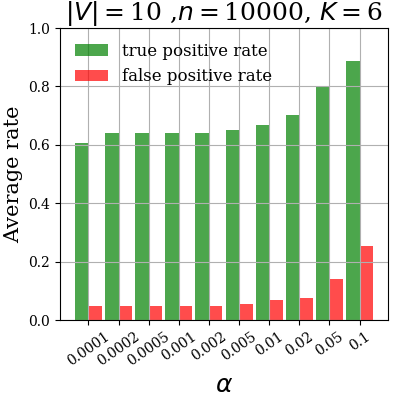}
    \caption{}
  \end{subfigure}
  \caption{True and false positive rates in estimating $V\setminus V_{\textrm{INV}}$ using Proposition~\ref{prop:varying} applied to the PAG $\widehat\cP_{\cup}$ estimated by running FCI on the mxiture data. The figures show the results for $K\in\{2,4,6\}$ and $n\in\{1000,5000,10000\}$. We take $p(j)$ to be uniform.}
  \label{appendix:fig_varying}
\end{figure}

\begin{figure}[H]
  \centering
  \begin{subfigure}[t]{0.3\textwidth}
    \includegraphics[width=\linewidth]{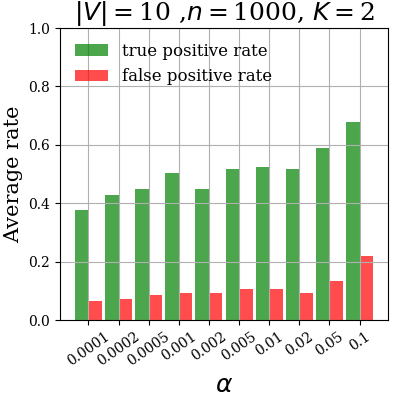}
  \caption{}
  \end{subfigure}
  \begin{subfigure}[t]{0.3\textwidth}
    \includegraphics[width=\linewidth]{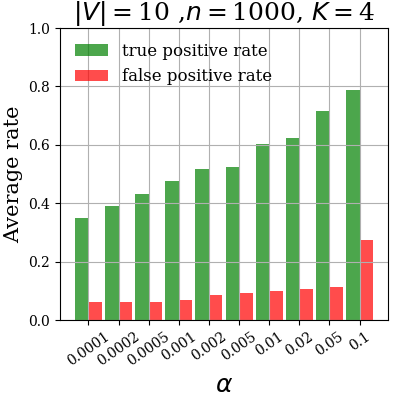}
    \caption{}
  \end{subfigure}
  \begin{subfigure}[t]{0.3\textwidth}
    \includegraphics[width=\linewidth]{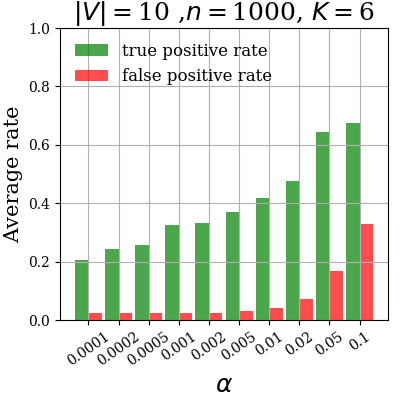}
    \caption{}
  \end{subfigure}
  \\
  \begin{subfigure}[t]{0.3\textwidth}
    \includegraphics[width=\linewidth]{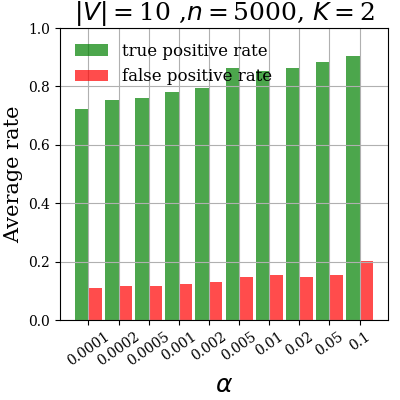}
  \caption{}
  \end{subfigure}
  \begin{subfigure}[t]{0.3\textwidth}
    \includegraphics[width=\linewidth]{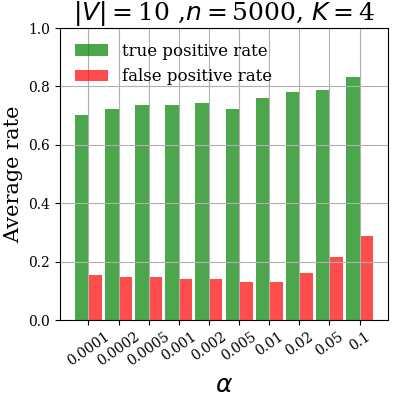}
    \caption{}
  \end{subfigure}
  \begin{subfigure}[t]{0.3\textwidth}
    \includegraphics[width=\linewidth]{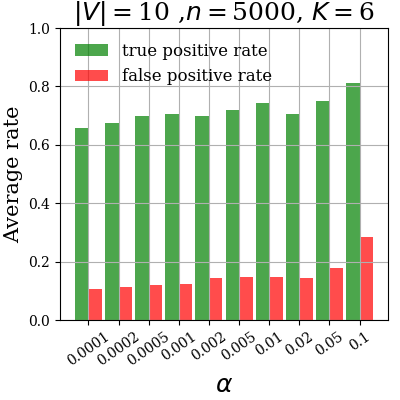}
    \caption{}
  \end{subfigure}
  \\
  \begin{subfigure}[t]{0.3\textwidth}
    \includegraphics[width=\linewidth]{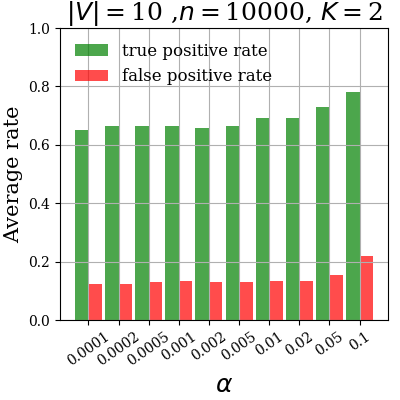}
  \caption{}
  \end{subfigure}
  \begin{subfigure}[t]{0.3\textwidth}
    \includegraphics[width=\linewidth]{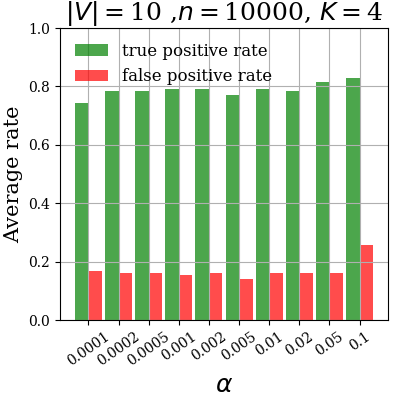}
    \caption{}
  \end{subfigure}
  \begin{subfigure}[t]{0.3\textwidth}
    \includegraphics[width=\linewidth]{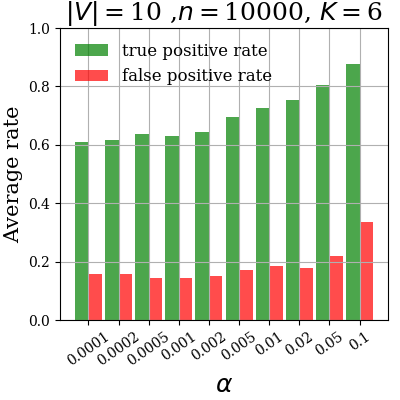}
    \caption{}
  \end{subfigure}
  \caption{True and false positive rates in estimating $V\setminus V_{\textrm{INV}}$ using Proposition~\ref{prop:varying} applied to the PAG $\widehat\cP_{\cup}$ estimated by running FCI on the mxiture data. The figures show the results for $K\in\{2,4,6\}$ and $n\in\{1000,5000,10000\}$. We take $p(j)$ to be Dirichlet with parameter $2$}
  \label{appendix:fig_varying_dir}
\end{figure}

\begin{figure}[H]
  \centering
  \begin{subfigure}[t]{0.3\textwidth}
    \includegraphics[width=\linewidth]{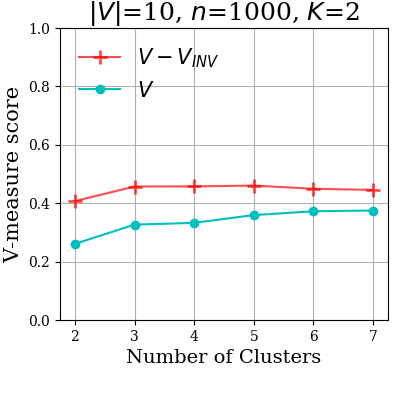}
  \caption{}
  \end{subfigure}
  \begin{subfigure}[t]{0.3\textwidth}
    \includegraphics[width=\linewidth]{figures/CLUSTER-p10-n5000-k2.png}
    \caption{}
  \end{subfigure}
  \begin{subfigure}[t]{0.3\textwidth}
    \includegraphics[width=\linewidth]{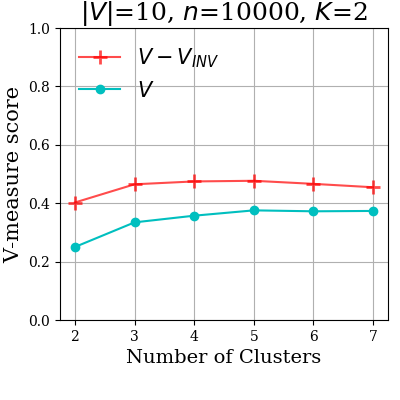}
    \caption{}
  \end{subfigure}
  \\
  \begin{subfigure}[t]{0.3\textwidth}
    \includegraphics[width=\linewidth]{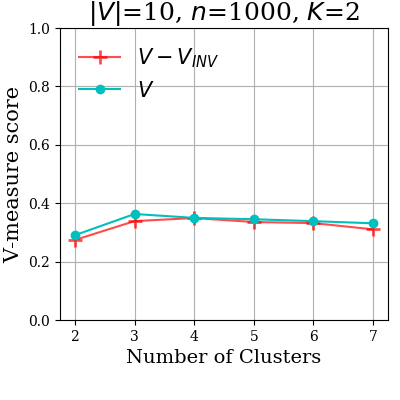}
  \caption{}
  \end{subfigure}
  \begin{subfigure}[t]{0.3\textwidth}
    \includegraphics[width=\linewidth]{figures/nonlocalCLUSTER-p10-n5000-k2.png}
    \caption{}
  \end{subfigure}
  \begin{subfigure}[t]{0.3\textwidth}
    \includegraphics[width=\linewidth]{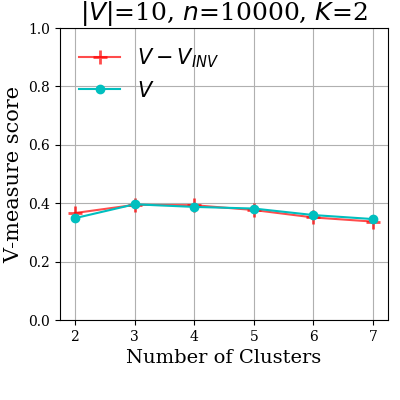}
    \caption{}
  \end{subfigure}
\caption{A comparison of clustering when all the variables are used as features vs.~when only the variables in the estimated set $V\setminus V_{\textrm{INV}}$ are used as features. 
    In generating figures (a), (b) and (c), $V\setminus V_{\textrm{INV}}$ has descendants in the generating model, while in figures (d), (e) and (f), $V\setminus V_{\textrm{INV}}$ has no descendants.}
    \label{appendix:fig_clustering}
\end{figure}

\newpage
\subsection{Real Data}
Here, we present the output of FCI on the T cell mixture data referenced in section~\ref{section:real_data}.
\label{section:tcells_fci}
\begin{figure}[H]
    \centering
    \includegraphics[width=1\linewidth]{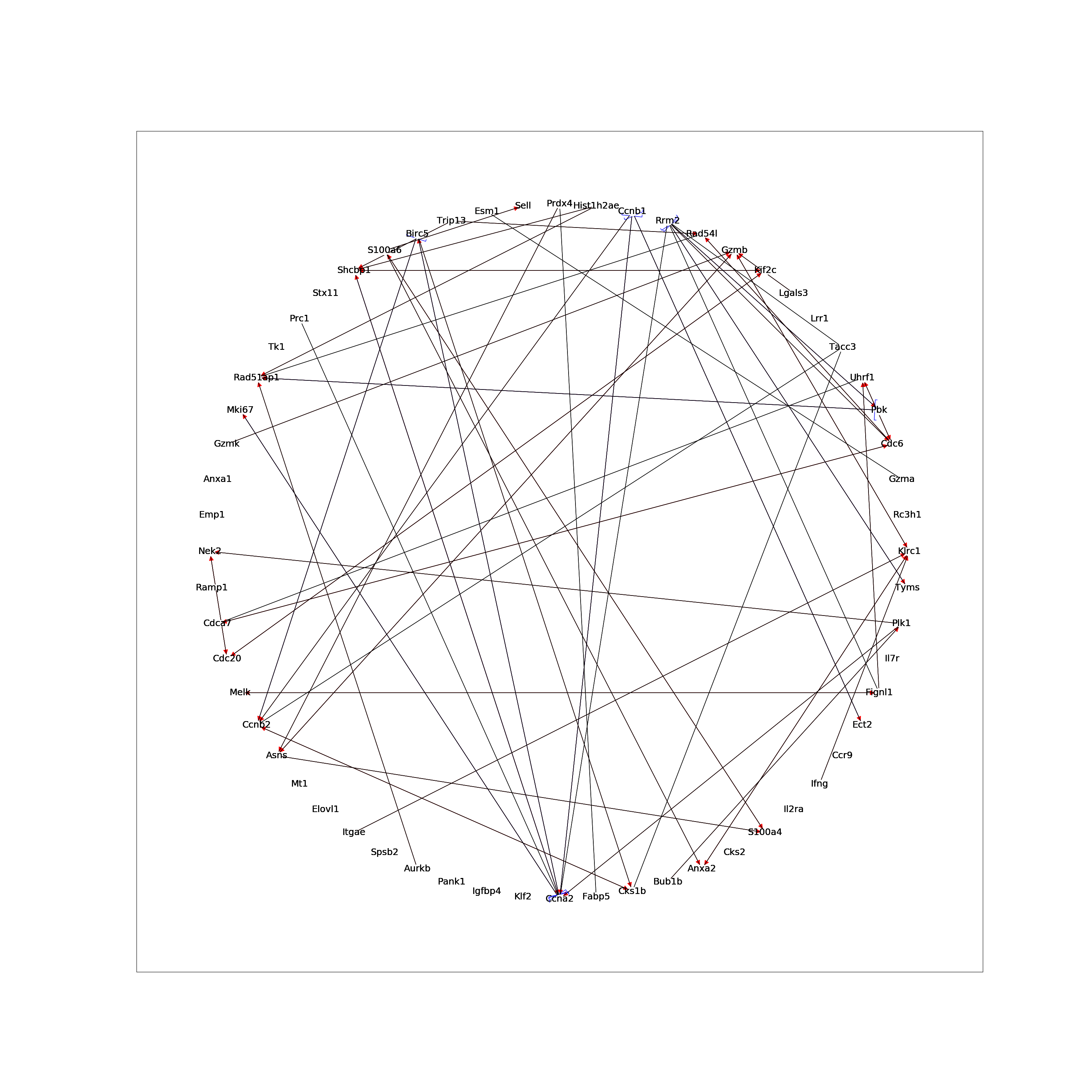}
    \caption{The PAG learned using FCI on the T cell mixture data. The inferred arrowheads are shown in red, while the inferred arrowtails are shown as blue brackets.}
    \label{fig:fci_tcells}
\end{figure}

%\section{Examples}
%Example~\ref{ex:ordering} shows a situation where the DAGs are compatible with the same permutation, but the MAGs do not have the same underlying partial ordering.
% 
%Counterexample: when the MAGs are compatible with the same poset, but the union graph is not compatible with the same poset (currently on phone).
%
%Counterexample: The Gaussian CI-test is very misleading in the case of a mixture.
%
%Example: where $\mathcal{C}^V(\M)$ is not faithful w.r.t any MAG.
%
%Example: faithful w.r.t each DAG but mixture not faithful w.r.t mixture DAG.

\end{document}